\newtheorem{theorem}{Theorem}[section]
\newtheorem{definition}[theorem]{Definition}
\newtheorem{proposition}[theorem]{Proposition}
\newtheorem{example}[theorem]{Example}
\newtheorem{remark}[theorem]{Remark}
  \pgfplotsset{width=7cm,compat=1.8}
\newcommand{\R}{\mathbb{R}}
\DeclareMathOperator*{\argmin}{arg\,min}
\DeclareMathOperator*{\sign}{sign}
\DeclareMathOperator*{\zer}{zer}
\DeclareMathOperator*{\dom}{dom}
\DeclareMathOperator{\prox}{prox}
\DeclareMathOperator{\id}{id}
\DeclareMathOperator{\Fix}{Fix}
\DeclareMathOperator{\PnP}{PnP}
\DeclareMathOperator{\range}{range}
\DeclareMathOperator{\diag}{diag}
\DeclareMathOperator{\Id}{id}
\newcommand{\RegFunc}{g}
\newcommand{\NNparam}{\theta}
\newcommand{\ForwardOp}{A}
\newcommand{\Expect}{\mathbb{E}}
\newcommand{\Real}{\mathbb{R}}
\newcommand{\Op}[1]{\operatorname{\mathcal{#1}}}
\newcommand{\stx}{\mathbf{x}}
\newcommand{\sty}{\mathbf{y}}
\newcommand{\stw}{\mathbf{w}}
\def\plist@algorithm{Alg.\space}
\title{Unsupervised approaches based on optimal transport and convex analysis for inverse problems in imaging}
\author{\name Marcello Carioni \email m.c.carioni@utwente.nl\\
      \addr Department of Applied Mathematics\\ 
      University of Twente, Enschede, Netherlands
      \AND
      \name Subhadip Mukherjee \email smukherjee@ece.iitkgp.ac.in\\
      \addr Department of Electronics and Electrical Communication Engineering\\
      Indian Institute of Technology (IIT), Kharagpur, India
      \AND
      \name Hong Ye Tan \email hyt35@cam.ac.uk \\
      \addr Department of Applied Mathematics and Theoretical Physics \\
      University of Cambridge, United Kingdom
      \AND
      \name Junqi Tang \email j.tang.2@bham.ac.uk\\
      \addr School of Mathematics \\
      University of Birmingham, United Kingdom
      }
\begin{document}

\maketitle

\begin{abstract}
Unsupervised deep learning approaches have recently become one of the crucial research areas in imaging owing to their ability to learn expressive and powerful reconstruction operators even when paired high-quality training data is scarcely available. In this chapter, we review theoretically principled unsupervised learning schemes for solving imaging inverse problems, with a particular focus on methods rooted in optimal transport and convex analysis. We begin by reviewing the optimal transport-based unsupervised approaches such as the cycle-consistency-based models and learned adversarial regularization methods, which have clear probabilistic interpretations. Subsequently, we give an overview of a recent line of works on provably convergent learned optimization algorithms applied to accelerate the solution of imaging inverse problems, alongside their dedicated unsupervised training schemes. We also survey a number of provably convergent plug-and-play algorithms (based on gradient-step deep denoisers), which are among the most important and widely applied unsupervised approaches for imaging problems. At the end of this survey, we provide an overview of a few related unsupervised learning frameworks that complement our focused schemes. Together with a detailed survey, we provide an overview of the key mathematical results that underlie the methods reviewed in the chapter to keep our discussion self-contained.
\end{abstract}

\section{Introduction} 
Inverse problems seek to estimate an unknown parameter $x^*\in \mathbb{X}$ from a degraded measurement of the form 
\begin{equation}
    y=Ax^*+\mathrm{w} \in \mathbb{Y},
    \label{eq:measurement_eq}
\end{equation}
where $\mathrm{w}$ represents measurement error (noise) and $A:\mathbb{X}\rightarrow \mathbb{Y}$ is an operator that encodes the physical phenomenon governing the data acquisition process. Throughout this chapter, we will consider linear inverse problems, where $A$ is a bounded linear operator between two normed vector spaces $\mathbb{X}$ and $\mathbb{Y}$. Inverse problems are ubiquitous in imaging applications, especially in medical image reconstruction. A classic example of an inverse problem is image recovery in X-ray computed tomography (CT), where $\mathbb{X}$ is a set of functions on $\mathbb{R}^3$ (or on a subset of $\mathbb{R}^3$). The tomographic measurement data in the absence of noise is given by line integrals of the form
\begin{equation}
y(\ell)=\int_{\ell}x^*(z)\,\mathrm{d}z,\text{\,\,where\,\,}\ell\in\mathcal{L}.
    \label{eq:measurement_eq_ct}
\end{equation}
Here, $\mathcal{L}$ represents a pre-specified set of lines in $\mathbb{R}^3$. In other words, the measurement in X-ray CT consists of projections along a set of lines determined by the acquisition geometry, and the corresponding inverse problem seeks to recover the underlying true image $x^*$. Other notable examples of imaging inverse problems include magnetic resonance imaging (MRI), super-resolution microscopy, inpainting, image deblurring, compressed sensing, etc.

Without any further information about $x^*$, inverse problems are generally ill-posed, meaning that, there could be either no solutions or several possible solutions $x$ satisfying the operator equation \eqref{eq:measurement_eq}, even without any measurement noise. In the classical function-analytic setting, the underlying image $x^*$ is assumed to be an unknown deterministic parameter, and the noise $\mathrm{w}$ is assumed to be bounded, i.e., $\|\mathrm{w}\|_{\mathbb{Y}}\leq \delta$ for some $\delta>0$. The task is to then construct a family of reconstruction operators $G^{\lambda}:\mathbb{Y}\rightarrow \mathbb{X}$, parameterized by $\lambda$, such that $G^{\lambda}(y)$ yields a reasonable approximation of $x^*$. Variational regularization has by far been the most popular approach to construct such reconstruction maps by defining them as a minimizer of a variational energy function:
\begin{equation}
    G^{\lambda}(y)\in\underset{x\in \mathbb{X}}{\argmin}\,f(Ax,y)+ R_{\lambda}(x).
    \label{eq:var_reg}
\end{equation}
Here, $f:\mathbb{Y}\times \mathbb{Y}\rightarrow \mathbb{R}^+$ measures data fidelity and $R_{\lambda}:\mathbb{X}\rightarrow \mathbb{R}$ is a regularization function (\textit{regularizer} in short), parameterized by $\lambda$, that encodes prior knowledge about the reconstructed image. A popular choice is to construct the regularizer as $R_{\lambda}(x)=\lambda\,R(x)$, where $R$ is a fixed regularizer and $\lambda\in\mathbb{R}^+$ is a penalty parameter balancing data fidelity and regularization. The classical regularization theory for inverse problems deals with the construction of regularizers $R_{\lambda}$ such that $G^{\lambda}(y)$ varies continuously in $y$ (\textit{stability}), and that there exists a parameter selection rule $\lambda:\delta\mapsto \lambda(\delta)$ such that as the noise level $\delta\rightarrow 0$, $G^{\lambda(\delta)}(y)$ converges to a generalized solution of the noiseless operator equation $y^0=Ax$, where $y^0$ denotes the noise-free measurement. Such a family of reconstruction operators $(G^{\lambda(\delta)})_{\delta> 0}$ is said to be a \textit{convergent regularization scheme} \cite{benning2018modern}.    

An alternative modeling approach for inverse problems is offered by the Bayesian framework, wherein a possible image $x$ and its measurement $y$ are treated as realizations of the $\mathbb{X}$- and $\mathbb{Y}$-valued random variables $\mathbf{x}$ and $\mathbf{y}$, respectively. The goal of Bayesian inversion is to characterize the full posterior distribution $p_{\text{post}}$ of $\mathbf{x}$ conditioned on $\mathbf{y}$ by utilizing Bayes' formula
\begin{equation*}
    p_{\text{post}}(x|y) = \frac{1}{Z(y)}\,p_{\mathrm{w}}(y-Ax)\,p_0(x),
    \label{eq:bayes}
\end{equation*}
where $p_0$ is the \textit{prior} probability density on $\mathbf{x}$ and $Z(y)$ is a normalizing constant. The data likelihood is specified through the distribution $p_{\mathrm{w}}$ of the noise and the forward operator $A$. If $p_0$ is a Gibbs prior of the form $p_0(x)\propto \exp(-R_{\lambda}(x))$, the maximum a-posteriori probability (MAP) estimate of $x$ leads to a variational optimization of the form \eqref{eq:var_reg} akin to the function-analytic setting. It is worth emphasizing that the Bayesian approach can, in principle, go beyond \textit{point estimation} and allow for \textit{uncertainty quantification} in the solution using the complete posterior distribution (albeit with higher computational complexity). In the context of Bayesian inversion, the notion of \textit{stability} refers to the continuity (with respect to $y$) of the posterior, while convergence in the Bayesian framework deals with the concentration of the posterior measured in a suitable metric \cite{dashti2017bayesian}. 

In the classical model-driven variational approach, the regularizer (or, equivalently, the prior in the Bayesian setting) is constructed analytically to promote certain smoothness properties in the underlying image. Some notable choices for the regularizer include \textit{Tikhonov} regularization ($R_{\lambda}(x)=\lambda\,\|Bx\|_2^2$, where $B$ is a bounded linear functional), the \textit{total variation} (TV) regularizer ($R_{\lambda}(x)=\lambda\,\|\nabla x\|_1$), and more recently, sparsity-promoting regularizers (seeking to encourage the image to be sparse in a fixed or learned basis) \cite{scherzer2009variational}. While model-driven approaches for inverse problems have been studied extensively over the past few decades, the success of deep learning has led to the emergence of data-driven methods for solving imaging inverse problems in recent years. These methods not only surpass the classical model-driven approaches in terms of empirical performance, but some of the data-driven methods also come with theoretical guarantees (see \cite{ieee_spm_guarantees} and references therein). The data-driven methods can broadly be classified into two categories, namely, \textit{supervised} and \textit{unsupervised}. Roughly speaking, supervised methods work in an end-to-end manner and need access to the ground-truth images to be compared against the output of a learned reconstruction operator, as opposed to unsupervised methods which do not rely on the availability of such ground-truths for a direct point-by-point comparison with the learned reconstruction. Therefore, unsupervised methods offer greater flexibility over supervised approaches in terms of the required training dataset for learning the parameters of the reconstruction operator, thereby leading to better practical usability. 

The objective of this chapter is to provide a survey of learned unsupervised methods for inverse problems, focusing particularly on approaches that leverage ideas from generative machine learning (and optimal transport, in particular) and classical (convex) optimization theory. We first provide an extensive mathematical background on optimal transport and convex analysis highlighting the important concepts that underlie such approaches, followed by a detailed review of the notable unsupervised approaches in the context of imaging inverse problems. The survey aims to highlight the key mathematical foundations behind the development of unsupervised learning approaches and underscores the potential of unsupervised methods in achieving competitive empirical performance as compared to their supervised counterparts.

\noindent \textbf{Outline of the chapter.} The chapter is organized as follows. Section \ref{sec:bg} provides the necessary mathematical background that will be used throughout the chapter, from optimal transport to convex analysis. In this section, we also describe classical methods for learning reconstruction operators for imaging inverse problems, while also highlighting the key differences between supervised and unsupervised approaches. \Cref{sec:OTapproaches} presents recent unsupervised approaches to inverse problems based on optimal transport, focusing particularly on cycle-consistency-based models and learned adversarial regularizers. \Cref{sec:unsupervised} surveys various unsupervised regularization-based approaches to inverse problems, with methods ranging from learning optimizers for model-based reconstruction to Plug-and-Play (PnP) denoising methods based on implicitly defined denoising priors. Finally, in \Cref{ssec:GTfreeIR}, we review some notable ground-truth-free approaches for image reconstruction approaches that have been shown to result in impressive empirical performance in numerous practical applications. 
\section{Background}\label{sec:bg} 
In this section, we provide the necessary mathematical background needed in the remainder of the chapter to make the exposition self-contained as much as possible. In particular, we provide a detailed overview of some of the important results in optimal transport and convex analysis, which serve as the conceptual foundation of the unsupervised techniques that we subsequently review in this chapter. Further, we also precisely characterize what we mean by \textit{supervised} and \textit{unsupervised} learning approaches, considering the vagueness around how these terms can possibly be interpreted.
\subsection{Probability measures}
Since the data-driven methods reviewed in this article depend heavily on approximating unknown probability measures, we give a formal overview of the key definitions and results in probability theory that will be useful for us. In particular, we formally define some key concepts related to probability measures and random variables, followed by a short description of different notions of distance between two probability measures.  
\subsubsection{Probability space and random variables}
A probability space consists of the triplet $\left(\Omega,\mathcal{F},\pi\right)$, where $\Omega$ is the sample space, $\mathcal{F}$ is a $\sigma$-algebra consisting of subsets of $\Omega$, and $\pi:\mathcal{F}\rightarrow [0,1]$ is a probability measure. We will assume $\Omega$ to be a \textit{Polish space} (i.e., a complete metric space with an underlying metric $d:\Omega\times \Omega\rightarrow [0,+\infty]$ and a countable dense subset). We will use the notation $\mathcal{P}(\Omega)$ to denote the set of all possible probability measures on the measurable space $\left(\Omega,\mathcal{F}\right)$. 

\noindent For any $\mathbb{R}^d$-valued random variable $\mathbf{x}$ on a probability space $\left(\Omega,\mathcal{F},\pi\right)$, the corresponding probability law is defined as the following probability measure $\pi_{\mathbf{x}}$ on $(\mathbb{R}^d,\mathcal{B}(\mathbb{R}^d))$, where $\mathcal{B}(\mathbb{R}^d))$ denotes the Borel $\sigma$-algebra of $\mathbb{R}^d$:
\begin{equation*}
    \pi_{\mathbf{x}}(\mathscr{B}) := \pi(\mathbf{x}^{-1}(\mathscr{B}))\text{\,\,for all\,\,}\mathscr{B}\in\mathcal{B}(\mathbb{R}^d).
\end{equation*}
Let $\lambda$ be the Lebesgue measure on $(\mathbb{R}^d,\mathcal{B}(\mathbb{R}^d))$. If there exists a nonnegative function $p_{\mathbf{x}}:\mathbb{R}^d \rightarrow [0,+\infty]$ such that $\pi_{\mathbf{x}}(\mathscr{B})=\int_{\mathscr{B}}p_{\mathbf{x}}\,\mathrm{d}\lambda$ for all $\mathscr{B}\in\mathcal{B}(\mathbb{R}^d)$, $p_{\mathbf{x}}$ is called the \textit{probability density function} (p.d.f.), or simply the \textit{density} of $\mathbf{x}$ (or $\pi_{\mathbf{x}}$). The existence of $p_{\mathbf{x}}$ is guaranteed by the Radon-Nikodym theorem if $\pi_{\mathbf{x}}$ is \textit{absolutely continuous} with respect to $\lambda$, i.e., if $\pi_{\mathbf{x}}(\mathscr{B})=0$ whenever $\lambda(\mathscr{B})=0$, for any $\mathscr{B}\in\mathcal{B}(\mathbb{R}^d)$. In this case, the density $p_{\mathbf{x}}$ is usually written as $p_{\mathbf{x}}=\frac{\mathrm{d}\pi_{\mathbf{x}}}{\mathrm{d}\lambda}$, the Radon-Nikodym derivative of $\pi_{\mathbf{x}}$ with respect to the Lebesgue measure $\lambda$. The density $p_{\mathbf{x}}$, if exists, is unique $\lambda$-almost everywhere (a.e.). 

\noindent The expected value of $\mathbf{x}$, denoted as $\mathbb{E}_{\mathbf{x}\sim\pi_{\mathbf{x}}}[\mathbf{x}]$ or simply $\mathbb{E}[\mathbf{x}]$, is defined as 
\begin{equation}
    \mathbb{E}_{\mathbf{x}\sim\pi_{\mathbf{x}}}[\mathbf{x}]:=\int_{\Omega}X(\omega)\,\mathrm{d}\pi(\omega)=\int_{\mathbb{R}^d}x\,\mathrm{d}\pi_{\mathbf{x}}(x).
    \label{eq:def_expectation}
\end{equation}
If $\mathbf{x}$ has a density $p_{\mathbf{x}}$, the expectation defined in \eqref{eq:def_expectation} can equivalently be written as 
\begin{equation}
    \mathbb{E}_{\mathbf{x}\sim\pi_{\mathbf{x}}}[\mathbf{x}]:=\int_{\mathbb{R}^d}x\,p_{\mathbf{x}}\,(x)\mathrm{d}\lambda(x).
    \label{eq:def_expectation_pdf}
\end{equation}
Any mapping $T:\left(\Omega_1,\mathcal{F}_1\right)\rightarrow \left(\Omega_2,\mathcal{F}_2\right)$ between two measurable spaces with the property that 
\begin{equation}
    T^{-1}(\mathscr{A}) \in \mathcal{F}_1 \text{\,\,for all\,\,}\mathscr{A}\in \mathcal{F}_2,
    \label{eq:measurability}
\end{equation}
is said to be a \textit{measurable function} from $\left(\Omega_1,\mathcal{F}_1\right)$ to $\left(\Omega_2,\mathcal{F}_2\right)$.  Let $\pi_1$ be a probability measure on $\left(\Omega_1,\mathcal{F}_1\right)$. The push-forward measure of $\pi_1$ by $T$, denoted as $T_{\#}\pi_1$, is defined as a probability measure on $\left(\Omega_2,\mathcal{F}_2\right)$ such that 
\begin{equation*}
    T_{\#}\pi_1(\mathscr{A}) = \pi_1(T^{-1}(\mathscr{A})) \text{\,\,for all\,\,}\mathscr{A}\in \mathcal{F}_2.
\end{equation*}
For a measurable function $g:(\mathbb{R}^d,\mathcal{B}(\mathbb{R}^d))\rightarrow(\mathbb{R}^m,\mathcal{B}(\mathbb{R}^m))$, the expected value of $\mathbf{y}=g(\mathbf{x})$ is defined as the following integral:
\begin{equation}
    \mathbb{E}_{\mathbf{x}\sim\pi_{\mathbf{x}}}[g(\mathbf{x})]:=\int_{\mathbb{R}^d}g(x)\,\mathrm{d}\pi_{\mathbf{x}}(x)=\int_{\mathbb{R}^m}y\,\mathrm{d}\pi_{\mathbf{y}}(y), \text{\,\,where\,\,}\pi_{\mathbf{y}}:=g_{\#}\pi_{\mathbf{x}}.
    \label{eq:def_expectation_func}
\end{equation}

\subsubsection{Distance between probability measures}\label{sec:notionsdistances}
Many data-driven approaches for inverse problems rely on methods that are able to efficiently estimate and minimize the distance between two probability distributions. A notable class of distances between probability measures is given by the class of $\phi$-divergences (sometimes referred to as $f$-divergences), containing common metrics such as the Kullback-Leibler divergence and the total variation distance.
\begin{definition}[$\phi$-divergence]\label{def:phi_div}
Let $\pi_\mathbf{x}$ and $\pi_\mathbf{y}$ be two probability measures on $(\Omega,\mathcal{F})$ with $\pi_\mathbf{x}$ being absolutely continuous with respect to $\pi_\mathbf{y}$. Let $\phi:(0,+\infty)\rightarrow (-\infty,+\infty)$ be a convex function such that $\phi(1)=0$, and $\phi(0)\coloneqq\underset{t\rightarrow 0^{+}}{\lim}\,\phi(t)$ (which could be infinite). Then, the $\phi$-divergence between $\pi_\mathbf{x}$ and $\pi_\mathbf{y}$ is defined as
\begin{equation}
        D_{\phi}(\pi_\mathbf{x},\pi_\mathbf{y}):=\int_{\Omega}\phi\left(\frac{\mathrm{d}\pi_\mathbf{x}}{\mathrm{d}\pi_\mathbf{y}}(\omega)\right)\,\mathrm{d}\pi_{\mathbf{y}}(\omega),
        \label{eq:phi_div_def}
\end{equation}
where $\frac{\mathrm{d}\pi_\mathbf{x}}{\mathrm{d}\pi_\mathbf{y}}$ is the Radon-Nikodym derivative of $\pi_\mathbf{x}$ with respect to $\pi_{\mathbf{y}}$.
\end{definition} 
Consider now the special case (albeit an important one) where $(\Omega,\mathcal{F})=(\mathbb{R}^d,\mathcal{B}(\mathbb{R}^d))$ and let $p_{\mathbf{x}}$ and $p_{\mathbf{y}}$ be the densities of $\pi_\mathbf{x}$ and $\pi_\mathbf{y}$, respectively. Then, \eqref{eq:phi_div_def} can be rewritten as
\begin{equation}
        D_{\phi}(p_\mathbf{x},p_\mathbf{y}):=\int_{\Omega}\phi\left(\frac{p_{\mathbf{x}}(x)}{p_{\mathbf{y}}(x)}\right)\,p_{\mathbf{y}}(x)\,\mathrm{d}\lambda(x).
        \label{eq:phi_div_def_density}
\end{equation}
The following are some important special instances of $\phi$-divergence that are useful in the context of machine learning and inverse problems:
\begin{enumerate}
    \item Kullback–Leibler (KL): $\phi(t)=t\log t$.
    \item Jensen-Shannon (JS): $\phi(t)=-(t+1)\log\left(\frac{t+1}{2}\right)+t\log t$. 
    \item Total Variation (TV): $\phi(t)=\frac{1}{2}|t-1|$.
    \item Squared Hellinger: $\phi(t)=(\sqrt{t}-1)^2$.
\end{enumerate}
Despite their popularity, $\phi$-divergences have been shown to have serious practical limitations, especially in the context of machine learning, imaging, and inverse problems. From a theoretical point of view, this is due to the fact that they are well-defined only if $\pi_{\mathbf{x}}$ is absolutely continuous with respect to $\pi_{\mathbf{y}}$. Consequently, $\phi$-divergences are not well-suited to compare probability distributions concentrated on low dimensional manifolds. This problem often arises in the context of distribution learning for imaging problems, where data-sets can reasonably be approximated as low dimensional manifolds embedded in a high-dimensional ambient space.  

Another popular family of distance measures that overcome some of the shortcomings of $\phi$-divergences are the so-called integral probability metrics. They have the advantage that they are also well-defined for singular measures and are easier to estimate from finitely many samples as compared with $\phi$-divergences, especially in high-dimensional settings \cite{sriperumbudur2009integral}.
\begin{definition}[Integral probability metrics (IPMs)]\label{def:IPM}
Let $\pi_\mathbf{x}$ and $\pi_\mathbf{y}$ be two probability measures on $(\Omega,\mathcal{F})$, and let $\mathcal{G}$ be some class of bounded and measurable functions $g:\Omega\rightarrow\mathbb{R}$. Integral probability metrics are defined as
\begin{equation}
    \Delta_{\mathcal{G}}(\pi_\mathbf{x},\pi_\mathbf{y}):=\underset{g\in\mathcal{G}}{\sup}\left|\int_{\Omega}g(\omega)\,\mathrm{d}\pi_\mathbf{x}(\omega)-\int_{\Omega}g(\omega)\,\mathrm{d}\pi_\mathbf{y}(\omega)\right|.
    \label{eq:def_IPM}
\end{equation}
\end{definition}

Relevant examples of integral probability metrics are the following:

\medskip

\begin{enumerate}
\item Total-variation distance (TVD): $\mathcal{G} = \{g \in C(\Omega) : \sup_{x\in \Omega} g(x) \leq 1\}$. Note that in the case where $\pi_{\mathbf{x}}$ and $\pi_{\mathbf{y}}$ have densities with respect to the Lebesgue measure, this definition of the TVD is equivalent to TV defined as a $\phi$-divergence corresponding to $\phi(t)=\frac{1}{2}|t-1|$.
\item Maximum-mean-discrepancy (MMD): $\mathcal{G} = \{g \in \mathcal{H}: \|g\|_{\mathcal{H}} \leq 1\}$ where $\mathcal{H}$ is a Reproducing Kernel Hilbert Space (RKHS).
\item  Kolmogorov distance (KD):  $\mathcal{G} = \{1_{(-\infty, t)} : t \in \mathbb{R}\}$.
\item $1$-Wasserstein distance: $\mathcal{G}$ is the class of 1-Lipschitz functions.
\end{enumerate}
We refer the interested reader to \cite{sriperumbudur2009integral} for more details.
It is worth mentioning that not all IPMs are suitable for comparing distributions, and the choice of $\mathcal{G}$ should be made depending on the problem under consideration. For instance, given $s \in \R$, suppose that $\pi_0 \in \mathcal{P}(\R^2)$ is concentrated and uniformly distributed on the segment $[0,1] \times \{0\} \subset \R^2$ and $\pi_s \in \mathcal{P}(\R^2)$ is concentrated and uniformly distributed on the segment $[0,1] \times \{s\} \subset \R^2$. Then, it is easy to verify that
\begin{align*}
{\rm TVD}(\pi_0,\pi_s) = \begin{cases}
    2, & \text{for $s\neq 0$},\\
    0, & \text{for $s=0$}.
\end{cases}
\end{align*}
This shows that the total variation is agnostic to the relative positions of the segments in the plane and it is thus not a suitable metric to compare the two distributions. This observation easily translates to any two distributions concentrated on disjoint lower dimensional manifolds. Moreover, it prevents the use of gradient descent strategies due to the severity of the vanishing gradient phenomenon.
We will see in Section \ref{sec:wasserstein} that such shortcomings are alleviated by the $1$-Wasserstein distance, making this a popular choice for machine learning applications pertaining to image processing.

\subsection{Optimal transport}
In this section, we recall some fundamental definitions and results in optimal transport that are relevant to the development of the unsupervised learning approaches discussed in \Cref{sec:OTapproaches}.
\subsubsection{Monge and Kantorovich formulations of optimal transport}
\noindent Let $\Omega_1$ and $\Omega_2$ be two Polish spaces. Correspondingly, consider two Borel probability spaces $\left(\Omega_1,\mathcal{B}(\Omega_1),\pi_1\right)$ and $\left(\Omega_2,\mathcal{B}(\Omega_2),\pi_2\right)$, where $\mathcal{B}(\Omega_1)$ and $\mathcal{B}(\Omega_2)$ are the Borel $\sigma$-algebras of $\Omega_1$ and $\Omega_2$, respectively. Let $c:\Omega_1\times\Omega_2\rightarrow[0,+\infty]$ be the cost of transporting one unit of mass from $x\in \Omega_1$ to $y\in\Omega_2$. We will assume that the cost $c$ is continuous. Monge's optimal transport problem \cite{monge1781memoire} is formulated as 
\begin{equation}
   M_c(\pi_1, \pi_2) =  \underset{T:T_{\#}\pi_1=\pi_2}{\inf}\int_{\Omega_1}c(x,T(x))\,\mathrm{d}\pi_1(x),
    \label{eq:monge_ot}
\end{equation}
where the minimization is carried out over measurable maps $T$. In other words, \eqref{eq:monge_ot} seeks to find a transport map $T$ (i.e., a mapping $T$ satisfying $T_{\#}\pi_1=\pi_2$) for which the overall transportation cost is minimized. Monge's problem may not have a solution, and even worse, a transport map may not always exist \cite{santambrogio2015optimal} (consider, for instance, the problem of transporting discrete masses from one set of locations to another). Due to this shortcoming, Kantorovich proposed a relaxation of \eqref{eq:monge_ot} in order to restore the well-posedness of the variational problem \cite{kantorovich2006translocation}.
The Kantorovich relaxation of \eqref{eq:monge_ot} reformulates the optimal transport problem by instead considering the transportation of mass from any $x\in \Omega_1$ to any $y\in \Omega_2$. Let $\pi$ be a probability measure on the product space $(\Omega_1\times \Omega_2,\mathcal{B}(\Omega_1) \otimes \mathcal{B}(\Omega_2))$, where $\mathcal{B}(\Omega_1) \otimes \mathcal{B}(\Omega_2)$ is the smallest $\sigma$-algebra generated by $\mathcal{B}(\Omega_1) \times \mathcal{B}(\Omega_2)$. We call $\pi$ a \textit{transport plan} if   
\begin{equation*}
    \pi(\mathscr{A}\times \Omega_2) = \pi_1(\mathscr{A}) \ \text{\,\,and\,\,}\ \pi(\Omega_1 \times \mathscr{B}) = \pi_2(\mathscr{B})\ \text{\,\,for all\,\,}\ \mathscr{A}\in\mathcal{B}(\Omega_1), \mathscr{B}\in\mathcal{B}(\Omega_2). 
\end{equation*}
Equivalently, $\pi$ is a transport plan if its marginals are $\pi_1$ and $\pi_2$. Let $\Pi(\pi_1,\pi_2)$ be the collection of all transport plans from $\left(\Omega_1,\mathcal{B}(\Omega_1),\pi_1\right)$ to $\left(\Omega_2,\mathcal{B}(\Omega_2),\pi_2\right)$. The Kantorovich relaxation of Monge's problem \eqref{eq:monge_ot} is then given by
\begin{equation}\label{eq:kantorovich_ot}
K_c(\pi_1, \pi_2) =    \underset{\pi:\pi\in \Pi(\pi_1,\pi_2)}{\min}\int_{\Omega_1\times\Omega_2}c(x,y)\,\mathrm{d}\pi(x,y).
\end{equation}
Note that the set $\Pi(\pi_1,\pi_2)$ is non-empty, since the product measure $\pi_1 \otimes \pi_2 \in \Pi(\pi_1,\pi_2)$. Moreover, the existence of a transport plan minimizing \eqref{eq:kantorovich_ot} is a consequence of Prokhorov's theorem and the narrow continuity of the map $\pi \mapsto \int_{\Omega_1\times\Omega_2}c(x,y)\,\mathrm{d}\pi(x,y)$ in $\mathcal{P}(\Omega_1 \times \Omega_2)$ \cite{santambrogio2015optimal}. Notably, \eqref{eq:kantorovich_ot} is a relaxation of Monge's problem since given a transport map $T$, one can construct the associated transport plan as $\pi = (\Id \times T)_\# \pi_1$.
Under certain conditions, the Kantorovich formulation \eqref{eq:kantorovich_ot} can be shown to be equivalent to the Monge formulation \eqref{eq:monge_ot} such as in the following theorem \cite[Theorem B]{pratelli2007equality}.

\begin{theorem}
If $\pi_1$ is non-atomic, namely $\pi_1(\{x\}) = 0$ for all $x \in \Omega_1$, then 
\begin{align*}
    M_c(\pi_1,\pi_2) = K_c(\pi_1, \pi_2).
\end{align*}
\end{theorem}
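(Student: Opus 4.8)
The plan is to establish the two inequalities $K_c(\pi_1,\pi_2)\le M_c(\pi_1,\pi_2)$ and $M_c(\pi_1,\pi_2)\le K_c(\pi_1,\pi_2)$ separately. The first is immediate and was already noted above: any admissible Monge map $T$ induces the plan $\pi_T=(\Id\times T)_\#\pi_1\in\Pi(\pi_1,\pi_2)$ with $\int c\,\mathrm{d}\pi_T=\int_{\Omega_1}c(x,T(x))\,\mathrm{d}\pi_1(x)$, and since $K_c(\pi_1,\pi_2)$ is a minimum over all of $\Pi(\pi_1,\pi_2)$, taking the infimum over $T$ gives $K_c(\pi_1,\pi_2)\le M_c(\pi_1,\pi_2)$. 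In particular, if $K_c(\pi_1,\pi_2)=+\infty$ the equality is trivial, so from now on I assume $K_c(\pi_1,\pi_2)<+\infty$. The non-atomicity of $\pi_1$ will be used only for the reverse inequality: it is what guarantees the infimum defining $M_c$ is over a nonempty set and, more importantly, what allows an optimal plan to be approximated by plans induced by maps.

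For $M_c(\pi_1,\pi_2)\le K_c(\pi_1,\pi_2)$, fix $\varepsilon>0$ and let $\gamma\in\Pi(\pi_1,\pi_2)$ be an optimal plan, which exists by the Kantorovich existence result recalled above; note $c\in L^1(\gamma)$. Since $\gamma$ is a finite Borel measure on a Polish space it is tight, and by absolute continuity of the integral together with the uniform continuity of $c$ on compact sets, I will build countable Borel partitions $\{C_k\}_{k}$ of $\Omega_1$ and $\{B_j\}_{j}$ of $\Omega_2$ into small blocks such that the oscillations $\omega_{kj}:=\sup_{C_k\times B_j}c-\inf_{C_k\times B_j}c$ satisfy $\sum_{k,j}\omega_{kj}\,\gamma(C_k\times B_j)<\varepsilon$. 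Writing $\gamma_{kj}$ for the restriction of $\gamma$ to $C_k\times B_j$ and $\beta_{kj}$ for the second marginal of $\gamma_{kj}$, one has that $\beta_{kj}$ is concentrated on $B_j$ with total mass $\gamma(C_k\times B_j)$, that $\sum_{k,j}\beta_{kj}=\pi_2$, and that $\sum_j\gamma(C_k\times B_j)=\pi_1(C_k)$ for each fixed $k$.

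Now the non-atomicity enters decisively. Since $\pi_1|_{C_k}$ is non-atomic, one can partition $C_k$ into disjoint Borel pieces $\{A_{kj}\}_{j}$ with prescribed masses $\pi_1(A_{kj})=\gamma(C_k\times B_j)$; then $\{A_{kj}\}_{k,j}$ is a Borel partition of $\Omega_1$ (up to a $\pi_1$-null set). On each $A_{kj}$ the measure $\pi_1|_{A_{kj}}$ is non-atomic with the same total mass as $\beta_{kj}$, so by the standard Borel isomorphism theorem there is a measurable $T_{kj}\colon A_{kj}\to B_j$ with $(T_{kj})_\#(\pi_1|_{A_{kj}})=\beta_{kj}$. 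Gluing the $T_{kj}$ yields a measurable $T\colon\Omega_1\to\Omega_2$ with $T_\#\pi_1=\sum_{k,j}\beta_{kj}=\pi_2$, hence admissible for Monge's problem. For its cost, each $x\in A_{kj}$ satisfies $x\in C_k$ and $T(x)\in B_j$, so $c(x,T(x))\le\sup_{C_k\times B_j}c\le\omega_{kj}+\inf_{C_k\times B_j}c\le\omega_{kj}+\gamma(C_k\times B_j)^{-1}\int_{C_k\times B_j}c\,\mathrm{d}\gamma$ whenever $\gamma(C_k\times B_j)>0$ (the remaining blocks being empty); integrating over $A_{kj}$ against $\pi_1$ and summing over $k,j$ gives
\begin{equation*}
\int_{\Omega_1}c(x,T(x))\,\mathrm{d}\pi_1(x)\;\le\;\int c\,\mathrm{d}\gamma+\sum_{k,j}\omega_{kj}\,\gamma(C_k\times B_j)\;<\;K_c(\pi_1,\pi_2)+\varepsilon.
\end{equation*}
Letting $\varepsilon\to0$ gives $M_c(\pi_1,\pi_2)\le K_c(\pi_1,\pi_2)$, and combining the two inequalities proves the theorem.

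The main obstacle is the partitioning step in the second paragraph. Because $\Omega_1$ and $\Omega_2$ need not be $\sigma$-compact and $c$ need not be bounded, one cannot simply cut into small-diameter pieces; a diagonal exhaustion argument (using tightness of $\pi_1,\pi_2$ and integrability of $c$ against $\gamma$) is required to make the oscillation of $c$ small in a way that remains summable against $\gamma$, and, if $c$ is allowed to attain $+\infty$, to keep all blocks of positive $\gamma$-mass away from the $\gamma$-null set $\{c=+\infty\}$. The remaining points are routine measure theory: checking that $\pi_1|_{C_k}$ admits Borel decompositions of prescribed masses (a consequence of non-atomicity) and that the maps $T_{kj}$ supplied by the Borel isomorphism theorem can be chosen measurably and glued into a single Borel map respecting the block structure. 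Once this scaffolding is fixed, the analytic estimate above is elementary.
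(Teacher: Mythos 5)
The paper does not prove this statement at all: it is quoted verbatim from \cite{pratelli2007equality} (Theorem B there), so there is no internal proof to compare against. Judged on its own terms, your skeleton is the standard one and most of it is sound: the inequality $K_c\le M_c$ via $(\Id\times T)_\#\pi_1$, the use of Sierpi\'nski's theorem to split the non-atomic measure $\pi_1|_{C_k}$ into Borel pieces of prescribed masses $\gamma(C_k\times B_j)$, the Borel-isomorphism construction of $T_{kj}$ pushing $\pi_1|_{A_{kj}}$ onto $\beta_{kj}$ (note only the \emph{source} needs to be non-atomic, which you implicitly and correctly use), and the oscillation estimate that converts $\sum_{k,j}\omega_{kj}\,\gamma(C_k\times B_j)<\varepsilon$ into $M_c\le K_c+\varepsilon$ are all correct.

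The genuine gap is the step you yourself flag and defer: producing countable Borel \emph{product} partitions with $\sum_{k,j}\omega_{kj}\,\gamma(C_k\times B_j)<\varepsilon$. The tightness-plus-uniform-continuity argument works only when $c$ is bounded: one takes a compact $K\subset\Omega_1\times\Omega_2$ with $\gamma(K^c)$ small, cuts its projections into small-diameter pieces, and absorbs the remaining blocks using $\omega_{kj}\le 2\|c\|_\infty$. But the theorem as stated in the paper allows $c\colon\Omega_1\times\Omega_2\to[0,+\infty]$ continuous and merely integrable against $\gamma$, and then the blocks meeting $K^c$ (or the closed $\gamma$-null set $\{c=+\infty\}$) can carry positive $\gamma$-mass while having infinite oscillation, so $\omega_{kj}\,\gamma(C_k\times B_j)=+\infty$ and the estimate collapses; the bound $c(x,T(x))\le\sup_{C_k\times B_j}c$ gives no control on $\int c(x,T(x))\,\mathrm{d}\pi_1$ there, since $T$ may route mass to points where $c$ is enormous even though $c$ is small $\gamma$-a.e. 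This unbounded/infinite-valued case is precisely the nontrivial content of Pratelli's Theorem B (the bounded case was classical, and Pratelli also shows by counterexample that lower semicontinuity of $c$ is not enough, so continuity must enter more delicately than via oscillations on compacta). Your proof is therefore complete for bounded continuous costs but, as written, does not establish the statement in the generality the paper asserts; the ``diagonal exhaustion'' you invoke is not a routine patch but the heart of the cited theorem.
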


Standard arguments of convex analysis also ensure the equivalence of \eqref{eq:monge_ot} and \eqref{eq:kantorovich_ot} in the case where $\pi_1$ and $\pi_2$ are empirical measures with uniform weights, i.e.,
\begin{align}
    \pi_1 = \frac{1}{N}\sum_{i=1}^N \delta_{x_i}, \quad \pi_2 = \frac{1}{N}\sum_{i=1}^N \delta_{y_i},
\end{align}
where $\delta_{u}$ denotes the Dirac measure at $u$.
\subsubsection{The Wasserstein distance}\label{sec:wasserstein}
The Kantorovich formulation of optimal transport allows us to define a distance between two probability measures in a way that overcomes the shortcomings of the distances defined in Section \ref{sec:notionsdistances}. Given a distance $d : \Omega \times \Omega \rightarrow [0,\infty)$, for $1 \leq p <\infty$ the \emph{$p$-Wasserstein distance} between two Borel probability measures $\pi_1, \pi_2 \in \mathcal{P}(\Omega)$ is defined in terms of the Kantorovich formulation \eqref{eq:kantorovich_ot} as
\begin{align}
    W_p(\pi_1, \pi_2) \coloneqq (K_{d^p}(\pi_1, \pi_2))^{1/p}, \quad \pi_1, \pi_2 \in \mathcal{P}(\Omega).
\end{align}
It can be shown that $W_p$ defines a distance metric on the space of probability measures $\mathcal{P}(\Omega)$ \cite{santambrogio2015optimal}.
Moreover, it addresses some of the issues of $\phi$-divergences and IPMs presented in Section \ref{sec:notionsdistances}. First, it is well-defined for any pair of probability measures $\pi_1,\pi_2$, even if they are mutually singular. Moreover, following the example given in Section \ref{sec:notionsdistances}, let $\pi_0 \in \mathcal{P}(\R^2)$ be concentrated and uniformly distributed on the segment $[0,1] \times \{0\} \subset \R^2$, and let $\pi_s \in \mathcal{P}(\R^2)$ be concentrated and uniformly distributed on the segment $[0,1] \times \{s\} \subset \R^2$. Then, it holds that 
\begin{align*}
W_1(\pi_0,\pi_s) = |s|,
\end{align*}
for all $s \in \R$. In particular, the Wasserstein distance is sensitive to the relative position of the supports of singular distributions being compared. This allows one to better handle the vanishing gradient phenomena and ensure more stable learning using gradient-based algorithms \cite{wgan_main,wgan_gp}.
\subsubsection{The dual of the Kantorovich formulation of optimal transport}\label{sec:dualw}
Since the Kantorovich formulation of optimal transport is essentially an infinite-dimensional linear programming problem, it is plausible that it admits a strong dual formulation (see \cite{ekeland1999convex} for a general duality theory of convex variational problems). It can be shown that this is indeed the case.
The dual reformulation is known as the \emph{Kantorovich-Rubinstein (KR) duality}. If $c(x,y) \leq a(y) + b(x)$ for some suitable $a \in L^1(\Omega_1; \pi_1)$ and $b \in L^1(\Omega_2; \pi_2)$, the KR duality allows us to rewrite the Kantorovich formulation of optimal transport as 
\begin{align}\label{eq:dual}
    \sup_{(f,g) \in \Lambda(c)} \int_{\Omega_1} r(x)\, \mathrm{d}\pi_1(x) +  \int_{\Omega_2} s(y)\, \mathrm{d}\pi_2(y), 
\end{align}
where $\Lambda(c) = \{(r,s) : r\in L^1(\Omega_1; \pi_1), s \in L^1(
\Omega_2; \pi_2),\, r(x) + s(y) \leq c(x,y) \ \forall x,y\}$. Moreover, the supremum in \eqref{eq:dual} is attained and the optimal $r$ and $s$ are called Kantorovich potentials \cite[Theorem 1.17]{ambrosio2013user}. 
In the case where $\Omega_1 = \Omega_2 = \Omega$ and the cost is a metric (in which case we rename $c(x,y)$ by $d(x,y)$), then \eqref{eq:dual} can be rewritten as
\begin{align}\label{eq:dualwass}
     \sup_{g \in {\mathrm {Lip}}_1(\Omega)} \int_{\Omega} g(x)\, \mathrm{d}\pi_1(x) -  \int_{\Omega}g(y)\, \mathrm{d}\pi_2(y),
\end{align}
where ${\mathrm {Lip}}_1(\Omega)$ is the set of $1$-Lipschitz functions defined as 
\begin{align}\label{eq:dual2}
    {\mathrm {Lip}}_1(\Omega) = \left\{g : \Omega \rightarrow \mathbb{R} \ \ \text{s.t.}\ \ \sup_{x\neq y} \frac{|g(x) - g(y)|}{d(x,y)} \leq 1\right\}.
\end{align}

\subsubsection{Reconstructing the optimal transport map from the Kantorovich potentials}
Before concluding this section, we state an important result in optimal transport that, in certain cases, allows one to reconstruct the optimal transport map from the Kantorovich potential. Let $\pi_1$ and $\pi_2$ two probability measures on $(\Omega, \mathcal{B}(\Omega))$ where $\Omega \subset \R^d$ is compact with a boundary negligible with respect to the Lebesgue measure. Let $c: \Omega \times \Omega \rightarrow [0,\infty)$ defined as $c(x,y) = \eta(x-y)$ for $x, y\in \Omega$, where $\eta$ is a strictly convex function. Then, the following theorem relating the optimal transport map and the Kantorovich potential holds \cite[Theorem 1.17]{santambrogio2015optimal}.
\begin{theorem}\label{thm:brenier}
Suppose that $\pi_1$ is absolutely continuous with respect to the Lebesgue measure. Then there exists a Kantorovich potential $r$, and the optimal transport map $T$ can be reconstructed as 
\begin{align}\label{eq:formbrenier}
T(x) = x - (\nabla \eta)^{-1}(\nabla r(x)) \quad x \in \Omega.
\end{align}

\end{theorem}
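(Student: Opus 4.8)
The plan is to run the classical Brenier-type argument adapted to a strictly convex cost $c(x,y)=\eta(x-y)$. First I would invoke the Kantorovich--Rubinstein duality recalled above: since $\Omega$ is compact and $c$ is continuous, $c$ is bounded, so the hypothesis $c(x,y)\le a(y)+b(x)$ holds trivially and the dual problem \eqref{eq:dual} admits a maximizing pair of Kantorovich potentials $r,s$ with $r(x)+s(y)\le \eta(x-y)$ for all $x,y$; this already yields the existence claim of the theorem. Replacing $s$ by the $c$-transform $r^c(y):=\inf_{x\in\Omega}\big(\eta(x-y)-r(x)\big)$ and then $r$ by $(r^c)^c$ does not decrease the dual objective, so without loss of generality $r$ is $c$-concave. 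Letting $\gamma\in\Pi(\pi_1,\pi_2)$ be an optimal plan (its existence was noted after \eqref{eq:kantorovich_ot}), complementary slackness between the optimal primal and dual values forces $r(x)+s(y)=\eta(x-y)$ for $\gamma$-a.e.\ $(x,y)$.

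Next I would extract a pointwise optimality condition. If $(x_0,y_0)$ lies in this equality set, then, because $r(x)+s(y_0)\le\eta(x-y_0)$ for every $x$ with equality at $x=x_0$, the function $x\mapsto \eta(x-y_0)-r(x)$ attains its minimum over $\Omega$ at $x=x_0$. The key regularity input is that $r$, being an infimum of the family $\{\eta(x-y)-s(y)\}_{y\in\Omega}$ of functions that are uniformly Lipschitz on $\Omega$ — a finite convex function is locally Lipschitz, hence Lipschitz on the compact set $\Omega-\Omega$ — is itself Lipschitz on $\Omega$. By Rademacher's theorem $r$ is differentiable Lebesgue-a.e., and since $\pi_1\ll\mathrm{Leb}$, it is differentiable $\pi_1$-a.e.; moreover, as the boundary of $\Omega$ is Lebesgue-negligible and hence $\pi_1$-negligible, $\pi_1$-a.e.\ $x_0$ is an interior point of $\Omega$.

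At such an interior point of differentiability, the first-order condition for the interior minimum of $x\mapsto \eta(x-y_0)-r(x)$ gives $\nabla r(x_0)\in\partial\eta(x_0-y_0)$ (equivalently $\nabla r(x_0)=\nabla\eta(x_0-y_0)$ at points where $\eta$ is differentiable). Strict convexity of $\eta$ makes the subdifferential map injective, so $(\nabla\eta)^{-1}$ is well defined on the relevant set and $x_0-y_0=(\nabla\eta)^{-1}(\nabla r(x_0))$, i.e.\ $y_0=x_0-(\nabla\eta)^{-1}(\nabla r(x_0))=:T(x_0)$. Since this pins down $y_0$ uniquely in terms of $x_0$ for $\gamma$-a.e.\ $(x_0,y_0)$, the plan $\gamma$ is concentrated on the graph of $T$; hence $\gamma=(\Id\times T)_\#\pi_1$, so $T_\#\pi_1=\pi_2$, and by the Monge--Kantorovich equivalence recalled above (applicable since $\pi_1\ll\mathrm{Leb}$ is in particular non-atomic) $T$ is the optimal transport map, giving \eqref{eq:formbrenier}.

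The main obstacle I anticipate is the regularity step: proving that $r$ is differentiable $\pi_1$-a.e.\ and that the optimality condition can be read off cleanly there. This relies on the local Lipschitz continuity of $r$ (which in turn uses that $\eta$ restricted to the compact set $\Omega-\Omega$ is Lipschitz) combined with absolute continuity of $\pi_1$, and on a careful treatment of points where $\eta$ itself may fail to be differentiable — there one argues with subdifferentials and uses strict convexity to recover single-valuedness of $(\nabla\eta)^{-1}$, also ensuring $\nabla r(x_0)$ lies in its domain. The remaining ingredients (duality, $c$-concavity, the Monge--Kantorovich equivalence) are either stated in the excerpt or entirely standard.
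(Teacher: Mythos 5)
Your proof is correct and follows essentially the same route as the paper's source: the paper states this theorem without proof, citing \cite[Theorem 1.17]{santambrogio2015optimal}, and your chain of duality, $c$-concavity, Lipschitz regularity plus Rademacher, the interior first-order condition, and strict monotonicity of $\partial\eta$ to invert the gradient map is exactly the canonical argument given there. The one point worth being explicit about is that strict convexity gives injectivity of the subdifferential as a multimap (disjoint subdifferentials at distinct points), which is what makes $(\nabla\eta)^{-1}$ well defined on the relevant set even where $\eta$ is not differentiable — you note this, so the argument is complete.
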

This theorem is a consequence of more general results due to Brenier \cite{brenier1987decomposition} and
can be applied, for instance, for transport costs such as $c(x,y) = |x-y|^p$, where $1< p< \infty$. That is, this result applies to all $p$-Wasserstein distances with $1< p< \infty$. Specializing Theorem \ref{thm:brenier} to the cost $c(x,y) = \frac{1}{2}|x-y|^2$ corresponding to the $2$-Wasserstein distance, $T$ can be reconstructed as
\begin{align*}
T(x) = x - \nabla r(x) = \nabla \left(\frac{x^2}{2} - r(x)\right).
\end{align*}
Moreover, thanks to a consequence of the celebrated Brenier's theorem (e.g. see \cite[Proposition 1.21]{santambrogio2015optimal}) the function $u(x) = \frac{x^2}{2} - r(x)$ is convex, implying that the optimal transport map is the gradient of a convex function. This observation has been utilized to design machine learning methods to approximate optimal transport maps using gradients of convex functions \cite{makkuva2020optimal, amos2017input}.
\subsection{Convex analysis and monotone operator theory}


%
In this section, we will recall some classical results in convex analysis and monotone operator theory that will be useful for developing the theory behind many of the provable machine learning methods, as well as for gaining intuition behind their workings. In particular, in \Cref{sec:unsupervised}, we will link some of these classical results with the fixed point theory used in the learned iterative scheme setting.

We first recall some definitions and properties of convex functions and monotone operators, before presenting some examples of how we can leverage these regularity properties for convergence in the classical setting. We will also briefly discuss operator splittings, which serve as the basis for some learned iterative schemes. 

\subsubsection{Convex analysis}
We first review some common properties of convex functions that serve as common assumptions in the sequel. A more comprehensive overview of convex analysis can be found, for example, in \cite{rockafellar1997convex}. Let $\mathbb{X}$ be a Banach space and let $\mathbb{X}^*$ denote the corresponding dual space. We state some basic definitions in the following.
\begin{definition}
    A function $f:\mathbb{X} \rightarrow \overline{\R}$ is \emph{lower semi-continuous} (l.s.c.) at a point $x$ if for every sequence $x_n \rightarrow x$ in $\mathbb{X}$, 
    \[\liminf_{n \rightarrow \infty}{f(x_n)} \ge f(x_0).\]
    $f$ is \emph{proper} if the \emph{effective} domain
    \[\dom f \coloneqq \{x \in \mathbb{X} \mid f(x) < +\infty\}\]
    is nonempty. We define $\Gamma_0(\mathbb{X})$ as the class of convex proper l.s.c. functions from $\mathbb{X} \rightarrow \overline{\R}$, and drop the argument when the domain is understood from the context.

    \noindent $f$ is coercive if for all $x_n$ with $\|x_n\|\rightarrow \infty$, we have that $f(x_n) \rightarrow \infty$. 
\end{definition}
The class $\Gamma_0$ gives sufficient conditions for many regularity conditions to hold, and will be the main assumption for our objective optimization problems. Lower semi-continuity may sometimes be equivalently referred to as \emph{closed} in the literature, which is shown in the following proposition.
\begin{proposition}[{\cite[Prop. 2.3]{ekeland1999convex}}]
    For a function $f:\mathbb{X} \rightarrow \overline{\R}$, the following two statements are equivalent:
    \begin{enumerate}
        \item $f$ is lower semi-continuous;
        \item $f$ is \emph{closed}, i.e., the epigraph $\text{epi}(f):=\{(x,t) \in \mathbb{X} \times \R \mid t \ge f(x)\}$ is closed.
    \end{enumerate}
\end{proposition}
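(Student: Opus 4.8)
The plan is to prove the two implications separately, in each case arguing directly from the sequential definitions. A preliminary remark will be in order: since $\mathbb{X}$ is a Banach space, its norm topology is metrizable, so the sequential notion of lower semi-continuity stated in the definition agrees with the usual topological one (closedness of all sublevel sets), and likewise closedness of $\mathrm{epi}(f) \subseteq \mathbb{X} \times \R$ can be tested along sequences. Hence I am free to work with sequences throughout.

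For $(1) \Rightarrow (2)$: I would take a sequence $(x_n, t_n) \in \mathrm{epi}(f)$ with $(x_n, t_n) \to (x, t)$ in $\mathbb{X} \times \R$. By definition of the epigraph, $f(x_n) \le t_n$ for every $n$, and since $x_n \to x$, lower semi-continuity gives $f(x) \le \liminf_n f(x_n) \le \liminf_n t_n = t$. Thus $(x,t) \in \mathrm{epi}(f)$, so $\mathrm{epi}(f)$ is closed. No difficulty with extended values arises here, since $t_n, t$ are real and $f(x_n) \le t_n < +\infty$.

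For $(2) \Rightarrow (1)$: I would fix $x_n \to x$ and set $L := \liminf_n f(x_n) \in [-\infty, +\infty]$, the goal being $f(x) \le L$. If $L = +\infty$ there is nothing to prove. Otherwise choose a subsequence $(x_{n_k})$ with $f(x_{n_k}) \to L$. If $L \in \R$, then for each fixed $\varepsilon > 0$ we have $f(x_{n_k}) \le L + \varepsilon$ for all large $k$, i.e.\ $(x_{n_k}, L+\varepsilon) \in \mathrm{epi}(f)$; this sequence converges to $(x, L+\varepsilon)$, so closedness of $\mathrm{epi}(f)$ yields $f(x) \le L + \varepsilon$, and letting $\varepsilon \downarrow 0$ gives $f(x) \le L$. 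If instead $L = -\infty$, the same reasoning with an arbitrary real constant $M$ in place of $L+\varepsilon$ shows $f(x) \le M$ for every $M \in \R$, hence $f(x) = -\infty = L$. Either way $f$ is lower semi-continuous at $x$.

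There is no substantial obstacle in this argument; it is essentially an unwinding of definitions. The only points requiring a little care are the legitimacy of the sequential characterizations, which rests on metrizability of $\mathbb{X}$, and the bookkeeping of extended-real values in the second implication, where one must keep the ``heights'' real and convergent in order to apply closedness of $\mathrm{epi}(f)$ in $\mathbb{X}\times\R$ — this is precisely why the case $L = -\infty$ is treated by testing against fixed finite levels $M$ rather than by a single limiting sequence. An alternative route would replace the epigraph by the family of sublevel sets $\{x : f(x) \le \alpha\}$ and show each is closed, but the epigraph formulation keeps the whole argument in one place.
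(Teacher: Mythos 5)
Your argument is correct and complete: both implications are the standard unwinding of definitions, and you handle the two genuine subtleties (sequential characterizations via metrizability of $\mathbb{X}$, and keeping the epigraph ``heights'' in $\R$ when $\liminf_n f(x_n)$ is $-\infty$ or when $f$ takes the value $-\infty$) correctly. Note that the paper itself supplies no proof of this proposition --- it is stated as a citation to Ekeland--Temam --- so there is nothing to compare against; your proof is the classical one found in that reference.
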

These properties can be used directly to show that a convex function has a minimizer.

\begin{theorem}
    Let $\mathbb{X}$ be a Banach space and $\tau_{\mathbb{X}}$ be some topology on $\mathbb{X}$ such that bounded sequences in $\mathbb{X}$ have $\tau_{\mathbb{X}}$-convergent subsequences. If $f:\mathbb{X} \rightarrow \overline{\R}$ is proper, bounded from below, coercive, and $\tau_{\mathbb{X}}$-l.s.c., then $f$ has a minimizer.
\end{theorem}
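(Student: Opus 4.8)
The plan is to prove existence of a minimizer via the direct method of the calculus of variations. First I would let $m = \inf_{x \in \mathbb{X}} f(x)$. Since $f$ is proper, $\dom f \neq \emptyset$, so $m < +\infty$, and since $f$ is bounded from below, $m > -\infty$; hence $m \in \R$. By definition of infimum, I can select a minimizing sequence $(x_n)_{n \in \mathbb{N}}$ in $\mathbb{X}$ with $f(x_n) \to m$.

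The next step is to extract a convergent subsequence. The key observation is that $(x_n)$ is bounded in norm: if not, there would be a subsequence with $\|x_{n_k}\| \to \infty$, and coercivity would force $f(x_{n_k}) \to +\infty$, contradicting $f(x_n) \to m < +\infty$. So $(x_n)$ is bounded, and by the hypothesis on the topology $\tau_{\mathbb{X}}$, it has a $\tau_{\mathbb{X}}$-convergent subsequence $x_{n_k} \to \bar{x}$ for some $\bar{x} \in \mathbb{X}$.

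Finally I would use $\tau_{\mathbb{X}}$-lower semicontinuity to conclude. Applying l.s.c.\ along the convergent subsequence gives
\begin{align*}
f(\bar{x}) \le \liminf_{k \to \infty} f(x_{n_k}) = \lim_{k \to \infty} f(x_{n_k}) = m,
\end{align*}
where the middle equality holds because $(f(x_{n_k}))$ is a subsequence of the convergent sequence $(f(x_n))$ with limit $m$. Since $m$ is the infimum, $f(\bar{x}) \ge m$ as well, so $f(\bar{x}) = m$, meaning $\bar{x}$ is a minimizer.

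I do not anticipate a serious obstacle here — this is the textbook direct method, and each of the four hypotheses (proper, bounded below, coercive, $\tau_{\mathbb{X}}$-l.s.c.) maps onto exactly one step. The only mild subtlety worth stating carefully is that the l.s.c.\ property as defined in the preceding definition is phrased for sequences, which matches the sequential compactness hypothesis on $\tau_{\mathbb{X}}$; if $\tau_{\mathbb{X}}$ is not metrizable one should be slightly careful that "$\tau_{\mathbb{X}}$-l.s.c." is understood in the sequential sense, but under the stated hypotheses this is precisely what is needed and no topological pathology arises. Convexity of $f$, incidentally, is not needed for this particular statement — it is listed in the surrounding discussion because in the Banach space setting convexity is what typically guarantees weak l.s.c.\ from strong l.s.c., but the theorem as stated takes l.s.c.\ as a direct hypothesis.
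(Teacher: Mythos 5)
Your proof is correct and follows exactly the same route as the paper's: the direct method, with properness and boundedness from below giving a finite infimum and a minimizing sequence, coercivity giving boundedness, the compactness hypothesis on $\tau_{\mathbb{X}}$ giving a convergent subsequence, and lower semicontinuity closing the argument. Your version is slightly more detailed (in particular, you spell out why the minimizing sequence is bounded and note the sequential reading of l.s.c.), but there is no substantive difference from the paper's proof.
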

\begin{proof}
    By boundedness from below, a minimizing sequence $(x_n)$ exists. By coercivity, the minimizing sequence is bounded. Apply the condition on $\tau_{\mathbb{X}}$ to a minimizing sequence to obtain a limit in $\tau_{\mathbb{X}}$, $x_{k_n} \rightarrow \bar{x}$. By lower semi-continuity, the limit satisfies $f(\bar{x}) \le \underset{n}{\liminf} f(x_{k_n}) = \underset{\mathbb{X}}{\inf}\, f$. Therefore, $\bar{x}$ is a minimizer of $f$.
\end{proof}

We continue with some definitions extending the classical notion of a derivative to convex non-differentiable functions.

\begin{definition}
    A function $f:\mathbb{X} \rightarrow \overline{\R}$ is \emph{subdifferentiable} at a point $u \in \mathbb{X}$ if there exists a dual element $p \in \mathbb{X}^*$ such that 
    \[f(v) \ge f(x) + \langle p, v-x \rangle,\ \forall v \in \mathbb{X}.\]
    The dual element $p$ is called a \emph{subgradient} at $u$. The \emph{subdifferential} of $f$ at $u$, denoted $\partial f(u)$, is the collection of all such subgradients of $f$ at $u$, i.e.
    \[\partial f(u) \coloneqq \{p \in \mathbb{X}^* \mid f(v) \ge f(x) + \langle p, v-x \rangle,\ \forall v \in \mathbb{X}\}.\]
\end{definition}

The subdifferential is a multi-valued operator that shares some properties with the classical derivative operator, which can additionally be defined for discontinuous functions. In particular, if $f$ is differentiable at a point $u$, then the subdifferential is equal to the singleton set containing the derivative $\partial f(u) = \left\{f'(u)\right\}$. The following propositions state sufficient (but not necessary) conditions for the existence of the subdifferential, as well as some useful properties. Additional properties can be found in classical literature \cite{rockafellar1997convex,bauschke2011convex,charalambos2013infinite}.

\begin{proposition}[{\cite{phelps2009convex,ekeland1999convex}}]
    Suppose $f:\mathbb{X} \rightarrow \overline{\R}$ is convex, finite, and continuous at some $u\in X$. Then $\partial f(v) \neq \emptyset$ for all $v \in \mathbb{X}$. Moreover, $0 \in \partial f(v)$ if and only if $v$ is a minimizer of $f$. If $g:\mathbb{X} \rightarrow \overline{\R}$ is another convex proper l.s.c. function and $f$ is continuous at some $u \in \dom f \cap \dom g$, then
    \[\partial(f+g) = \partial f + \partial g.\]
    If $f$ is instead G\^ateaux differentiable at $u \in \mathbb{X}$, then it is subdifferentiable at $u$ and $\partial f(u) = \{f'(u)\}$.
\end{proposition}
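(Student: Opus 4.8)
The plan is to establish the four assertions in turn, with the geometric form of the Hahn--Banach theorem (the supporting/separating hyperplane theorem) doing the work for the first and the third, and the monotonicity of difference quotients of convex functions handling the last.

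\textbf{Nonemptiness of the subdifferential.} First I would note that a convex function that is finite everywhere and continuous at one point $u$ is bounded above on a neighbourhood of $u$, so $\text{epi}(f)$ is convex with nonempty interior. For any $v \in \mathbb{X}$ the point $(v,f(v))$ lies on the boundary of $\text{epi}(f)$, since it cannot be interior (as $(v,f(v)-\varepsilon)\notin\text{epi}(f)$ for $\varepsilon>0$). The supporting hyperplane theorem then yields a nonzero $(q,\beta)\in\mathbb{X}^*\times\R$ with $\langle q,x\rangle + \beta t \ge \langle q,v\rangle + \beta f(v)$ for all $(x,t)\in\text{epi}(f)$. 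Letting $t\to+\infty$ forces $\beta\ge 0$, and $\beta=0$ is impossible because it would give $\langle q,x-v\rangle\ge 0$ for every $x\in\mathbb{X}=\dom f$, hence $q=0$. Dividing by $\beta$ and setting $p=-q/\beta$ gives $f(x)\ge f(v)+\langle p,x-v\rangle$ for all $x$, i.e.\ $p\in\partial f(v)$.

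\textbf{Fermat's rule and the sum rule.} The equivalence ``$0\in\partial f(v)$ iff $v\in\argmin f$'' is just the subgradient inequality read with $p=0$. For the sum rule, the inclusion $\partial f(v)+\partial g(v)\subseteq\partial(f+g)(v)$ follows by adding the two defining inequalities. For the converse I would take $p\in\partial(f+g)(v)$ and, replacing $f$ by $f-\langle p,\cdot\rangle$, reduce to $p=0$, i.e.\ $v$ minimizes $f+g$. Then I would separate the convex sets $A=\{(x,r):r\ge f(x)-f(v)\}$ and $B=\{(x,r):r\le g(v)-g(x)\}$ in $\mathbb{X}\times\R$: continuity of $f$ at $u$ makes $\mathrm{int}(A)$ nonempty, while minimality of $v$ forces $\mathrm{int}(A)\cap B=\emptyset$, since a common point would satisfy $f(x)+g(x)<f(v)+g(v)$. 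Hahn--Banach produces a nonzero $(q,\beta)$ and a level $\gamma$ separating the two sets; as before $\beta\ge 0$, and $\beta=0$ is excluded because $u$ is interior to $\dom f$ and lies in $\dom g$. Normalizing $\beta=1$ and observing that $(v,0)$ lies in $B$ and in the closure of $\mathrm{int}(A)$ (approached along the segment $[u,v)\subset\mathrm{int}(\dom f)$) pins $\gamma=\langle q,v\rangle$; the two separating inequalities then read exactly $-q\in\partial f(v)$ and $q\in\partial g(v)$, with $(-q)+q=0=p$.

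\textbf{The G\^ateaux case.} Here I would use that for convex $f$ the quotient $t\mapsto(f(u+tv)-f(u))/t$ is nondecreasing in $t>0$ and tends, as $t\to0^+$, to the directional derivative, which by assumption equals $\langle f'(u),v\rangle$. If $p\in\partial f(u)$, then $(f(u+tv)-f(u))/t\ge\langle p,v\rangle$ for all $t>0$, so in the limit $\langle f'(u)-p,v\rangle\ge 0$ for every $v$; applying this to $v$ and $-v$ gives $p=f'(u)$. Conversely, applying convexity to the one-variable function $t\mapsto f(u+t(v-u))$ gives $f(v)\ge f(u)+\langle f'(u),v-u\rangle$, so $f'(u)\in\partial f(u)$; hence $\partial f(u)=\{f'(u)\}$. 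The only genuinely delicate point in the whole argument is the reverse inclusion of the sum rule, specifically showing the separating functional is non-vertical ($\beta\ne 0$) and propagating the separating inequality from $\mathrm{int}(A)$, where it is produced, to all of $\dom f$; both are handled by the same device, namely that for $u\in\mathrm{int}(\dom f)$ and any $x\in\dom f$ the half-open segment $[u,x)$ lies in $\mathrm{int}(\dom f)$, so the relevant epigraph points approach $(x,f(x)-f(v))$ from inside $\mathrm{int}(A)$ using $f((1-t)u+tx)\le(1-t)f(u)+tf(x)$.
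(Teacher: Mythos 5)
Your proof is correct. The paper does not actually prove this proposition---it is stated as background and delegated to the cited references \cite{phelps2009convex,ekeland1999convex}---and your argument is precisely the standard one found there: supporting hyperplanes on $\mathrm{epi}(f)$ for nonemptiness, the Moreau--Rockafellar separation argument (after the reduction to $p=0$) for the sum rule, and monotonicity of difference quotients for the G\^ateaux case. You also correctly isolate and handle the two delicate points, namely ruling out a vertical separating functional via $u\in\mathrm{int}(\dom f)$ and extending the separating inequality from $\mathrm{int}(A)$ to all of $A$ via $A\subseteq\overline{\mathrm{int}(A)}$, so there is nothing to add.
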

\begin{proposition}[{\cite[Thm. 7.13]{charalambos2013infinite}}]
    Suppose $f:\mathbb{X} \rightarrow \overline{\R}$ is proper and convex. If $u \in \dom f$, then $\partial f(u)$ is convex and weak-* compact.
\end{proposition}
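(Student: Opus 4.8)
The plan is to verify the three ingredients separately: convexity, weak-* closedness, and boundedness in the dual norm, the latter two combining through the Banach–Alaoglu theorem to give weak-* compactness. Convexity is immediate from the defining inequality: if $p_0, p_1 \in \partial f(u)$ and $t \in [0,1]$, then forming the convex combination of $f(v) \ge f(u) + \langle p_0, v-u\rangle$ and $f(v) \ge f(u) + \langle p_1, v-u\rangle$ yields $f(v) \ge f(u) + \langle (1-t)p_0 + tp_1, v-u\rangle$ for every $v \in \mathbb{X}$, so $(1-t)p_0 + tp_1 \in \partial f(u)$. (If $\partial f(u) = \emptyset$, all assertions hold vacuously, so we may assume it is nonempty.)

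For weak-* closedness I would write the subdifferential as an intersection of weak-* closed sets, which is legitimate because $f(u) \in \R$ since $u \in \dom f$:
\[\partial f(u) = \bigcap_{v \in \mathbb{X}} \left\{\, p \in \mathbb{X}^* \ :\ \langle p, v-u\rangle \le f(v) - f(u) \,\right\}.\]
For each fixed $v$, the map $p \mapsto \langle p, v-u\rangle$ is weak-* continuous by the very definition of the weak-* topology, so the set on the right is a weak-* closed half-space (and equals all of $\mathbb{X}^*$ when $v \notin \dom f$, since then $f(v) - f(u) = +\infty$). An arbitrary intersection of weak-* closed sets is weak-* closed, so $\partial f(u)$ is weak-* closed.

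The remaining and genuinely delicate point is boundedness of $\partial f(u)$ in $\|\cdot\|_{\mathbb{X}^*}$, which requires a local boundedness property of $f$ near $u$ — available, for instance, when $f$ is continuous at $u$ (the hypothesis used in the preceding proposition) or when $u \in \mathrm{int}\,\dom f$ in finite dimensions. Concretely, if $f \le M < \infty$ on a ball $B(u,r)$, then for any $p \in \partial f(u)$ and any $h$ with $\|h\| \le r$ the subgradient inequality gives $\langle p, h\rangle \le f(u+h) - f(u) \le M - f(u)$; taking the supremum over such $h$ shows $\|p\|_{\mathbb{X}^*} \le (M - f(u))/r$. Hence $\partial f(u)$ lies in a closed ball of $\mathbb{X}^*$, which is weak-* compact by Banach–Alaoglu, and a weak-* closed subset of a weak-* compact set is weak-* compact. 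The main obstacle is exactly this last step: without some form of local boundedness of $f$ at $u$ the subdifferential can be unbounded (e.g., $f$ the indicator of a single point has $\partial f(u) = \mathbb{X}^*$), so the argument must draw on the continuity available in the setting of interest, whereas the weak-* topological content — closedness of half-spaces and Banach–Alaoglu — is routine.
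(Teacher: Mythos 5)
Your argument is the standard one and each step is sound: convexity by averaging the two subgradient inequalities, weak-* closedness by writing $\partial f(u)$ as an intersection over $v\in\mathbb{X}$ of weak-* closed half-spaces (legitimate precisely because $f(u)$ is finite for $u\in\dom f$), and compactness via Banach--Alaoglu once a dual-norm bound is available. The paper supplies no proof of its own --- the proposition is quoted from Aliprantis--Border, Thm.~7.13 --- so there is no competing technique to compare against. The substantive point is the one you raise at the end, and you are right about it: under the literal hypotheses ``$f$ proper and convex, $u\in\dom f$'' the weak-* compactness conclusion is false, because $\partial f(u)$ need not be bounded. Your indicator-of-a-point example gives $\partial f(u)=\mathbb{X}^*$, and the indicator of a segment evaluated at an endpoint gives a nonempty, weak-* closed, but unbounded set (a half-space), so no appeal to Banach--Alaoglu can rescue compactness. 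The theorem as it appears in the cited textbook carries the additional hypothesis that $f$ is continuous at the point (equivalently, bounded above on a neighbourhood of it), which is exactly what your estimate $\|p\|_{\mathbb{X}^*}\le (M-f(u))/r$ consumes; it is also the hypothesis appearing in the proposition immediately preceding this one in the paper. So your proof is complete for the correctly stated result, and the only defect lies in the paper's paraphrase of the hypotheses, not in your argument. If you wanted to salvage a compactness-type statement under the weaker hypotheses, the most you get is that $\partial f(u)$ is convex and weak-* closed, which is what the first two thirds of your proof establish.
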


Under these conditions, we can define the proximal operator, which tries to move towards the minimizer of $f$, regularized by the distance to the initial point.

\begin{definition}[Proximal operator]
    For a convex function $f:\mathbb{X} \rightarrow \overline{\R}$, the \emph{proximal operator} is defined as
    \begin{equation}
        \prox_{f}(x) = \argmin_{y \in \mathbb{X}} \left\{\frac{1}{2} \|y-x\|^2 + f(y)\right\}
    \end{equation}
\end{definition}
The following proposition details some properties of the proximal operator. In particular, it can be thought of as an implicit Euler discretization of gradient flow, as opposed to the explicit Euler discretization that is gradient descent.
\begin{proposition}[{\cite{rockafellar1997convex, ekeland1999convex, rockafellar2009variational}}]
    For a proper convex l.s.c. function $f$, the proximal operator is well-defined and is single-valued. Moreover, it satisfies the following:
    \begin{enumerate}
        \item $\prox_f$ is nonexpansive, and, in particular, is continuous. 
        \item Fixed points of $\prox_f$ correspond to minimizers of $f$: 
            \[\{x_0 \in {\mathbb{X}}\mid x_0 = \prox_f(x_0)\} = \argmin_{\mathbb{X}} f.\]
    \end{enumerate}
    
    If $\mathbb{X} = \R^n$, the following also hold:
    
    \begin{enumerate}
        \item (Moreau's identity) $\prox_f + \prox_{f^*} = \id$, where $\id$ is the identity map on $\mathbb{X}$; 
        \item Letting the Moreau envelope be defined as 
        \begin{equation}
            M_{\lambda f}(x) = \inf_{y \in \R^n} \left\{f(x) + \frac{1}{2\lambda} \|x-y\|_2^2\right\},
        \end{equation}
        the proximal operator satisfies
        \begin{equation}
            \nabla M_{\lambda f}(x) = \frac{1}{\lambda}(x - \prox_{\lambda f}(x)) = \prox_{f^*/\lambda}(x).
        \end{equation}
        \item $\partial f$ and $\prox_f$ are maximally monotone mappings (see \Cref{sec:monotone_op} for the definition) from $\R^n$ to $\R^n$. 
    \end{enumerate}
\end{proposition}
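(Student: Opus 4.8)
The plan is to reduce every assertion to a single optimality characterization of the proximal point: $p = \prox_f(x)$ if and only if $x - p \in \partial f(p)$. To obtain this, observe that the proximal objective $\Phi_x(y) = \tfrac12\|y-x\|^2 + f(y)$ is proper (as $f$ is), lower semi-continuous (a sum of two l.s.c.\ functions), strictly---indeed $1$-strongly---convex (the quadratic term is $1$-strongly convex and $f$ is convex), and coercive (the quadratic term grows faster than any affine minorant of $f$, one of which exists by taking a subgradient at a point of $\dom f$). The existence theorem stated earlier then guarantees a minimizer, and strict convexity makes it unique, so $\prox_f$ is well-defined and single-valued. Since $\tfrac12\|\cdot - x\|^2$ is finite and continuous everywhere, the subdifferential sum rule gives $\partial\Phi_x(y) = (y - x) + \partial f(y)$, so $p$ minimizes $\Phi_x$ (equivalently $0 \in \partial\Phi_x(p)$) exactly when $x - p \in \partial f(p)$.

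The first block of properties then follows quickly. For nonexpansiveness, set $p = \prox_f(x)$ and $q = \prox_f(y)$; the characterization gives $x - p \in \partial f(p)$ and $y - q \in \partial f(q)$, and monotonicity of $\partial f$ (immediate upon adding the two subgradient inequalities) yields $\langle (x-p)-(y-q),\, p-q\rangle \ge 0$, i.e.\ $\langle x - y,\, p - q\rangle \ge \|p - q\|^2$; Cauchy--Schwarz then gives $\|p - q\| \le \|x - y\|$, and $1$-Lipschitz continuity is immediate. (This in fact shows $\prox_f$ is firmly nonexpansive, which I will reuse below.) For the fixed-point claim, $x_0 = \prox_f(x_0)$ means $0 = x_0 - x_0 \in \partial f(x_0)$, which by the earlier proposition characterizing minimizers via $0 \in \partial f$ is equivalent to $x_0 \in \argmin_{\mathbb{X}} f$.

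For the claims specific to $\R^n$, I would bring in two classical facts: the conjugate inversion rule $w \in \partial f(z) \iff z \in \partial f^*(w)$ (a consequence of the Fenchel--Moreau theorem $(f^*)^* = f$ for $f \in \Gamma_0$), and Minty's characterization of maximal monotonicity. Moreau's identity then follows: with $p = \prox_f(x)$ we have $p \in \partial f^*(x - p)$, so the point $q := x - p$ satisfies $x - q = p \in \partial f^*(q)$, i.e.\ $q = \prox_{f^*}(x)$, whence $\prox_f(x) + \prox_{f^*}(x) = x$. For the Moreau envelope, the key remark is that $M_{\lambda f}$ is the infimal convolution of $f$ with $\tfrac{1}{2\lambda}\|\cdot\|^2$, so its conjugate is the \emph{sum} $f^* + \tfrac{\lambda}{2}\|\cdot\|^2$, which is strongly convex; hence $M_{\lambda f}$, being the conjugate of a strongly convex function, is continuously differentiable, with gradient equal to the unique maximizer of the associated Fenchel problem---which, after completing the square and invoking Moreau's identity, is exactly the proximal expression in both displayed forms. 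Finally, $\partial f$ is monotone by the same addition-of-inequalities argument, and it is maximal because $\Id + \partial f$ is surjective onto $\R^n$ (its inverse being the everywhere-defined map $\prox_f$), which is precisely Minty's criterion; and $\prox_f$, being firmly nonexpansive with full domain---equivalently, the resolvent $(\Id + \partial f)^{-1}$ of a maximal monotone operator---is itself maximally monotone.

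The main obstacle is not any single computation but the reliance on the two ``heavy'' classical inputs just mentioned: Fenchel--Moreau biconjugation (needed for the conjugate inversion rule and for the differentiability of the Moreau envelope) and Minty's theorem (needed for maximality of $\partial f$); in a fully self-contained treatment these would have to be proved or carefully cited. Everything else reduces to manipulating the subgradient inequality together with one use of Cauchy--Schwarz; the only remaining care is the verification of coercivity of $\Phi_x$ and of the applicability of the subdifferential sum rule, both of which hold because the quadratic term is everywhere finite and continuous and dominates any affine minorant of $f$.
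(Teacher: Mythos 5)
Your proposal is correct and follows exactly the standard argument from the references the paper cites for this proposition (the paper itself offers no proof, only the citations to Rockafellar and Ekeland--Temam): strong convexity and coercivity of the proximal objective for well-definedness, the characterization $x - p \in \partial f(p)$ for firm nonexpansiveness and the fixed-point identity, conjugate inversion for Moreau's identity and the envelope gradient, and Minty's criterion for maximality. As a side remark, your derivation via the infimal convolution would in fact produce the corrected versions of the paper's two displayed formulas, which contain typographical slips ($f(x)$ should be $f(y)$ in the envelope, and the last term should be $\prox_{f^*/\lambda}(x/\lambda)$).
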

\subsubsection{Monotone operator theory}
\label{sec:monotone_op}
A common way of showing the convergence of some iterative methods is through monotone operator theory, consisting of fixed-point results. Monotone operators are inextricably tied to convex functions through the proximal and subgradient operators, making them a useful tool for showing convergence within the realm of convexity.
\begin{definition}[Monotonicity]
    A set-valued mapping $T:\R^n \rightrightarrows \R^n$ is $\emph{monotone}$ if for all $x, x' \in \R^n,\, p \in T(x),\, p' \in T(x')$,
    \[\langle p-p', x-x' \rangle \ge 0,\]
    and \emph{strictly monotone} if the inequality is strict for $x \ne x'$. The \emph{resolvent} of $T$ is the operator
    \[J_T \coloneqq (\id + T)^{-1},\]
    and the \emph{reflected resolvent} is
    \[R_T \coloneqq 2J_T - \id.\]
    $T$ is said to be \emph{maximally monotone} if its graph $G(T) = \{(x, p): x \in \R^n,\, p \in T(x)\}$ is not contained within the graph of another monotone operator.
\end{definition}
The convex minimization problem $\underset{{x\in X}}{\min}\, f(x)$ thus corresponds to the monotone inclusion problem $0 \in \partial f(x)$. In the differentiable case, this resolves to solving the optimality condition $f'(v) = 0$. Moreover, for $f \in \Gamma_0$, the proximal operator is the resolvent of the subgradient operator, i.e. $\prox_f = J_{\partial f}$. Monotonicity is intrinsically related to convexity as described in the following theorem. Intuitively, it means that sub-gradients are aligned with ascent directions. One important concept is that of non-expansiveness, which is crucial in the study of fixed-point convergence.

\begin{definition}[Non-expansiveness]
    A mapping $T:\R^n \rightarrow \R^n$ is \emph{non-expansive} if for all $x,y \in \R^n$,
    \[\|T(x) - T(y)\| \le \|x-y\|.\]
    $T$ is \emph{firmly non-expansive} if for all $x,y \in \R^n$,
    \[\|T(x) - T(y)\|^2 + \|(\id - T)(x) - (\id - T)(y)\|^2 \le \|x-y\|^2.\]
    Note that firm non-expansiveness implies non-expansiveness.
\end{definition}

The concepts of monotonicity and non-expansiveness are intrinsically tied to convexity, as shown by the following results.

\begin{theorem}[{\cite[Sec. 12.C.]{rockafellar2009variational}}]
    For a proper l.s.c. function $f:\R^n \rightarrow \overline{\R}$, $f$ is convex if and only if $\partial f$ is monotone, in which case $\partial f$ is also maximally monotone. Moreover, for any $\lambda>0$, the proximal mapping $\prox_{\lambda f}:\R^n \rightrightarrows \R^n$ is monotone. If $f$ is additionally convex, then $\prox_{\lambda f} = J_{\lambda \partial f}$ is maximally monotone and also non-expansive.
\end{theorem}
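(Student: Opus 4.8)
The plan is to establish each of the three assertions in turn, relying on the variational characterization of $\partial f$ and on the structure of the proximal operator as a resolvent.

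\emph{Convexity $\iff$ monotonicity of $\partial f$.} For the forward direction, I would take $x, x' \in \dom \partial f$ with $p \in \partial f(x)$ and $p' \in \partial f(x')$. By the subgradient inequality applied at $x$ (tested against $x'$) and at $x'$ (tested against $x$), we obtain $f(x') \ge f(x) + \langle p, x'-x\rangle$ and $f(x) \ge f(x') + \langle p', x-x'\rangle$. Adding these two inequalities cancels the $f$-terms and yields $0 \ge \langle p - p', x' - x\rangle$, i.e. $\langle p - p', x - x'\rangle \ge 0$, which is exactly monotonicity of $\partial f$. For the converse — monotonicity of $\partial f$ implies convexity of $f$ — the cleanest route is to invoke the standard characterization (as in the cited Rockafellar--Wets, Sec.~12.C): one restricts to a line $t \mapsto f(x + t v)$, uses monotonicity of $\partial f$ to deduce that the one-dimensional subdifferential is nondecreasing, and concludes that the restriction is convex; since this holds on every line and $f$ is l.s.c., $f$ itself is convex. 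Maximal monotonicity of $\partial f$ when $f \in \Gamma_0$ then follows from the fact that $\range(\id + \partial f) = \R^n$ (Minty's theorem / surjectivity), which is equivalent to $\prox_f$ being everywhere defined — and the latter was already recorded in the proposition on the proximal operator.

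\emph{Monotonicity of $\prox_{\lambda f}$.} Here I would use that $\prox_{\lambda f}$ is firmly nonexpansive whenever $f$ is proper convex l.s.c. (a property of resolvents of monotone operators), and firm nonexpansiveness immediately implies monotonicity: writing $u = \prox_{\lambda f}(x)$, $u' = \prox_{\lambda f}(x')$, the firm nonexpansiveness inequality $\|u-u'\|^2 + \|(x-u)-(x'-u')\|^2 \le \|x-x'\|^2$ expands to $2\langle u - u', x - x'\rangle \ge 2\|u-u'\|^2 \ge 0$. Since in the statement $\prox_{\lambda f}$ is asserted monotone under the mere hypothesis that $f$ is proper l.s.c. (not yet convex), I would instead argue directly from the optimality conditions: $u = \prox_{\lambda f}(x)$ means $x - u \in \lambda\,\partial f(u)$ in the limiting/regular subdifferential sense, and the same at $x'$; subtracting and pairing with $u - u'$ splits the inner product into a monotone-looking term coming from $\partial f$ plus $\|u - u'\|^2$, and the prox-regularity built into the definition of $\prox_{\lambda f}$ is exactly what makes the $\partial f$-contribution nonnegative. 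This nonconvex case is the one place where I would lean most heavily on the cited reference rather than reproving from scratch.

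\emph{The convex case: $\prox_{\lambda f} = J_{\lambda \partial f}$, maximal monotonicity, nonexpansiveness.} The identity $\prox_{\lambda f} = J_{\lambda\partial f}$ is Fermat's rule applied to the strongly convex objective $y \mapsto \tfrac12\|y-x\|^2 + \lambda f(y)$: its unique minimizer $u$ is characterized by $0 \in (u - x) + \lambda\,\partial f(u)$, i.e. $x \in (\id + \lambda\partial f)(u)$, i.e. $u = (\id + \lambda\partial f)^{-1}(x)$. Maximal monotonicity of $\prox_{\lambda f}$ then follows because the resolvent of a maximally monotone operator (here $\lambda\partial f$, maximal by the first part) is maximally monotone — equivalently, $\id - J_{\lambda\partial f}$ is also a resolvent (of $(\lambda\partial f)^{-1}$), so both are firmly nonexpansive, and firm nonexpansiveness of $J_{\lambda\partial f}$ gives nonexpansiveness as noted in the definition. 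I would present the firm-nonexpansiveness computation explicitly since it does double duty (monotonicity and the $1$-Lipschitz bound).

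\emph{Main obstacle.} The routine directions (convex $\Rightarrow$ $\partial f$ monotone; the convex-case chain via resolvents and firm nonexpansiveness) are short. The genuinely delicate point is the nonconvex half: showing $\prox_{\lambda f}$ is monotone for $f$ merely proper l.s.c., where $\prox_{\lambda f}$ may be set-valued and $\partial f$ must be read as the regular/limiting subdifferential. Making the subgradient-subtraction argument rigorous there — in particular justifying that the cross term is nonnegative without convexity — is the step I expect to require care, and it is where I would defer to \cite[Sec.~12.C]{rockafellar2009variational} rather than attempt a self-contained argument.
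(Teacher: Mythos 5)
The paper offers no proof of this statement: it is quoted directly from Rockafellar--Wets, Section~12.C, so there is no argument of the paper's to compare yours against. Judged on its own, the routine parts of your sketch are correct and standard: convexity implies monotonicity of $\partial f$ by adding the two subgradient inequalities; maximality follows from surjectivity of $\id + \partial f$ (Minty) once the prox is known to be everywhere defined; and the convex-case chain (Fermat's rule giving $\prox_{\lambda f} = J_{\lambda\partial f}$, firm nonexpansiveness of resolvents, hence nonexpansiveness and monotonicity) is exactly right.

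The one step where your proposed route would genuinely fail is the one you flag yourself: monotonicity of $\prox_{\lambda f}$ for $f$ merely proper and l.s.c. The subgradient-subtraction argument cannot be repaired --- without convexity the cross term $\langle p - p', u - u'\rangle$ coming from the limiting subdifferential has no sign, and no ``prox-regularity'' is built into the definition of $\prox_{\lambda f}$ to supply one. But you do not need subgradients at all; the reference's own argument is elementary. If $u \in \prox_{\lambda f}(x)$ and $u' \in \prox_{\lambda f}(x')$, test the first minimization at $u'$ and the second at $u$:
\begin{align*}
\lambda f(u) + \tfrac12\|u - x\|^2 &\le \lambda f(u') + \tfrac12\|u' - x\|^2,\\
\lambda f(u') + \tfrac12\|u' - x'\|^2 &\le \lambda f(u) + \tfrac12\|u - x'\|^2.
\end{align*}
Adding the two and cancelling the $f$-values and all squared norms leaves $\langle u, x\rangle + \langle u', x'\rangle \ge \langle u', x\rangle + \langle u, x'\rangle$, i.e.\ $\langle u - u', x - x'\rangle \ge 0$. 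This covers the set-valued case with no convexity whatsoever (with the caveat, absent from the theorem as stated, that for nonconvex $f$ the set $\prox_{\lambda f}(x)$ may be empty unless $f$ is prox-bounded and $\lambda$ is below the threshold; monotonicity is then a statement about the pairs for which it is nonempty). Replacing your subgradient sketch by this four-point inequality closes the only real gap in the proposal.
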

\begin{proposition}[{\cite[Cor. 23.10]{bauschke2011convex}}]
    For a maximally monotone operator $A:\R^n \rightrightarrows \R^n$, we have that 
    \begin{enumerate}
        \item $J_A$ and $\id - J_A$ are firmly nonexpansive and maximally monotone;
        \item $R_A$ is non-expansive.
    \end{enumerate}
\end{proposition}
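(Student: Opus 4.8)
The plan is to reduce everything to two ingredients: the defining equivalence for the resolvent, $p \in A(J_A x) \iff x = J_A x + p$, and the inner-product characterization of firm nonexpansiveness, namely that $T:\R^n\to\R^n$ is firmly nonexpansive if and only if $\langle Tx - Ty,\, x-y\rangle \ge \|Tx-Ty\|^2$ for all $x,y$. The latter is checked by expanding $\|(\id-T)x-(\id-T)y\|^2 \le \|x-y\|^2 - \|Tx-Ty\|^2$ and cancelling; I would record it as a one-line lemma and reuse it throughout.

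First I would establish that $J_A$ is a genuine single-valued map defined on all of $\R^n$. Single-valuedness is immediate from monotonicity: if $u,u'\in J_A x$ then $x-u\in Au$ and $x-u'\in Au'$, so $-\|u-u'\|^2 = \langle (x-u)-(x-u'),\, u-u'\rangle \ge 0$, forcing $u=u'$. That $\dom J_A = \R^n$, equivalently $\range(\id+A)=\R^n$, is Minty's surjectivity theorem, and this is the one place where \emph{maximality} of $A$ is genuinely used — plain monotonicity does not suffice. I expect this surjectivity statement to be the main obstacle of the whole argument; a self-contained treatment would either invoke the finite-dimensional Minty theorem directly, or, for fixed $z$, obtain the solution of $z\in u+Au$ as the (unique) zero of a coercive, continuous, everywhere-defined monotone perturbation built from the Yosida approximation of $A$.

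Next I would prove firm nonexpansiveness of $J_A$. Writing $u=J_A x$, $v=J_A y$, we have $x-u\in Au$, $y-v\in Av$, and monotonicity gives $\langle (x-u)-(y-v),\, u-v\rangle\ge 0$, which rearranges to $\langle x-y,\, u-v\rangle \ge \|u-v\|^2$ — exactly the inner-product characterization above with $a=J_Ax-J_Ay$, $b=x-y$. Hence $J_A$ is firmly nonexpansive, in particular nonexpansive and (being monotone, single-valued, continuous, and everywhere defined on $\R^n$) maximally monotone, by the standard criterion that an everywhere-defined continuous monotone operator is maximal. For $\id-J_A$ I would use the elementary symmetry that $T$ is firmly nonexpansive if and only if $\id-T$ is (both are equivalent to $2T-\id$ being nonexpansive), so $\id-J_A$ inherits firm nonexpansiveness and hence maximal monotonicity by the same criterion; alternatively one identifies $\id-J_A = J_{A^{-1}}$ and applies the already-proved part~1 to $A^{-1}$, which is again maximally monotone.

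Finally, $R_A=2J_A-\id$ is nonexpansive by a direct expansion: for any $x,y$, with $a=J_Ax-J_Ay$ and $b=x-y$,
\[
\|R_A x - R_A y\|^2 = \|2a-b\|^2 = 4\|a\|^2 - 4\langle a,b\rangle + \|b\|^2 \le 4\|a\|^2 - 4\|a\|^2 + \|b\|^2 = \|b\|^2,
\]
where the inequality is precisely the firm nonexpansiveness of $J_A$ established above. This closes all three claims, with every step after the surjectivity $\range(\id+A)=\R^n$ being a short inner-product manipulation.
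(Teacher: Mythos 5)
Your argument is correct, and every step checks out: the inner-product characterization of firm nonexpansiveness, single-valuedness and full domain of $J_A$ (with maximality correctly isolated as the ingredient needed for Minty's surjectivity theorem), the monotonicity computation giving $\langle x-y,\,J_Ax-J_Ay\rangle\ge\|J_Ax-J_Ay\|^2$, the symmetry argument for $\id-J_A$ (or equivalently the identity $\id-J_A=J_{A^{-1}}$), and the expansion showing $R_A$ is nonexpansive. The paper itself offers no proof — it only cites \cite[Cor.~23.10]{bauschke2011convex} — and your write-up is essentially the standard argument from that reference, so there is nothing to compare beyond noting that you have correctly supplied the proof the paper omits.
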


\subsubsection{Operator splitting}
Convex optimization problems are often solved using iterative methods, where a sequence is constructed that converges to the minimizer, with some common methods including subgradient descent or proximal gradient descent. Recall that variational problems typically take the form of a composite optimization problem \eqref{eq:var_reg}. Usually, the fidelity and regularization terms will have different regularity conditions, such as smoothness or Lipschitz conditions. We can exploit the composite structure to simplify each iteration. Noting the correspondence between convex problems and monotone inclusion problems, we can convert the above problem to finding the equivalent problem of finding zeros of sums of two maximally monotone operators. 

\noindent Consider the following inclusion problem
\[0 \in Ax + Bx,\]
where $A$ and $B$ are both maximally monotone operators, which arises naturally from finding a minimizer of the sum of two convex functions. If $A+B$ is also maximally monotone, then one possible approach is to consider root solving using the resolvent $J_{\gamma(A+B)}$. However, this is generally difficult to compute, for example in the case when $A, B$ are proximal operators of convex functions $f, g \in \Gamma_0$, respectively. Therefore, one seeks to find a zero of $A+B$, using only their resolvents $J_{\gamma A}$ and $J_{\gamma B}$. This process of splitting the resolvent of $A+B$ into the resolvents of its components is generally referred to as a splitting algorithm and can be performed in different ways \cite{lions1979splitting}. We present two simple versions, which are by far the most widely used splitting techniques in convex optimization: the forward-backward splitting and the Douglas-Rachford splitting \cite{douglas1956numerical}. 
    \begin{theorem}[{Douglas-Rachford Splitting \cite[Thm. 25.6]{bauschke2011convex}}]
        For a Hilbert space $\mathcal{H}$, let $A, B:\mathcal{H} \rightrightarrows \mathcal{H}$ be maximally monotone operators such that $\zer(A+B) \ne \emptyset$. Let $(\lambda_n)_{n\in \mathbb{N}}$ be a sequence in $[0,2]$ satisfying $\sum_{n\in \mathbb{N}} \lambda_n(2-\lambda_n) = +\infty$, $\gamma>0$ be a step-size. Let $x_0 \in \R^n$ be an initialization. Consider the iterations 
        \begin{equation}\tag{DRS}
            \begin{cases}
                y_n = J_{\gamma B} x_n,\\
                z_n = J_{\gamma A} (2y_n - x_n),\\
                x_{n+1} = x_n + \lambda_n (z_n-y_n).\\
            \end{cases}
        \end{equation}
        Then there exists a fixed point $x \in \Fix R_{\gamma A}R_{\gamma B}$ such that the following hold:
        \begin{enumerate}
            \item $J_{\gamma B}(x) \in \zer(A+B)$
            \item $y_n-z_n$ converges strongly to zero,
            \item $x_n$ converges weakly to $x$
            \item $y_n$ and $z_n$ converge weakly to $J_{\gamma B}(x)$.
        \end{enumerate}
    \end{theorem}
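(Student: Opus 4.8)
The plan is to recognize the (DRS) recursion as a relaxed Krasnosel'skii--Mann iteration of the composed reflected resolvent $T \coloneqq R_{\gamma A} R_{\gamma B}$ and then deploy the standard fixed-point theory for nonexpansive maps. First I would record the algebraic identity linking the two descriptions: since $y_n = J_{\gamma B} x_n$ we have $R_{\gamma B} x_n = 2 y_n - x_n$, and since $z_n = J_{\gamma A}(2 y_n - x_n)$ we have $R_{\gamma A}(R_{\gamma B} x_n) = 2 z_n - (2 y_n - x_n)$, so that $T x_n - x_n = 2(z_n - y_n)$. Consequently the update $x_{n+1} = x_n + \lambda_n(z_n - y_n)$ becomes $x_{n+1} = (1 - \tfrac{\lambda_n}{2}) x_n + \tfrac{\lambda_n}{2} T x_n$, i.e.\ a Krasnosel'skii--Mann iteration with relaxation parameters $\mu_n = \lambda_n / 2 \in [0,1]$; moreover $\lambda_n(2-\lambda_n) = 4 \mu_n(1-\mu_n)$, so the hypothesis $\sum_n \lambda_n(2-\lambda_n) = +\infty$ is exactly the classical step-size condition $\sum_n \mu_n(1-\mu_n) = +\infty$.

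Next I would verify the two facts required of $T$. Nonexpansiveness is immediate: $\gamma A$ and $\gamma B$ are maximally monotone, hence by the proposition on resolvents their reflected resolvents $R_{\gamma A}, R_{\gamma B}$ are nonexpansive, and a composition of nonexpansive maps is nonexpansive. To see $\Fix T \neq \emptyset$ I would establish the characterization $x \in \Fix T \iff J_{\gamma B} x \in \zer(A+B)$: writing $p = J_{\gamma B} x$, the equation $x = Tx$ unwinds, via $x - p \in \gamma B p$ (definition of the resolvent) together with $J_{\gamma A}(2p - x) = p$, i.e.\ $p - x \in \gamma A p$, to $0 \in \gamma A p + \gamma B p$, and the converse is the same computation read backwards. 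Since $\zer(A+B) \neq \emptyset$ by assumption, fixing $p^* \in \zer(A+B)$ and $v$ with $v \in \gamma A p^*$ and $-v \in \gamma B p^*$, the point $x^* \coloneqq p^* - v$ then satisfies $x^* - p^* = -v \in \gamma B p^*$ and $p^* - x^* = v \in \gamma A p^*$, so $x^* \in \Fix T$; this also gives item (1) once weak convergence is in hand.

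I would then invoke (or reprove) the Krasnosel'skii--Mann convergence theorem for nonexpansive operators: the quasi-Fej\'er estimate $\|x_{n+1} - q\|^2 \le \|x_n - q\|^2 - \mu_n(1-\mu_n)\,\|T x_n - x_n\|^2$, valid for every $q \in \Fix T$, telescopes and, together with $\sum_n \mu_n(1-\mu_n) = +\infty$, forces $\|T x_n - x_n\| \to 0$ strongly; Fej\'er monotonicity of $(x_n)$ relative to $\Fix T$, the demiclosedness principle for $\id - T$, and Opial's lemma then yield weak convergence of $(x_n)$ to some $x \in \Fix T$. This delivers item (3), item (1) via the characterization above, and item (2), since $z_n - y_n = \tfrac12(T x_n - x_n) \to 0$ strongly.

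It remains to prove item (4), for which it suffices to show $y_n \rightharpoonup J_{\gamma B} x$, whence $z_n \rightharpoonup J_{\gamma B} x$ by item (2). The sequence $(y_n)$ is bounded, since $\|y_n - J_{\gamma B} x^*\| \le \|x_n - x^*\|$ is bounded by nonexpansiveness of $J_{\gamma B}$ and Fej\'er monotonicity; let $w$ be any weak cluster point, $y_{n_k} \rightharpoonup w$, and note $z_{n_k} \rightharpoonup w$ as well. From the inclusions $a_n \coloneqq x_n - y_n \in \gamma B y_n$ and $b_n \coloneqq 2 y_n - x_n - z_n \in \gamma A z_n$, which satisfy $a_n + b_n = y_n - z_n \to 0$ and $a_{n_k} \rightharpoonup x - w$, $b_{n_k} \rightharpoonup w - x$, I would pass to the limit in the graphs of the maximally monotone operators $A$ and $B$, exploiting their weak--strong sequential closedness together with monotonicity to control the cross terms, and conclude $x - w \in \gamma B w$ and $w - x \in \gamma A w$; these force $w = J_{\gamma B} x$, the unique possible cluster point, so $y_n \rightharpoonup J_{\gamma B} x$. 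I expect this last limiting argument — making the closedness of two maximally monotone graphs cooperate along a single coupled sequence, which is precisely where $a_n + b_n \to 0$ does the real work — to be the main obstacle; everything preceding it is routine once the iteration is rewritten in Krasnosel'skii--Mann form.
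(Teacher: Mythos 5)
The paper does not prove this theorem; it is quoted verbatim from Bauschke--Combettes (Thm.~25.6), and your proposal is precisely the standard proof given there: rewrite (DRS) as the Krasnosel'skii--Mann iteration $x_{n+1}=(1-\tfrac{\lambda_n}{2})x_n+\tfrac{\lambda_n}{2}R_{\gamma A}R_{\gamma B}x_n$, characterize $\Fix(R_{\gamma A}R_{\gamma B})$ via $J_{\gamma B}$, and invoke Fej\'er monotonicity, demiclosedness and Opial. Your identities ($Tx_n-x_n=2(z_n-y_n)$, $\lambda_n(2-\lambda_n)=4\mu_n(1-\mu_n)$, $x_n-y_n\in\gamma B y_n$, $2y_n-x_n-z_n\in\gamma A z_n$) all check out, and you correctly flag the only genuinely delicate step, the weak--weak limit passage for item (4): the way to make the two graphs ``cooperate'' is to add the monotonicity inequalities $\langle y_n-t,\,a_n-t^*\rangle\ge 0$ and $\langle z_n-s,\,b_n-s^*\rangle\ge 0$ and use $\langle y_n,a_n\rangle+\langle z_n,b_n\rangle=\langle y_n,a_n+b_n\rangle+\langle z_n-y_n,b_n\rangle\to 0$, then appeal to maximal monotonicity of the product operator $\gamma A\times\gamma B$ on $\mathcal{H}\times\mathcal{H}$ to conclude $x-w\in\gamma Bw$ and $w-x\in\gamma Aw$, hence $w=J_{\gamma B}x$. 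One minor point to make explicit when you ``reprove'' the KM theorem: the summability $\sum_n\mu_n(1-\mu_n)\|Tx_n-x_n\|^2<\infty$ together with $\sum_n\mu_n(1-\mu_n)=+\infty$ only gives $\liminf\|Tx_n-x_n\|=0$; you also need that $\|Tx_n-x_n\|$ is nonincreasing (which follows from nonexpansiveness of $T$ and the averaged update) to upgrade this to full strong convergence of the residual.
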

    Note that in the case where the Hilbert space $\mathcal{H}$ is finite-dimensional, weak convergence is equivalent to strong convergence. Letting $A$ and $B$ be proximal operators of some proper convex l.s.c. functions $f$ and $g$, we get convergence to a fixed point of $\prox_{f+g}$, using only proximal operators or subgradients of $f$ and $g$ separately. Further, the fixed point is a minimum of $f+g$. This is particularly useful wherein $f$ and $g$ have easy-to-compute proximals, while $f+g$ does not.

    By casting the above monotone inclusion problem in the scope of convex functions, with $A$ being a derivative and $B$ being a proximal operator, we can obtain splitting schemes that optimize the sum of two convex functions, where one of the functions is smooth. 

    \begin{theorem}[{Forward-Backward Splitting \cite[Cor. 27.9]{bauschke2011convex}}]\label{thm:FBS}
        Let $f:\mathcal{H}\rightarrow \R$ be convex and differentiable with $1/\beta$-Lipschitz gradient, and $g:\mathcal{H}\rightarrow \R$ be proper convex l.s.c. and possibly non-smooth. Let $\gamma \in (0,2\beta)$ and set $\delta = \min\{1,\,\beta/\gamma\}+1/2$. Further let $(\lambda_n)_{n \in \mathbb{N}}$ be a sequence in $[0,\delta]$ such that $\sum_{n \in \mathbb{N}} \lambda_n(\delta - \lambda_n) = +\infty$. Suppose that $f+g$ admits a minimizer and let $x_0 \in \mathcal{H}$. Then, the forward-backward iterations, given by 
        \begin{equation}\tag{FBS}
            \begin{cases}
                y_n = x_n - \gamma \nabla f(x_n), \\
                x_{n+1} = x_n + \lambda_n(\prox_{\gamma g}y_n - x_n),
            \end{cases}
        \end{equation}
        satisfy the following:
        \begin{enumerate}
            \item $(x_n)_{n \in \mathbb{N}}$ converges weakly to a point in $\argmin_{\mathcal{H}}(f+g)$;
            \item Suppose $\inf_n \lambda_n>0$ and $x \in \argmin_{\mathcal{H}}(f+g)$. Then $\nabla f(x_n)$ converges strongly to $\nabla f(x)$.
        \end{enumerate}
    \end{theorem}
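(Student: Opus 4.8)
The plan is to write the (FBS) recursion as a Krasnosel'skii--Mann iteration of a single \emph{averaged} operator and then invoke the general convergence theory for such iterations. Set $T \coloneqq \prox_{\gamma g}\circ(\id - \gamma \nabla f)$, so that the scheme becomes $x_{n+1} = (1-\lambda_n)x_n + \lambda_n T x_n$. The first step is to identify $\Fix T$ with $\argmin_{\mathcal{H}}(f+g)$. Since $g$ is proper convex l.s.c., $\prox_{\gamma g} = J_{\gamma \partial g}$, so $x = Tx$ is equivalent to $x - \gamma\nabla f(x) \in (\id + \gamma\partial g)(x)$, i.e. to $0 \in \nabla f(x) + \partial g(x)$; since $f$ is finite and continuous, the subdifferential sum rule gives $\nabla f(x) + \partial g(x) = \partial(f+g)(x)$, so this condition means precisely that $x$ minimizes $f+g$. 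The standing assumption that $f+g$ admits a minimizer thus yields $\Fix T \neq \emptyset$.

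The crux is to show that $T$ is $\delta^{-1}$-averaged, i.e. that $T = (1-\delta^{-1})\id + \delta^{-1} N$ for some nonexpansive $N$, with $\delta = \min\{1,\beta/\gamma\}+\tfrac12$. Two ingredients feed into this. First, $\prox_{\gamma g} = J_{\gamma\partial g}$ is firmly nonexpansive, hence $\tfrac12$-averaged, because $\partial g$ is maximally monotone and resolvents of maximally monotone operators are firmly nonexpansive (Proposition \cite[Cor. 23.10]{bauschke2011convex}). Second, by the Baillon--Haddad theorem a convex function with $1/\beta$-Lipschitz gradient has a $\beta$-cocoercive gradient; from this one shows that the forward step $\id - \gamma\nabla f$ is averaged for $\gamma\in(0,2\beta)$, being firmly nonexpansive (so $\tfrac12$-averaged) when $\gamma\le\beta$ and $\tfrac{\gamma}{2\beta}$-averaged when $\gamma\in(\beta,2\beta)$. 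The composition of an $\alpha_1$-averaged with an $\alpha_2$-averaged operator is again averaged, with a constant that is an explicit function of $\alpha_1,\alpha_2$; substituting $\alpha_1=\tfrac12$ together with the two cases for $\alpha_2$ and simplifying produces exactly $\delta^{-1}$. I expect this constant-chasing, rather than any conceptual difficulty, to be the main obstacle: one must use the sharp composition rule for averaged operators and carry out the case split on $\gamma$ versus $\beta$ carefully to land on the stated $\delta$ rather than a cruder but still valid bound.

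Once $T$ is known to be $\delta^{-1}$-averaged with $\Fix T \neq \emptyset$, claim (1) follows from the Krasnosel'skii--Mann theorem \cite[Thm. 5.15]{bauschke2011convex}: writing $T=(1-\delta^{-1})\id+\delta^{-1}N$, the iteration becomes a relaxed Mann iteration of the nonexpansive $N$ with parameters $\mu_n=\delta^{-1}\lambda_n\in[0,1]$ satisfying $\sum_n\mu_n(1-\mu_n)=\delta^{-2}\sum_n\lambda_n(\delta-\lambda_n)=+\infty$, so $(x_n)$ is Fej\'er monotone with respect to $\Fix N=\Fix T$, the residuals $Tx_n-x_n$ converge strongly to $0$, and $(x_n)$ converges weakly to some $x\in\Fix T=\argmin_{\mathcal{H}}(f+g)$; in finite dimensions weak convergence is strong convergence. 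For claim (2), I would revisit the averagedness estimate to extract a sharper Fej\'er-type inequality: combining $\beta$-cocoercivity of $\nabla f$ with firm nonexpansiveness of $\prox_{\gamma g}$ gives, for any $x\in\argmin(f+g)$, a bound of the form $\|x_{n+1}-x\|^2 \le \|x_n-x\|^2 - c\,\lambda_n\,\|\nabla f(x_n)-\nabla f(x)\|^2$ with $c>0$ depending on $\gamma,\beta$. Summing over $n$ and using $\inf_n\lambda_n>0$ forces $\sum_n\|\nabla f(x_n)-\nabla f(x)\|^2<\infty$, and hence $\nabla f(x_n)\to\nabla f(x)$ strongly.
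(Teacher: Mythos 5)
The paper offers no proof of this statement: it is quoted verbatim from Bauschke--Combettes (Cor.~27.9), so the only meaningful comparison is with the argument in that reference, and your proposal reproduces it essentially exactly -- fixed points of $T=\prox_{\gamma g}\circ(\id-\gamma\nabla f)$ are the minimizers via the sum rule, $T$ is averaged because the prox is firmly nonexpansive and $\id-\gamma\nabla f$ is averaged by Baillon--Haddad, weak convergence follows from Krasnosel'skii--Mann, and the strong convergence of $\nabla f(x_n)$ comes from a quantitative Fej\'er inequality with constant $\gamma(2\beta-\gamma)$. Two small points of hygiene rather than gaps: (i) the sharp composition rule does not yield \emph{exactly} $\delta^{-1}$ -- with $\alpha_1=\tfrac12$ and $\alpha_2=\tfrac{\gamma}{2\beta}$ it gives $\tfrac{2\beta}{4\beta-\gamma}$, which one then checks satisfies $\tfrac{2\beta}{4\beta-\gamma}\le\delta^{-1}=\tfrac{2}{2\min\{1,\beta/\gamma\}+1}$ for all $\gamma\in(0,2\beta)$, so $T$ is a fortiori $\delta^{-1}$-averaged (equality holds only at $\gamma=\beta$); (ii) since $\delta$ may exceed $1$, the descent inequality in your claim (2) cannot be obtained from convexity of $\|\cdot\|^2$ alone -- you need the exact identity $\|(1-\lambda)a+\lambda b\|^2=(1-\lambda)\|a\|^2+\lambda\|b\|^2-\lambda(1-\lambda)\|a-b\|^2$ and must absorb the extra term $\lambda_n(\lambda_n-1)\|Tx_n-x_n\|^2$ using the summability of $\lambda_n(\delta-\lambda_n)\|Tx_n-x_n\|^2$ already furnished by the Krasnosel'skii--Mann analysis. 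With those details filled in, the argument is correct.
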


    Note that by taking $\lambda_n = 1$, the FBS algorithm alternates between a proximal step on $g$ and a gradient descent step on $f$. Optimizing the sum of two convex functions where one is smooth arises naturally in variational regularization \cite{scherzer2009variational,kaipio2006statistical}. In this case, $f$ is usually chosen to be a smooth fidelity term, such as the $\ell_2^2$ penalty. As this minimization is usually ill-posed, a regularization term $g$ is added to the fidelity, representing a prior that is imposed on the data. 

\begin{example}[ISTA]
    Consider the case where our Hilbert space is finite-dimensional Euclidean space $\mathcal{H} = \R^n$. Let $A:\R^n \rightarrow \R^m$ be a bounded linear operator, and let $z \in \R^m$. The iterative shrinkage thresholding algorithm (ISTA) considers the optimization problem where $f(x) = \|Ax-z\|^2/2$, with an $\ell_1$ regularization \cite{daubechies2004iterative,gregor2010learning,beck2009fast}. This is used in sparse coding, where the $\ell_1$ penalty enforces sparsity on $x$ and is sometimes referred to as LASSO regression in the statistical literature \cite{tibshirani1996regression}. The resulting optimization problem to solve is
    \begin{equation*}
        \argmin_{x \in \R^n} f(x) + g(x) \coloneqq \frac{1}{2}\|Ax - z\|_2^2+ \lambda \|x\|_1,
    \end{equation*}
    where $\lambda>0$ is a regularization parameter. Note that this $f+g$ admits a minimizer since it is coercive and bounded below by 0. Moreover, $g$ is not differentiable, so first-order methods that rely on the gradient of $f+g$ are not applicable. We can, however, apply \Cref{thm:FBS} to obtain a (strongly) convergent scheme. We first observe that the proximal operator of $\lambda \|x\|_1$ is the coordinate-wise shrinkage operator, defined by 
    \begin{equation*}
        \prox_{\alpha \|x\|_1} = h_{\alpha}(x),\,\,\,\text{where}
    \end{equation*}
    \begin{equation*}
        [h_\alpha (x)]_i = \sign(x_i) \max(|x_i| - \alpha, 0).
    \end{equation*}
Observe that the proximal operator of $\|\cdot \|_1$ is straightforward to compute. Taking the step-sizes $\lambda_n = 1$ and $\gamma < 1/\|A^\top A\|$, ISTA reduces to the forward-backward scheme
    \begin{equation}\tag{ISTA}
            x_{n+1} = h_{(\lambda \gamma)}\left(x_n - \gamma \nabla f(x_n)\right).
    \end{equation}
\end{example}

\subsubsection{Pseudo-inverses}\label{sec:pseudoinverse}
For two Banach spaces $\mathbb{X}$ and $\mathbb{Y}$, a bounded linear operator $A \in \mathcal{L}(\mathbb{X}, \mathbb{Y})$ may not be invertible in the usual sense outside the range of $A$. Recall that for a bounded linear operator $A$, its null-space $\ker(A)$ is closed, and thus admits a unique orthogonal complement in $\mathbb{X}$. Moreover, $A$ restricted to $\ker(A)^\perp$ is injective and thus admits a linear inverse from $\range(A)$ to $\ker(A)^\perp$.

\begin{definition}
    For a linear operator $A\in \mathcal{L}(\mathbb{X}, \mathbb{Y})$, let $\tilde{A}$ denote the restriction of $A$ to $\ker(A)^\perp \subseteq \mathbb{X}$, where $\ker(A)$ is the null-space of $A$. Note that $\tilde{A}$ is invertible. The \emph{Moore-Penrose pseudo-inverse} $A^\dagger:\mathcal{D}(A^\dagger) \rightarrow \mathbb{X}$ is the unique linear extension of $\tilde{A}^{-1} : \range(A) \rightarrow \ker(A)^\perp$ to the domain
    \[\mathcal{D} (A^\dagger) \coloneqq \range(A) \oplus \range(A)^\perp,\]
    satisfying $\ker(A^\dagger) = \range(A)^\perp$.
\end{definition}
\begin{remark}
    If $\mathbb{X},\mathbb{Y}$ are finite-dimensional, $\mathcal{D}(A^\dagger) = \mathbb{Y}$. The Moore-Penrose pseudo-inverse is equivalent to linearly extending the inverse of $A$ from $\ker(A)^\perp$ to all of $\mathbb{Y}$ by defining $A^\dagger: \ker(A) \mapsto 0$. 
\end{remark}

\begin{proposition}[{\cite[Prop. 2.3]{engl1996regularization}}]
    The Moore-Penrose pseudo-inverse satisfies the following properties:
    \begin{enumerate}
        \item $A^\dagger A = \Pi_{\ker(A)^\perp}$,
        \item $A A^\dagger = \Pi_{\overline{\range(A)}} \vert_{\mathcal{D}(A^\dagger)}$,
        \item $A A^\dagger A = A$,
        \item $A^\dagger A A^\dagger = A^\dagger$.
    \end{enumerate}
\end{proposition}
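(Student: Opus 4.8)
The plan is to derive all four identities by directly unwinding the defining property of $A^\dagger$: it is the linear map that agrees with $\tilde{A}^{-1}$ on $\range(A)$ and vanishes on $\range(A)^\perp$, extended by linearity to $\mathcal{D}(A^\dagger) = \range(A) \oplus \range(A)^\perp$. Concretely, for $y \in \mathcal{D}(A^\dagger)$ decomposed as $y = y_0 + y_1$ with $y_0 \in \range(A)$ and $y_1 \in \range(A)^\perp$, one has $A^\dagger y = \tilde{A}^{-1} y_0$; and on the domain side, $A$ coincides with the invertible operator $\tilde{A}$ on $\ker(A)^\perp$ and annihilates $\ker(A)$. Every step below is an instance of these two facts together with the orthogonal decompositions $\mathbb{X} = \ker(A)^\perp \oplus \ker(A)$ and $\mathbb{Y} \supseteq \mathcal{D}(A^\dagger) = \range(A) \oplus \range(A)^\perp$.

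For the first identity, I would take an arbitrary $x \in \mathbb{X}$ and write $x = x_0 + x_1$ with $x_0 = \Pi_{\ker(A)^\perp} x$ and $x_1 \in \ker(A)$. Then $Ax = \tilde{A} x_0 \in \range(A)$, so $A^\dagger A x = \tilde{A}^{-1}(\tilde{A} x_0) = x_0 = \Pi_{\ker(A)^\perp} x$, which is the claim. For the second identity, I would take $y \in \mathcal{D}(A^\dagger)$ and decompose $y = y_0 + y_1$ as above; since $\range(A)^\perp = \overline{\range(A)}^\perp$ and $y_0 \in \overline{\range(A)}$, this decomposition is precisely the orthogonal splitting of $y$ along $\overline{\range(A)}$, so $\Pi_{\overline{\range(A)}} y = y_0$. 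Now $A^\dagger y = \tilde{A}^{-1} y_0 \in \ker(A)^\perp$, and applying $A$ — which restricts to $\tilde{A}$ on $\ker(A)^\perp$ — yields $A A^\dagger y = \tilde{A}(\tilde{A}^{-1} y_0) = y_0 = \Pi_{\overline{\range(A)}} y$.

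The remaining two identities then follow with no further computation. For $A A^\dagger A = A$: by the first identity $A^\dagger A x = \Pi_{\ker(A)^\perp} x$ for every $x$, and since $A$ annihilates $\ker(A)$ we get $A(\Pi_{\ker(A)^\perp} x) = A x$. For $A^\dagger A A^\dagger = A^\dagger$: by the second identity $A A^\dagger y = y_0 \in \range(A) \subseteq \mathcal{D}(A^\dagger)$ for every $y \in \mathcal{D}(A^\dagger)$, and since $A^\dagger$ vanishes on $\range(A)^\perp$ we have $A^\dagger y = A^\dagger y_0$, whence $A^\dagger(A A^\dagger y) = A^\dagger y_0 = A^\dagger y$.

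The point requiring care — rather than a genuine obstacle — is the domain and closedness bookkeeping. Because $\range(A)$ need not be closed, $\mathcal{D}(A^\dagger)$ is in general only a dense subspace of $\mathbb{Y}$ and $A^\dagger$ may be unbounded, so one must confirm that every argument fed to $A^\dagger$ in the computations above actually lies in $\mathcal{D}(A^\dagger)$; this holds because the decompositions used are exactly along $\range(A)$ (not its closure) and $A A^\dagger y = y_0 \in \range(A)$. One also implicitly uses that $\ker(A)^\perp$ and $\overline{\range(A)}$ are closed subspaces admitting orthogonal projections, which is exactly the structure the preceding definition already presupposes. Granting this, the four identities are immediate.
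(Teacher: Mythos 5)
Your verification is correct and complete: all four identities follow exactly as you compute them from the decompositions $\mathbb{X} = \ker(A)^\perp \oplus \ker(A)$ and $\mathcal{D}(A^\dagger) = \range(A) \oplus \range(A)^\perp$, and your care about $AA^\dagger y = y_0$ landing back in $\range(A) \subseteq \mathcal{D}(A^\dagger)$ is exactly the point that needs checking when $\range(A)$ is not closed. The paper states this proposition without proof, citing Engl--Hanke--Neubauer, and your argument is essentially the standard one given there, so there is nothing to add.
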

The Moore-Penrose inverse is not necessarily continuous. It is continuous if and only if $\range(A)$ is closed {\cite{engl1996regularization}}. Moreover, it can be very ill-conditioned if $A$ has small singular values. This explains why the direct inversion of inverse problems is unstable, necessitating the use of regularization techniques.

\subsection{Supervised versus unsupervised learning of reconstruction operators}\label{sec:learningStyles}
In this section, we outline different training strategies for learning a data-driven reconstruction operator for imaging inverse problems based on available training data. The specific training strategy adopted for a given problem depends on several practical considerations, such as the type and amount of available data, computational requirements, desired theoretical guarantees, etc. In general, supervised approaches tend to result in better empirical performance than unsupervised approaches, but it might be infeasible to acquire paired trained data for supervised learning in problems of practical interest.   
\subsubsection{Supervised learning}
In supervised learning, one seeks to learn a reconstruction map $G_{\theta}:\mathbb{Y}\rightarrow \mathbb{X}$, typically parameterized using a deep neural network (DNN), utilizing pairs of training examples $(x^{(i)},y^{(i)})_{i=1}^{N}$ drawn from the (unknown) joint density distribution of the $(\mathbb{X}\times \mathbb{Y})$-valued random variable $(\stx,\sty)$, where $\sty = A\stx+\stw$. The parameter $\theta$ is learned by minimizing the empirical reconstruction error measured using a suitable loss functional $\ell:\mathbb{X}\times \mathbb{X}\rightarrow \mathbb{R}^+$ over the training data set: 
\begin{equation}
    \theta^*\in\underset{\theta}{\argmin}\,J(\theta),\text{\,\,where\,\,}J(\theta):=\frac{1}{N}\sum_{i=1}^N \ell\left(x^{(i)},G_{\theta}(y^{(i)})\right).
    \label{eq:sup_training}
\end{equation}
The key challenge in supervised learning is to construct a suitable parameterization of the reconstruction operator $G_{\theta}$ such that it is sufficiently expressive and encodes knowledge about the data generation process (i.e., the forward operator $A$). To this end, several techniques have been proposed achieving remarkable performances in inverse problems reconstruction  \cite{zhu2018image, oh2018eter, postprocessing_cnn, chen2017low, gilton2019neumann, Schwab_2019_null_space, eldar_spm}. Here, we describe two specific ones that are relevant for the unsupervised methods treated in this chapter: (i)  \textit{post-processing} approaches \cite{postprocessing_cnn} and (ii) \textit{algorithm unrolling} (see \cite{eldar_spm} and references therein). The post-processing approach consists in designing $G_{\theta}$ as the composition $G_{\theta}=\mathcal{C}_{\theta}\circ \rho$, where a model-based reconstruction operator $\rho:\mathbb{Y}\rightarrow \mathbb{X}$ (e.g., the filtered back-projection (FBP) in CT) is followed by a deep convolutional neural network (CNN) $\mathcal{C}_{\theta}:\mathbb{X}\rightarrow \mathbb{X}$  that is trained to remove artifacts from $\rho(y)$. Since post-processing approaches do not fully incorporate the physics of the imaging system, they typically need large amounts of training data to generalize well on unseen data. Moreover, the final reconstructed image produced by a post-processing method does not necessarily satisfy \textit{data-consistency}, meaning that a small value of the fidelity $\ell(y,A\,\rho(y))$ corresponding to $\rho$ does not imply a small value of the fidelity $\ell(y,A\,\mathcal{C}_{\theta}(\rho(y)))$. 


The algorithm unrolling framework offers a more principled approach for incorporating imaging physics into the reconstruction operator. As the name suggests, algorithm unrolling builds the reconstruction operator by first \textit{unfolding} a small number of iterations of an optimization algorithm (such as proximal gradient descent (PGD)) for solving the variational image reconstruction problem \eqref{eq:var_reg}, and then by replacing the components that do not depend on the imaging process using learnable data-driven units. In the interest of concreteness, consider \eqref{eq:var_reg} where both $f$ and $R_{\lambda}$ are in $\Gamma_0(\mathbb{X})$, $\nabla f$ is $L_{\nabla f}$-Lipschitz continuous, but $R_{\lambda}$ is not necessarily differentiable. If $R_{\lambda}$ admits a cheaply computable proximal operator, a natural choice for solving \eqref{eq:var_reg} is the PGD algorithm given by
\begin{equation}
    x_{k+1} = \text{prox}_{\eta R_{\lambda}}\left(x_k - \eta\,\nabla f(y,Ax_k)\right), \,k=0,2,\cdots, N-1,
    \label{eq:basic_pgd}
\end{equation}
where $\eta\leq \frac{1}{L_{\nabla f}}$. For large-scale image reconstruction problems such as medical imaging, one would typically need a few thousand iterations of PGD to obtain a reasonable reconstruction, which could be unacceptably slow. The key idea behind algorithm unrolling is to truncate iterative optimization algorithms such as \eqref{eq:basic_pgd} after a small number of iterations (for example, $N\sim 10$), and replace the proximal operator with a CNN $\psi_{\theta_k}:\mathbb{X}\rightarrow \mathbb{X}$ for each $k$. The parameters $\theta=(\theta_k)_{k=1}^{N}$ are then learned by minimizing the empirical risk $J(\theta)$ on the training data set:
\begin{equation}
    J(\theta):=\frac{1}{N}\sum_{i=1}^{N}\ell\left(x^{(i)},x_N^{(i)}(\theta)\right),
    \label{eq:loss_unrolled_pgd}
\end{equation}
where $x_{k+1}^{(i)}:=\psi_{\theta_k}\left(x_k^{(i)} - \eta\,\nabla f\left(y^{(i)},Ax_k^{(i)}\right)\right), \,k=0,2,\cdots, N-1$. The origin of algorithm unrolling can be traced back to the seminal work by Gregor and LeCun on learned iterative shrinkage thresholding algorithms (LISTA) \cite{gregor2010learning} for efficient sparse coding. In recent years, such methods have been extensively developed and they currently offer performances able to achieve the state-of-art for supervised inverse problems reconstruction. We refer the interested reader to \cite{lpd_tmi, gilton2021deep, wu2019computationally, chun2020momentum, mehranian2020model,tang2022accelerating} and the references therein for further details on algorithm unrolling.
\subsubsection{Unsupervised learning}\label{sssec:unsupervised}
In contrast to \textit{supervised learning}, we will use the phrase \textit{unsupervised learning} to refer to any scenarios where one does not have access to paired training examples drawn from the joint distribution of $(\stx,\sty)$, but only on the marginal distributions of $\stx$ and $\sty$. From a practical perspective, unsupervised learning approaches are more realistic in real-world applications, as it is generally challenging to acquire paired examples for training reconstruction operators. For instance, the training data set in the image reconstruction problem in X-ray CT consists of high-quality reconstructed images $x^{(i)}$ obtained from high- or normal-dose projection data, and their corresponding low-dose projection data $y^{(i)}$. This is generally difficult to obtain, as it necessitates scanning a large number of subjects with two different doses, then aligning the respective scans voxel-wise to ensure exact correspondence between $x^{(i)}$ and $y^{(i)}$. 

Broadly, one might encounter the following three scenarios (or, some combinations thereof) in unsupervised learning so far as the training data is concerned. 
\begin{enumerate}[leftmargin=*]
    \item \textbf{Unpaired training examples}: In this setting, the training data consists of i.i.d. samples $(x^{(i)})_{i=1}^{N_1}$ and $(y^{(j)})_{j=1}^{N_2}$ drawn from the marginal distributions $\pi_{\stx}$ and $\pi_{\sty}$ of the ground-truth images and the measured data, respectively. Using only the knowledge of the marginal distributions $\pi_{\stx}$ and $\pi_{\sty}$, one aims at learning a correspondence between the probability distributions in the form of a reconstruction $G_\theta : \mathbb{Y} \rightarrow \mathbb{X}$ such that $(G_\theta)_{\#} \pi_{\sty} = \pi_{\stx}$. Additionally the reconstruction needs to satisfy \emph{data-consistency}, meaning that $y$ is close to $AG_\theta(y)$ for most of the samples $y$ from the marginal $\pi_{\sty}$.
    In Section \ref{sec:OTapproaches}, we will describe in details several approaches using unpaired training samples that are based on optimal transport techniques and cycle architectures. However, we point out that many other unsupervised methods based have been proposed in the literature. Such approaches are often based on conditional variants of generative models and on their inversion. We refer the interested reader to 
     \cite{wolterink2017generative, winkler2019learning, batzolis2021caflow, lugmayr2020srflow, mirza2014conditional, saharia2022palette, zhou2021cocosnet} and the references therein.\\
    \item \textbf{Learning the prior}: In many applications, one has only access to samples $(x^{(i)})_{i=1}^{N_1}$ from the distribution $\pi_{\stx}$ of the ground-truth images. In such cases, the primary objective is to utilize ideas from the generative machine learning approaches (such as generative adversarial networks (GANs), variational autoencoders (VAEs), etc.) to build a reasonable approximation of the image prior to regularize the inverse problem. Many approaches have been proposed to achieve this goal, based, for instance, on constructing a projection on the range of the pre-trained generator and approximating its inverse \cite{asim2020invertible, shah2018solving, xia2022gan, daras2021intermediate, bora2017compressed}.
    Plug-and-play (PnP) denoising methods (which we review in \Cref{sec:Reg_byPnP}) also fall in this category as they seek to implicitly learn a regularizer through an image denoiser. 
    \\  
    \item \textbf{Fully unsupervised approaches}: We will use this term to refer to the case where only i.i.d. samples $(y^{(j)})_{j=1}^{N_2}$ from the data distribution $\pi_{\sty}$ are available for training. These methods are essentially ground-truth-free, as do not make use of the true images during training. Among various approaches in this category, we provide a detailed treatment of the emerging \textit{learning-to-optimize} paradigm in \Cref{ssec:l2o}. These methods seek to learn a fast solver for high-dimensional convex optimization problems that arise frequently in inverse problems by leveraging training data (while not utilizing any ground-truth). Some notable methods in this category (such as unbiased risk estimation, deep image prior, equivariance, etc.) are briefly reviewed in \Cref{ssec:GTfreeIR}. 
\end{enumerate}

\section{Optimal transport-based unsupervised approaches}\label{sec:OTapproaches}
In recent years, optimal transport-based methods have been extensively used to address unsupervised data-driven tasks such as image generation \cite{wgan_main,wgan_gp}, domain adaptation, image-to-image translation, and image super-resolution. Unsurprisingly, many inverse problems in areas such as medical imaging, geophysics, and fluid dynamics have benefited from such methods in terms of both modeling capabilities and the efficiency of the available algorithms. In the subsequent sections, we will illustrate several optimal transport-based unsupervised approaches for inverse problems, ultimately aiming to draw a connection between them.

\subsection{Cycle-GAN--based approaches to unsupervised learning}
We start by addressing methods that are based on cyclic models. Inspired by Cycle-GAN \cite{cyclegan_zhu2020unpaired}, such approaches are particularly suited for inverse problems in the case of unsupervised data since they allow enforcing a coupling between ground-truth images and measurements through a cycle-consistency penalty. Optimal transport metrics have been incorporated into these models, allowing for more stable training.

\subsubsection{Wasserstein generative adversarial networks (WGANs)}
Before addressing cycle-based approaches, we recall classical generative models, with a particular focus on the ones based on optimal transport techniques. Wasserstein generative adversarial networks (WGANs) \cite{wgan_main,wgan_gp} have incorporated optimal transport techniques for image generation, achieving performance superior to that of traditional generative adversarial networks (GANs) \cite{goodfellow2014generative}, while ensuring a more stable training for high dimensional data-sets while mitigating the problem of mode-collapse \cite{che2016mode}. Denoting by $\pi_\mathbf{v} \in \mathcal{P}(\mathbb{V})$ a known latent distribution in $\mathbb{V}$ (which can be easily sampled) and $\pi_\mathbf{x} \in \mathcal{P}(\mathbb{X})$ the unknown ground-truth distribution, Wasserstein GANs aim to construct a generator $G_\theta: \mathbb{V} \rightarrow \mathbb{X}$ by minimizing the $1$-Wasserstein distance between $(G_\theta)_{\#}\pi_{\mathbf{v}}$ and $\pi_{\mathbf{x}}$, i.e.
\begin{align}\label{eq:wgan}
\min_\theta  W_1\left((G_\theta)_{\#}\pi_{\mathbf{v}}, \pi_{\mathbf{x}}\right),
\end{align}
where $G_\theta$ is typically parameterized by a suitable DNN.
Applying the dual formulation of the $1$-Wasserstein distance, c.f. \eqref{eq:dual}, the objective in \eqref{eq:wgan} can be equivalently rewritten as 
\begin{align}\label{eq:wgan2}
    \min_\theta \sup_{g \in {\mathrm {Lip}}_1(\mathbb{X})} \int_{\mathbb{X}} g(x)\, \mathrm{d}(G_\theta)_{\#}\pi_{\mathbf{v}} - \int_{\mathbb{X}} g(x)\, \mathrm{d}\pi_{\mathbf{x}}.
\end{align}
By expressing the constraint $g \in {\mathrm {Lip}}_1(\mathbb{X})$ as a penalization in the objective, and applying the definition of push-forward of probability measures, \eqref{eq:wgan2} can be approximated by the following min-max problem 
\begin{align}\label{eq:obwgan}
    \min_\theta \,\sup_{\sigma} \int_{\mathbb{V}} g_\sigma(G_\theta(v))\, \mathrm{d}\pi_{\mathbf{v}} & - \int_{\mathbb{X}} g_\sigma(x)\, \mathrm{d} \pi_{\mathbf{x}} + \lambda \int_{\mathbb{X}} (|\nabla g_\sigma|(\hat x) - 1)^2_+\, \mathrm{d}\hat \pi,
\end{align}
where $g_\sigma: \mathbb{X} \rightarrow \R$ is parametrized by a suitable DNN, $\lambda >0$ is a positive parameter and $\hat \pi \in \mathcal{P}(\mathbb{X})$ is defined by sampling uniformly on the lines connecting samples of $\pi_{\mathbf{x}}$ and samples of $(G_\theta)_{\#}\pi_{\mathbf{v}}$. The network $g_{\sigma}$ is referred to as the \textit{discriminator} or the \textit{critic}, since, during training, it learns to tell apart the ground-truth images from the generated ones. The training is performed by optimizing \eqref{eq:obwgan} computed on the empirical approximation $\pi_{\mathbf{x}} \sim \frac{1}{N_1} \sum_{i=1}^{N_1} \delta_{x^{(i)}}$ and $\pi_{\mathbf{v}} \sim \frac{1}{N_2} \sum_{i=1}^{N_2} \delta_{v^{(i)}}$, where $(x^{(i)})_{i=1}^{N_1}$ are the training samples and $(v^{(i)})_{i=1}^{N_2}$ are samples drawn from $\pi_{\mathbf{v}}$. 
From a theoretical point of view, the objective of WGAN closely resembles the classical GAN objective
\begin{align}\label{eq:gan}
    \min_\theta\,\, \sup_\sigma \int_{\mathbb{V}} \log(1 - d_\sigma (G_\theta(v))\, \mathrm{d}\pi_{\mathbf{v}} + \int_{\mathbb{X}} \log(d_\sigma(x))\, \mathrm{d}\pi_{\mathbf{x}},
\end{align}
where $G_\theta: \mathbb{V} \rightarrow \mathbb{X}$ is the generator and $d_\sigma : \mathbb{X} \rightarrow \R$ is the discriminator. Indeed, \eqref{eq:wgan} and \eqref{eq:gan} are both expressed as an adversarial min-max problem, with the substantial difference that optimizing \eqref{eq:gan} is equivalent to minimizing the Jensen-Shannon divergence between $\pi_{\mathbf{x}}$ and $(G_\theta)_\#\pi_{\mathbf{v}}$. Since WGAN aims to minimize the $1$-Wasserstein distance, the considerations of Section \ref{sec:notionsdistances} apply, justifying why WGAN is more stable for learning high-dimensional data distributions supported on lower dimensional manifolds \cite{wgan_main}.

We conclude this section by mentioning that many optimal transport-based generative models besides the WGAN framework are available in the literature. We will not focus on them here; however, we refer the interested reader to \cite{patrini2020sinkhorn, tolstikhin2017wasserstein, genevay2018learning, wu2019sliced} and the references therein.
\subsubsection{Cycle-GAN--based approaches for inverse problems}
Classical GANs and WGANs are both characterized by the simultaneous training of a generator $G_\theta: \mathbb{V} \rightarrow \mathbb{X}$ mapping a low-dimensional latent space to a high-dimensional data space, and a discriminator mapping $\mathbb{X}$ to $\R$. Cycle-GAN was introduced in \cite{cyclegan_zhu2020unpaired} to address unsupervised image-to-image translation between two data sets in $\mathbb{X}$ and $\mathbb{Y}$. This has been achieved by coupling the action of two generators $H_\sigma: \mathbb{X} \rightarrow \mathbb{Y}$ and $G_\theta: \mathbb{Y} \rightarrow \mathbb{X}$ that are trained to achieve cycle-consistency by enforcing that $H_\sigma (G_\theta(y)) \approx y$ and $G_\theta (H_\sigma(x)) \approx x$ for samples in $\pi_{\mathbf{y}}$ and $\pi_{\mathbf{x}}$, where $\pi_{\mathbf{x}}$ and $\pi_{\mathbf{y}}$ are the data distributions in $\mathbb{X}$ and $\mathbb{Y}$ respectively. Moreover, the generators are trained together with two discriminators $d^\mathbb{X}_{\tilde \theta} : \mathbb{X} \rightarrow \R$   and $d^\mathbb{Y}_{\tilde \sigma} : \mathbb{Y} \rightarrow \R$ designed to ensure that $(H_\sigma)_\# \pi_{\mathbf{x}} = \pi_{\mathbf{y}}$ and $(G_\theta)_\# \pi_{\mathbf{y}} = \pi_{\mathbf{x}}$ through a GAN objective. This model is schematically represented in Figure \ref{fig:cyclegan}.
As noticed in \cite{cyclegan_zhu2020unpaired}, cycle-consistency in cycle-GAN architectures can be seen as a way to regularize the optimal pair of generators  $H_\sigma$, $G_\theta$ by enforcing the validity of a transitivity property. This allows, in the training phase, to reduce the pairs of generators such that $(H_\sigma)_\# \pi_{\mathbf{x}} = \pi_{\mathbf{y}}$ and $(G_\theta)_\# \pi_{\mathbf{y}} = \pi_{\mathbf{x}}$, favoring a faster and more stable training.
\begin{figure}
    \centering
\begin{tikzpicture}[scale=1]
\draw [draw=black] (0,0) rectangle (2,1) node[pos=.5] {$\mathbb{X}$};
\draw [-stealth](1,1) -- (1,1.5) node[above]{$d^\mathbb{X}_{\tilde \theta}$}; 
\draw [-stealth](2,0.8) -- (4,0.8) node[pos=.5, above] {$H_\sigma$};
\draw [draw=black] (4,1) rectangle (6,0) node[pos=.5] {$\mathbb{Y}$}; 
\draw [-stealth](4,0.2) -- (2,0.2)node[pos=.5, below] {$G_\theta$};
\draw [-stealth](5,1) -- (5,1.5) node[above]{$d^\mathbb{Y}_{\tilde \sigma}$}; 
\end{tikzpicture}
\caption{\centering Schematic representation of a cycle-GAN model}\label{fig:cyclegan}
\end{figure}
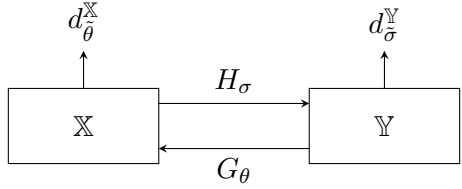

The objective of cycle-GAN is given by the sum of two GAN losses together with the cycle-consistency loss:
\begin{align}\label{eq:cgan}
    \min_{\theta,\sigma}\, \max_{\tilde \theta, \tilde \sigma}\, \alpha\,\mathcal{L}^\mathbb{X}_{GAN}(G_\theta, d^{\mathbb{X}}_{\tilde \theta}) + \beta  \mathcal{L}^\mathbb{Y}_{GAN}(H_\sigma, d^{\mathbb{Y}}_{\tilde \sigma}) + \mathcal{L}_{cycle}(H_{\sigma}, G_{\sigma}),
\end{align}
where $\alpha$ and $\beta$ are positive parameters and 
\begin{align}
    \mathcal{L}^\mathbb{X}_{GAN}(G_\theta, d^{\mathbb{X}}_{\tilde \theta}) &= \int_\mathbb{Y} \log(1 - d^{\mathbb{X}}_{\tilde \theta} (G_\theta(y))\, \mathrm{d}\pi_{\mathbf{y}} + \int_\mathbb{Y} \log(d^{\mathbb{X}}_{\tilde \sigma}(y))\, \mathrm{d}\pi_{\mathbf{y}}, \nonumber\\
    \mathcal{L}^\mathbb{Y}_{GAN}(H_\sigma, d^{\mathbb{Y}}_{\tilde \sigma}) &= \int_\mathbb{X} \log(1 - d^{\mathbb{Y}}_{\tilde \sigma} (H_\sigma(x))\, \mathrm{d}\pi_{\mathbf{x}} + \int_\mathbb{X} \log(d^{\mathbb{Y}}_{\tilde \theta}(x))\, \mathrm{d}\pi_{\mathbf{x}},\,\,\,\text{and}\nonumber\\
\mathcal{L}_{cycle}(H_\sigma, G_{\theta})  &= \int_\mathbb{X} \|G_{\theta}(H_\sigma(x)) - x\|_1\,  \mathrm{d}\pi_{\mathbf{x}} + \int_\mathbb{Y} \|H_{\sigma}(G_\theta(y)) - y\|_1\,  \mathrm{d}\pi_{\mathbf{y}} \label{eq:cycle}.
\end{align}
The training is performed by optimizing \eqref{eq:cgan} computed on the empirical approximations $\pi_{\mathbf{x}} \sim \frac{1}{N} \sum_{i=1}^N \delta_{x^{(i)}}$ and $\pi_{\mathbf{y}} \sim \frac{1}{M} \sum_{i=1}^M \delta_{y^{(i)}}$, where $(x^{(i)})_{i=1}^{N_1},(y^{(i)})_{i=1}^{N_2}$ are training samples from $\mathbb{X}$ and $\mathbb{Y}$ respectively. It is important to note here that the method is unsupervised since the training samples are unpaired, i.e., $y^{(i)}$ does not necessarily correspond to the noisy measurement of $x^{(i)}$. This allows for more flexible models that do not require balanced samples. Moreover, it offers methods able to address more realistic real-world applications, since it is generally difficult and expensive to acquire paired samples.

Despite the original cycle-GAN approach being designed mainly for image-to-image translation, several of its variants have been proposed to address different tasks in an unsupervised framework, such as CT-reconstruction \cite{kang2019cycle}, super-resolution \cite{yuan2018unsupervised}, and conditional image generation \cite{lu2018attribute}, to name a few. However, the successful application of cycle-GAN-based models to inverse problems has remained problematic, primarily due to the following reasons:
\begin{enumerate}
    \item It is unclear how to introduce the knowledge of the forward operator into the model.
    \item Cycle-GAN is a symmetric architecture and struggles to take into account the potential difference in complexity between data $\mathbf{x}$ and measurements  $\mathbf{y}$.
\end{enumerate}
\subsubsection{Optimal transport and cycle-consistency combined}\label{sec:OTcycle}
To address the difficulties stated above in an unsupervised setting, new models based on optimal transport methods have been proposed in \cite{cyclegan_unsup_jong, uar_neurips2021}. In \cite{cyclegan_unsup_jong}, a cycle-GAN architecture was adapted to the $1$-Wasserstein loss by coupling two generators $G_\theta$ and $H_\sigma$, trained as in WGAN to minimize 
\begin{align}
    W_1((H_\sigma)_\# \pi_{\mathbf{x}} , \pi_{\mathbf{y}}) \quad  \text{and} \quad W_1((G_\theta)_\# \pi_{\mathbf{y}},,\pi_{\mathbf{x}})
\end{align}
together with a cycle-consistency loss (c.f. Figure \ref{fig:wgancycle}). This leads to the training objective
\begin{figure}[t!]
    \centering
\begin{tikzpicture}[scale=1]
\draw [draw=black] (0,0) rectangle (2,1) node[pos=.5] {$\mathbb{X}$};
\draw [-stealth](1,1) -- (1,1.5) node[above]{$g^\mathbb{X}_{\tilde \theta}$}; 
\draw [-stealth](2,0.8) -- (4,0.8) node[pos=.5, above] {$H_\sigma$};
\draw [draw=black] (4,1) rectangle (6,0) node[pos=.5] {$\mathbb{Y}$}; 
\draw [-stealth](4,0.2) -- (2,0.2)node[pos=.5, below] {$G_\theta$};
\draw [-stealth](5,1) -- (5,1.5) node[above]{$g^{\mathbb{Y}}_{\tilde \sigma}$}; 
\end{tikzpicture}
\caption{\centering Schematic representation of a cycle-WGAN model}\label{fig:wgancycle}
\end{figure}
\begin{align}\label{eq:obb}
    \min_{\theta,\sigma}\, \max_{\tilde \theta, \tilde \sigma} \, \alpha\,\mathcal{L}^{\mathbb{X}}_{W}(G_\theta, g^{\mathbb{X}}_{\tilde \theta}) +  \beta\mathcal{L}^{\mathbb{Y}}_{W}(H_\sigma, g^{\mathbb{Y}}_{\tilde \sigma}) + \mathcal{L}_{cycle}(H_{\sigma}, G_{\theta}),
\end{align}
where $\alpha,\beta$ are positive parameters, with
\begin{align*}
    \mathcal{L}^{\mathbb{X}}_{W}(G_\theta, g^{\mathbb{X}}_{\tilde \theta}) = \int_\mathbb{Y} g^{\mathbb{X}}_{\tilde \theta}(G_\theta(y))\, \mathrm{d}\pi_{\mathbf{y}} - \int_\mathbb{X} g^{\mathbb{X}}_{\tilde \theta}(x)\, \mathrm{d} \pi_{\mathbf{x}} + \lambda \int_{\mathbb{X}} (|\nabla g^{\mathbb{X}}_{\tilde \theta}|(\hat x) - 1)^2_+\, \mathrm{d}\hat \pi^{\mathbb{X}},\\
   \mathcal{L}^{\mathbb{X}}_{W}(H_\sigma, g^{\mathbb{Y}}_{\tilde \sigma}) = \int_\mathbb{Y} g^{\mathbb{Y}}_{\tilde \sigma}(H_\sigma(x))\, \mathrm{d}\pi_{\mathbf{x}} - \int_\mathbb{X} g^{\mathbb{Y}}_{\tilde \sigma}(y)\, \mathrm{d} \pi_{\mathbf{y}} + \lambda \int_{\mathbb{Y}} (|\nabla g^{\mathbb{Y}}_{\tilde \sigma}|(\hat y) - 1)^2_+\, \mathrm{d}\hat \pi^{\mathbb{Y}},
\end{align*}
where $\lambda >0$, $\hat \pi^{\mathbb{X}}$ and $\hat \pi^\mathbb{Y}$ are defined as in \eqref{eq:obwgan},   and $\mathcal{L}_{cycle}(G_{\theta}, H_{\sigma})$ is as in \eqref{eq:cycle}. 

It is important to note that the training objective in \eqref{eq:obb} is symmetric in $\mathbb{X}$ and $\mathbb{Y}$, and it is not designed to capture a statistical relationship between $\stx$ and $\sty$.
In the works of \cite{cyclegan_unsup_jong} and \cite{uar_neurips2021}, \eqref{eq:obb} has been accordingly modified to include the knowledge of the inverse problem data acquisition process $\sty = A \stx + \stw$, where $A$ is the measurement operator defined in \eqref{eq:measurement_eq}, and $\sty$ and $\stx$ are the random variables representing ground-truth and noisy measurements.
To this end, \cite{cyclegan_unsup_jong, uar_neurips2021} adapt \eqref{eq:obb} by fixing one of the two generators $H_\sigma$ and $G_\theta$ to be either $A$ or its pseudo-inverse $A^\dagger$ (see also Figure \ref{fig:wgancycle}). At the cost of limiting the expressivity of the cycle architecture, this choice introduces the data acquisition process in the model leading to great benefits in the form of higher stability in the training phase and better data consistency.
Alternatively, it is also possible to assume additional structure on the measurement operators, without fixing it, for example prescribing that the measurement is an unknown convolutional operator of the type $\mathcal{H}_\sigma(x) = h_\sigma \star x$ for a parameterized 
family of convolutional kernels $h_\sigma$ (c.f. Figure \ref{fig:cycletype}). 
\begin{figure}[h!]
\begin{minipage}[c]{0.49\textwidth}
\centering
\begin{tikzpicture}[scale=1]
\draw [draw=black] (0,0) rectangle (2,1) node[pos=.5] {$\mathbb{X}$};
\draw [-stealth](1,1) -- (1,1.5) node[above]{$g^{\mathbb{X}}_{\tilde \theta}$}; 
\draw [-stealth](2,0.8) -- (4,0.8) node[pos=.5, above] {$A$};
\draw [draw=black] (4,1) rectangle (6,0) node[pos=.5] {$\mathbb{Y}$}; 
\draw [-stealth](4,0.2) -- (2,0.2)node[pos=.5, below] {$G_\theta$};
\end{tikzpicture}
\end{minipage} \ \ 
\begin{minipage}[c]{0.49\textwidth}
\centering
\begin{tikzpicture}[scale=1.1]
\draw [draw=black] (0,0) rectangle (2,1) node[pos=.5] {$\mathbb{X}$};
\draw [-stealth](2,0.8) -- (4,0.8) node[pos=.5, above] {$H_\sigma$};
\draw [draw=black] (4,1) rectangle (6,0) node[pos=.5] {$\mathbb{Y}$}; 
\draw [-stealth](4,0.2) -- (2,0.2)node[pos=.5, below] {$A^\dagger$};
\draw [-stealth](5,1) -- (5,1.5) node[above]{$g^\mathbb{Y}_{\tilde \sigma}$}; 
\end{tikzpicture}
\end{minipage}

\centering
\begin{tikzpicture}[scale=1.1]
\draw [draw=black] (0,0) rectangle (2,1) node[pos=.5] {$\mathbb{X}$};
\draw [-stealth](1,1) -- (1,1.5) node[above]{$g^{\mathbb{X}}_{\tilde \theta}$}; 
\draw [-stealth](2,0.8) -- (4,0.8) node[pos=.5, above] {$\mathcal{H}_\sigma$};
\draw [draw=black] (4,1) rectangle (6,0) node[pos=.5] {$\mathbb{Y}$}; 
\draw [-stealth](4,0.2) -- (2,0.2)node[pos=.5, below] {$G_\theta$};
\draw [-stealth](5,1) -- (5,1.5) node[above]{$g^{\mathbb{Y}}_{\tilde \sigma}$}; 
\end{tikzpicture}
\caption{Schematic representation of WGAN-cycle--type models. On the top left: the generator $H_\sigma: \mathbb{X} \rightarrow \mathbb{Y}$ is chosen to be the measurement operator $A$. On the top right: the generator $G_\theta: \mathbb{Y} \rightarrow \mathbb{X}$ is chosen to be the pseudo-inverse $A^\dagger$. On the bottom: the generator $\mathcal{H}_\sigma : \mathbb{X} \rightarrow \mathbb{Y}$ is parametrized by a convolutional operator $\mathcal{H}_\sigma(x) = h_\sigma \star x$.}\label{fig:cycletype}
\end{figure}
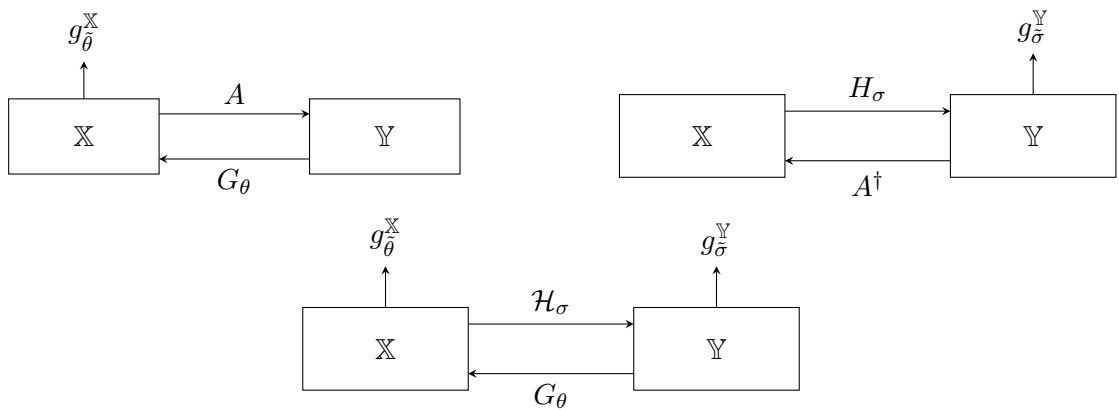

\noindent The objective \eqref{eq:obb} can be adapted in several ways depending on how the data acquisition process has been incorporated. For instance, when only the measurement operator $A$ is prescribed, given suitable losses $f_1 : \mathbb{X} \times \mathbb{X} \rightarrow \R_+$, $f_2 : \mathbb{Y} \times \mathbb{Y} \rightarrow \R_+$, the cycle-loss can be written as 
\begin{align}\label{eq:lo1}
\mathcal{L}_{\rm cycle}(G_{\theta})  = \int_\mathbb{X} f_1(G_{\theta}(A(x)) , x)\,  \mathrm{d}\pi_{\mathbf{x}} + \int_\mathbb{Y} f_2(A(G_\theta(y)) , y)\,  \mathrm{d}\pi_{\sty},
\end{align}
as in \cite{cyclegan_unsup_jong}, or alternatively as
\begin{align}\label{eq:lo2}
\widetilde{\mathcal{L}}_{\rm cycle}(G_{\theta})  = \int_\mathbb{X} f_1(G_{\theta}(A(x)) , x)\,  \mathrm{d}\pi_{\mathbf{x}}, \ \ \overline{\mathcal{L}}_{\rm cycle}(G_{\theta})  = \int_\mathbb{Y} f_2(A(G_\theta(y)) , y)\,  \mathrm{d}\pi_{\sty},
    \end{align}
as in \cite{uar_neurips2021}. In particular, the choice of $\widetilde{\mathcal{L}}_{\rm cycle}$ leads to the Unrolled Adversarial Regularizer (UAR) introduced in \cite{uar_neurips2021}. All these models are trained by computing the objective on the empirical approximation $\pi_{\mathbf{x}} \sim \frac{1}{N_1} \sum_{i=1}^{N_1} \delta_{x^{(i)}}$ and $\pi_{\sty} \sim \frac{1}{N_2} \sum_{i=1}^{N_2} \delta_{y^{(i)}}$ where  $(x^{(i)})_{i=1}^{N_1},(y^{(i)})_{i=1}^{N_2}$ are training samples from $\mathbb{X}$ and $\mathbb{Y}$. It is important to note here that the training samples are unpaired, i.e., they are sampled from the marginal distributions of the ground-truth images and the data, and not from their joint distribution. This is a striking difference compared to standard supervised methods such as U-net post-processing \cite{postprocessing_cnn} and the learned primal-dual (LPD) method \cite{lpd_tmi}. 

\begin{figure}[h!]
\centering
\begin{subfigure}{.32\textwidth}
\includegraphics[width=1.85in]{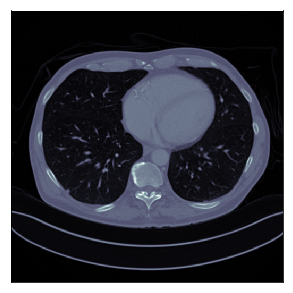}
\centering\vskip-0.3\baselineskip
  {\small Ground-truth}
\end{subfigure}
\begin{subfigure}{.32\textwidth}
	\includegraphics[height=1.85in]{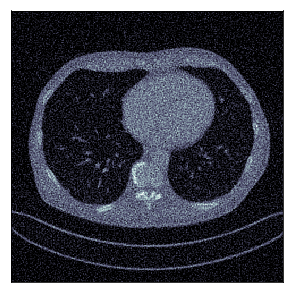}
 \centering\vskip-0.3\baselineskip
  {\small FBP: 21.59, 0.24}
 \end{subfigure}
 \begin{subfigure}{.32\textwidth}
	\includegraphics[width=1.85in]{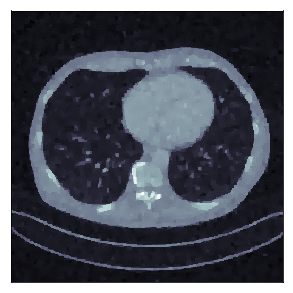}
 \centering\vskip-0.3\baselineskip
  {\small TV: 29.16, 0.77}
 \end{subfigure}
 \\
	\begin{subfigure}	{.32\textwidth}	\includegraphics[width=1.85in]{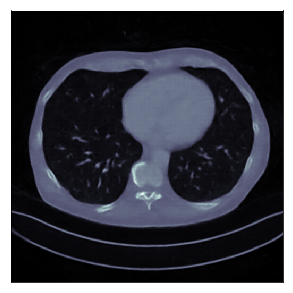}
  \centering\vskip-0.3\baselineskip
  {\small U-net: 32.69, 0.87}
	\end{subfigure}
	\begin{subfigure}{.32\textwidth}
 \includegraphics[width=1.85in]{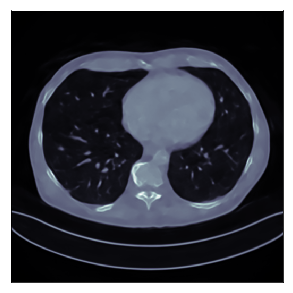}
 \centering\vskip-0.3\baselineskip
  {\small LPD: 34.05, 0.89}
	\end{subfigure}
		\begin{subfigure}{.32\textwidth}
  \includegraphics[width=1.85in]{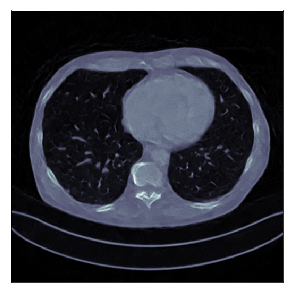}
  \centering\vskip-0.3\baselineskip
  {\small UAR: 32.80, 0.86}
		\end{subfigure}		
  \caption{CT reconstructions on Mayo Clinic data using model-based (FBP, TV), supervised (U-net, LPD), and unsupervised (UAR) methods. The PSNR and SSIM metrics are reported for each reconstruction.}
  \label{fig:uarexp}
\end{figure}
In Figure \ref{fig:uarexp}, we show the experimental results obtained in \cite{uar_neurips2021}, where UAR is applied to produce tomographic reconstructions on the Mayo Clinic low-dose CT grand challenge data-set of abdominal CT scans \cite{mayo_ct_challenge}, whose sinograms are corrupted by Gaussian noise.
We compare UAR to model-based approaches such as the classical filtered back-projection (FBP) and total variation (TV) regularization. Additionally, we choose LPD \cite{lpd_tmi} and U-net post-processing \cite{postprocessing_cnn} as representative supervised methods for inverse problems.

Different choices of the cycle-consistency loss enforce different transitivity properties on the pair $(G_\theta, H_\sigma)$ affecting the reconstruction. For example, the cycle-loss \eqref{eq:lo1} imposes a much stronger constraint on the reconstruction compared to \eqref{eq:lo2}, potentially undermining the expressive power of high-dimensional neural networks. Moreover, the choice of the parameters $\alpha$ and $\beta$ in \eqref{eq:obb} that regulate the strength of the cycle-loss penalization have a strong impact on the reconstruction. This has been analyzed in \cite{uar_neurips2021} for the case where $H_\sigma = A$ and $\ell(z) = \|z\|_2^2$, showing that when $\alpha$ is small, then the reconstruction is very realistic in the sense that $W_1(\pi_{\mathbf{x}},(G_\theta)_{\#}\pi_{\sty}) \approx 0$, but the measurement-consistency cannot be ensured. Similarly, when $\alpha$ is large, even if cycle consistency is ensured, the reconstruction is not guaranteed to lie in the data manifold (see Figure \ref{eq:differentalpha}).

\begin{figure*}[h!]
	\centering
	\begin{subfigure}{.23\textwidth}
 \includegraphics[height=1.45in]{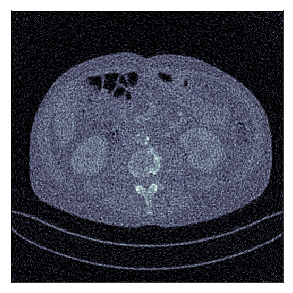}
 \centering\vskip-0.3\baselineskip
  {\scriptsize $\alpha$=0.001: 21.60, 0.21}
	\end{subfigure}	    
\begin{subfigure}{.23\textwidth}
\includegraphics[width=1.45in]{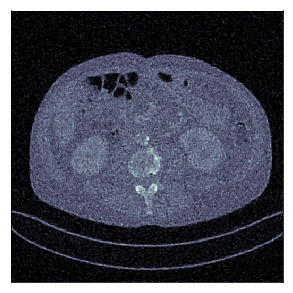}
\centering\vskip-0.3\baselineskip
  {\scriptsize $\alpha$=0.01: 25.33, 0.37}
    \end{subfigure}
\begin{subfigure}{.23\textwidth}
\includegraphics[height=1.45in]{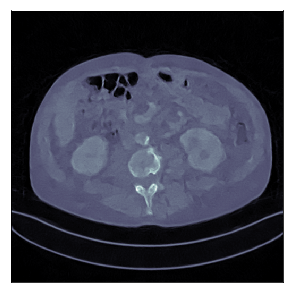}
\centering\vskip-0.3\baselineskip
  {\scriptsize $\alpha$=0.1: 34.65, 0.88}
\end{subfigure}
	\begin{subfigure}{.23\textwidth}
 \includegraphics[width=1.45in]{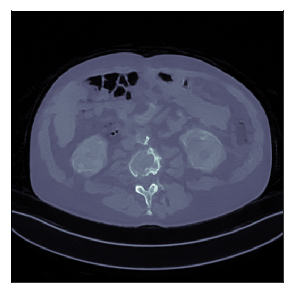}
 \centering\vskip-0.3\baselineskip
  {\scriptsize $\alpha$=1.0: 33.96, 0.88}
	\end{subfigure}
	\caption{Reconstruction of UAR for different $\alpha$. For $\alpha\rightarrow 0$, the unrolled generator (reconstruction operator) seeks to find the minimizer of the expected data-fidelity loss, hence the reconstruction looks similar to FBP.} \label{eq:differentalpha}
 \end{figure*}
\noindent These observations have been formalized in \cite{uar_neurips2021} in the form of the following theorem.
\begin{theorem}
Under suitable assumptions on $\theta$ and $G_\theta$ (see \cite[Section 3]{uar_neurips2021} for more details) the following statements hold:
 \begin{enumerate} 
        \item As $\alpha\rightarrow 0$, $G_\theta\rightarrow G_{\theta_1^*}$ (up to subsequences), where 
        \begin{align}
        \theta_1^*\in \underset{\theta: {\Large\int_{\mathbb{Y}} } \left\|y - AG_\theta(y)\right\|^2_2 \, \mathrm{d}\pi_{\sty} = 0}{\argmin}\,\, W_1(\pi_{\mathbf{x}},(G_\theta)_{\#}\pi_{\mathbf{y}}).
        \end{align}
        \item As $\alpha\rightarrow \infty$, $G_\theta\rightarrow G_{\theta_2^*}$ (up to subsequences), where 
        \begin{align}
        \theta_2^* \in \underset{\theta: (G_\theta)_\# \pi_{y^\delta} = \pi_{\mathbf{x}}}{\argmin}\,\, \int_{\mathbb{Y}} \left\|y-AG_\theta(y)\right\|^2_2\, \mathrm{d}\pi_{\sty}.
        \end{align}
\end{enumerate}
\end{theorem}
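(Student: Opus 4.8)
The plan is to read the final statement as a vanishing‑parameter selection result of $\Gamma$‑convergence / Tikhonov‑regularization type. I would work with the idealized bilevel objective in which the inner maximization is carried out exactly, so that by Kantorovich--Rubinstein duality \eqref{eq:dualwass} (the gradient‑penalty term in \eqref{eq:obwgan} being only a relaxation of the hard $1$‑Lipschitz constraint that, at the optimum, recovers $W_1$) the training problem with weight $\alpha$ reads
\[
\min_{\theta}\ F_\alpha(\theta) := \alpha\, W_1\!\big(\pi_{\mathbf{x}}, (G_\theta)_{\#}\pi_{\mathbf{y}}\big) + D(\theta),
\]
where $D(\theta) := \int_{\mathbb{Y}} \|y - AG_\theta(y)\|_2^2\, \mathrm{d}\pi_{\mathbf{y}}(y)$ is the expected data fidelity. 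The assumptions on $\theta$ and $G_\theta$ (as in \cite[Section 3]{uar_neurips2021}) are invoked to provide: (i) existence of a minimizer $\theta_\alpha$ of $F_\alpha$ for each $\alpha>0$; (ii) precompactness of $\{\theta_\alpha\}$ in a topology strong enough that, along any $\alpha_n\downarrow 0$ (resp. $\alpha_n\uparrow\infty$), a subsequence has $\theta_{\alpha_n}\to\bar\theta$ with $G_{\theta_{\alpha_n}}\to G_{\bar\theta}$ $\pi_{\mathbf{y}}$‑a.e.; and (iii) nonemptiness of $\mathcal{Z}_D := \{\theta : D(\theta)=0\}$ and $\mathcal{Z}_W := \{\theta : (G_\theta)_{\#}\pi_{\mathbf{y}}=\pi_{\mathbf{x}}\}$. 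From (ii) one gets, by dominated convergence on bounded continuous test functions, narrow convergence $(G_{\theta_{\alpha_n}})_{\#}\pi_{\mathbf{y}}\to (G_{\bar\theta})_{\#}\pi_{\mathbf{y}}$, hence lower semicontinuity of $\theta\mapsto W_1(\pi_{\mathbf{x}},(G_\theta)_{\#}\pi_{\mathbf{y}})$ (l.s.c. of $W_1$ under narrow convergence) and, via Fatou and continuity of $A$, of $\theta\mapsto D(\theta)$.

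For the first claim, fix any $\theta'\in\mathcal{Z}_D$. Minimality gives $D(\theta_\alpha)\le F_\alpha(\theta_\alpha)\le F_\alpha(\theta') = \alpha\, W_1(\pi_{\mathbf{x}},(G_{\theta'})_{\#}\pi_{\mathbf{y}}) \to 0$, so by lower semicontinuity of $D$ the subsequential limit obeys $D(\bar\theta)\le\liminf_n D(\theta_{\alpha_n}) = 0$, i.e. $\bar\theta\in\mathcal{Z}_D$. Next, dividing the chain $\alpha\, W_1(\pi_{\mathbf{x}},(G_{\theta_\alpha})_{\#}\pi_{\mathbf{y}}) \le F_\alpha(\theta_\alpha)\le F_\alpha(\theta') = \alpha\, W_1(\pi_{\mathbf{x}},(G_{\theta'})_{\#}\pi_{\mathbf{y}})$ by $\alpha>0$ and taking $\liminf$ (using l.s.c. of the Wasserstein term) yields $W_1(\pi_{\mathbf{x}},(G_{\bar\theta})_{\#}\pi_{\mathbf{y}}) \le W_1(\pi_{\mathbf{x}},(G_{\theta'})_{\#}\pi_{\mathbf{y}})$ for every $\theta'\in\mathcal{Z}_D$; thus $\bar\theta$ belongs to the argmin set defining $\theta_1^*$. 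The second claim is symmetric after rescaling: applying the same two‑step comparison to $F_\alpha(\theta)/\alpha = W_1(\pi_{\mathbf{x}},(G_\theta)_{\#}\pi_{\mathbf{y}}) + \tfrac{1}{\alpha}D(\theta)$ against a fixed $\theta'\in\mathcal{Z}_W$ first forces $W_1(\pi_{\mathbf{x}},(G_{\theta_\alpha})_{\#}\pi_{\mathbf{y}})\le\tfrac{1}{\alpha}D(\theta')\to 0$, hence $\bar\theta\in\mathcal{Z}_W$, and then $D(\theta_\alpha)\le D(\theta')$ for all such $\theta'$, hence $\bar\theta$ minimizes $D$ over $\mathcal{Z}_W$, which is the set defining $\theta_2^*$.

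The two substantive points, both folded into the theorem's hypotheses on $\theta$ and $G_\theta$, are: (A) that the stated hypotheses on the generator class genuinely force $\mathcal{Z}_D$ and $\mathcal{Z}_W$ to be nonempty -- without this the conclusions must be weakened to convergence to a minimizer of $W_1$ over $\argmin D$ (resp. of $D$ over $\argmin W_1$), since the comparison arguments above only ever compare against competitors $\theta'$ with $D(\theta')=0$ or $W_1=0$; and (B) that the topology in which $\{\theta_\alpha\}$ is precompact is fine enough to deliver $\pi_{\mathbf{y}}$‑a.e. convergence of $G_\theta$, which is the only nontrivial input to the semicontinuity of the two functionals and hence the natural place for the neural‑network regularity assumptions to enter. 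Existence and precompactness of $\theta_\alpha$ themselves would then be discharged by the direct method exactly as in the existence‑of‑minimizers theorem stated earlier in the excerpt (bounded below, coercivity/compactness of the parameter set, and lower semicontinuity of $F_\alpha$). I expect (A)--(B) to be the main obstacle, the rest being a routine two‑sided inequality chase.
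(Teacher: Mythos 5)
Your argument is correct and is exactly the standard Tikhonov/$\Gamma$-convergence selection argument that the cited source \cite{uar_neurips2021} uses for this result; the survey itself states the theorem without proof, deferring to that reference. Your two-sided comparison against competitors in $\{D=0\}$ (resp.\ $\{(G_\theta)_\#\pi_{\sty}=\pi_{\mathbf{x}}\}$) together with lower semicontinuity and precompactness is the intended proof, and your observations (A)--(B) correctly identify where the "suitable assumptions on $\theta$ and $G_\theta$" must do their work.
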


A cycle-GAN-style approach for unsupervised learning of unrolled operators referred to as the \textit{adversarially learned primal-dual} (ALPD), was introduced in \cite{adv_lpd_ssvm2021} and was analyzed under the lenses of variational inference. In particular, in \cite{adv_lpd_ssvm2021} the following objective was considered
\begin{align}
\min_{\theta}\,\operatorname{KL}((G_{\theta})_{\#}\pi_{\sty},\pi_{\mathbf{x}}) + C_1\int_{\mathbb{Y}} & \|y-A(G_{\theta}(y))\|_2^2 \, \mathrm{d}\pi_{\sty}\nonumber \\
& + C_2\int_{\mathbb{X}}\|x-G_{\theta}(A(x))\|_2^2\, \mathrm{d}\pi_{\mathbf{x}}.
    \label{overall_ml_final}
\end{align}
It was demonstrated in \cite{adv_lpd_ssvm2021} that under appropriately defined statistical models for $\pi_\mathbf{x}$ and $\pi_{\sty}$ and suitably chosen constants $C_1$ and $C_2$, the maximum likelihood estimate of the parameter $\theta$ leads to the training objective in \eqref{overall_ml_final}. Moreover, replacing the ${\rm KL}$ divergence term with the 1-Wasserstein distance leads to a training loss that is identical to the one proposed in \cite{cyclegan_unsup_jong} in the special case where the forward operator is known. The reconstructed images using the trained model $G_{\theta}$ are shown in Figures \ref{ct_image_figure_shepplogan} and \ref{ct_image_figure_mayo2} for the Shepp-Logan phantom and the low-dose Mayo CT images, respectively. Both experiments reveal that an adversarially trained unrolled operator as proposed in \cite{adv_lpd_ssvm2021} does a better job of preserving the image textures better than unrolled operators trained in a supervised manner using the standard squared error loss. This behavior is consistent with the fact that supervised approaches trained by minimizing the $\ell_2^2$ error effectively produce an approximation to the posterior mean of the image conditioned on the data, which is inherently an averaging operator, unlike a likelihood maximization approach.  
\begin{figure*}[t]
\begin{minipage}[t]{\textwidth}
  \centering
  \vspace{0pt}  
  \includegraphics[width=0.28\textwidth]{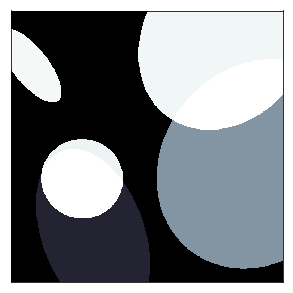}
  \qquad
  \includegraphics[width=0.28\textwidth,trim=30 27 0 0,clip,angle=90]{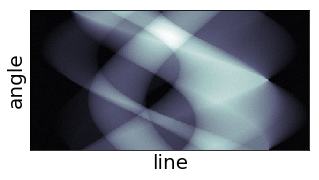}
  \vskip-0.1\baselineskip
  {\small Example of training data: image and its corresponding projection data (sinogram)}
\end{minipage}	
\\[1em] 
\begin{minipage}[t]{0.24\textwidth}
  \centering
  \vspace{0pt}  
  \includegraphics[width=\linewidth]{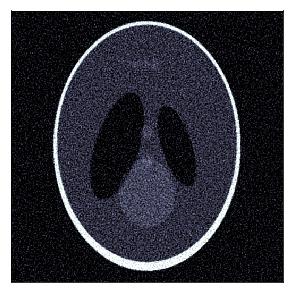}
  \vskip-0.5\baselineskip
  {\scriptsize FBP: 19.51 dB, 0.13}
\end{minipage}	
\begin{minipage}[t]{0.24\textwidth}
  \centering
  \vspace{0pt}  
  \includegraphics[width=\linewidth]{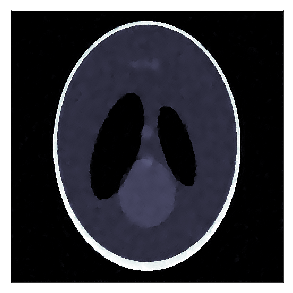}
  \vskip-0.5\baselineskip
  {\scriptsize TV: 29.18 dB, 0.84}
\end{minipage}	
\begin{minipage}[t]{0.24\textwidth}
  \centering
  \vspace{0pt}  
  \includegraphics[width=\linewidth]{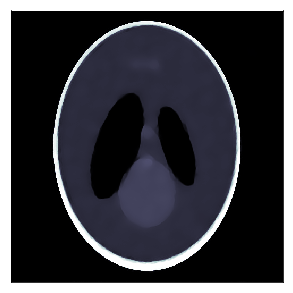}
  \vskip-0.5\baselineskip
  {\scriptsize LPD: 27.89 dB, 0.96}
\end{minipage}	
\begin{minipage}[t]{0.24\textwidth}
  \centering
  \vspace{0pt}  
  \includegraphics[width=\linewidth]{ALPD_cycleGAN_images/adv_lpd_5x5_filters_15layers_shepplogan_001}
  \vskip-0.5\baselineskip
  {\scriptsize ALPD: 28.27 dB, 0.90}
\end{minipage}	
\caption{\small{Comparison of supervised and unsupervised training on the Shepp-Logan phantom. The PSNR (dB) and SSIM are indicated below the images. ALPD does a better job of alleviating over-smoothing, unlike its supervised variant (LPD).}}
\label{ct_image_figure_shepplogan}
\end{figure*}
\begin{figure*}[t]
\begin{minipage}[t]{0.32\textwidth}
  \centering
  \vspace{0pt}
  \begin{tikzpicture}[spy using outlines={circle,red,magnification=4.0,size=1.50cm, connect spies}]   
    \node {\includegraphics[width=\linewidth]{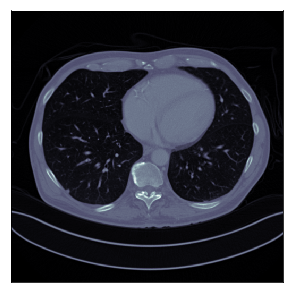}};
    \spy on (0.02,-0.68) in node [left] at (1.9,1.25);
    \spy on (-0.54,-0.76) in node [left] at (-0.4,1.25);  
  \end{tikzpicture}
  \vskip-0.5\baselineskip
  {\small Ground-truth}
\end{minipage}
\begin{minipage}[t]{0.32\textwidth}
  \centering
  \vspace{0pt}
  \begin{tikzpicture}[spy using outlines={circle,red,magnification=4.0,size=1.50cm, connect spies}]   
    \node {\includegraphics[width=\linewidth]{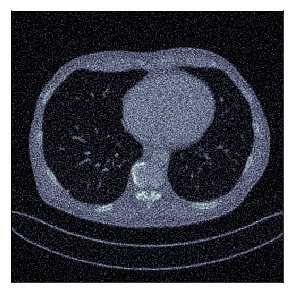}};    
    \spy on (0.02,-0.68) in node [left] at (1.9,1.25);
    \spy on (-0.54,-0.76) in node [left] at (-0.4,1.25);  
  \end{tikzpicture}
  \vskip-0.5\baselineskip  
  {\small FBP: 21.63 dB, 0.24}
\end{minipage}
\begin{minipage}[t]{0.32\textwidth}
  \centering
  \vspace{0pt}
  \begin{tikzpicture}[spy using outlines={circle,red,magnification=4.0,size=1.50cm, connect spies}]   
    \node {\includegraphics[width=\linewidth]{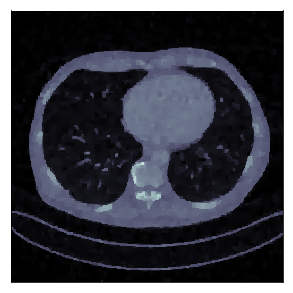}};
    \spy on (0.02,-0.68) in node [left] at (1.9,1.25);
    \spy on (-0.54,-0.76) in node [left] at (-0.4,1.25);  
  \end{tikzpicture}
  \vskip-0.5\baselineskip  
  {\small TV: 29.25 dB, 0.79}
\end{minipage}
\\[1em] 
\begin{minipage}[t]{0.32\textwidth}
  \centering
  \vspace{0pt}
  \begin{tikzpicture}[spy using outlines={circle,red,magnification=4.0,size=1.50cm, connect spies}]   
    \node {\includegraphics[width=\linewidth]{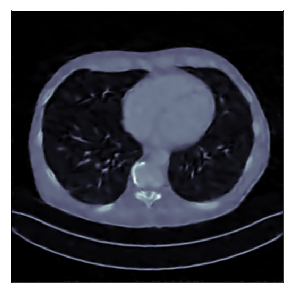}};
    \spy on (0.02,-0.68) in node [left] at (1.9,1.25);
    \spy on (-0.54,-0.76) in node [left] at (-0.4,1.25);  
  \end{tikzpicture}
  \vskip-0.5\baselineskip
  {\small AR: 31.83 dB, 0.84}
\end{minipage}
\begin{minipage}[t]{0.32\textwidth}
  \centering
  \vspace{0pt}
  \begin{tikzpicture}[spy using outlines={circle,red,magnification=4.0,size=1.50cm, connect spies}]   
    \node {\includegraphics[width=\linewidth]{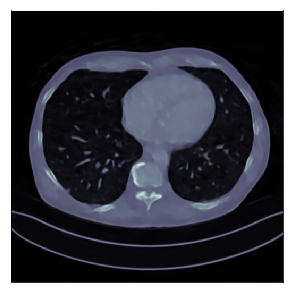}};    
    \spy on (0.02,-0.68) in node [left] at (1.9,1.25);
    \spy on (-0.54,-0.76) in node [left] at (-0.4,1.25);  
  \end{tikzpicture}
  \vskip-0.5\baselineskip  
  {\small LPD: 33.39 dB, 0.88}
\end{minipage}
\begin{minipage}[t]{0.32\textwidth}
  \centering
  \vspace{0pt}
  \begin{tikzpicture}[spy using outlines={circle,red,magnification=4.0,size=1.50cm, connect spies}]   
    \node {\includegraphics[width=\linewidth]{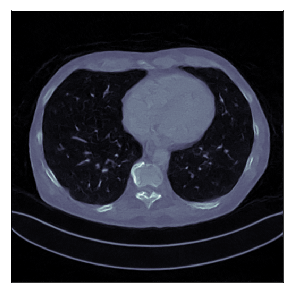}};
    \spy on (0.02,-0.68) in node [left] at (1.9,1.25);
    \spy on (-0.54,-0.76) in node [left] at (-0.4,1.25);  
  \end{tikzpicture}
  \vskip-0.5\baselineskip  
  {\small ALPD: 32.48 dB, 0.84}
\end{minipage}
\caption{\small{Comparison of ALPD with some classical model- and data-driven reconstruction methods on the Mayo Clinic data. The corresponding PSNR (dB) and SSIM are indicated below the images and the key differences in the reconstructed images are highlighted. The ALPD reconstruction is visibly sharper as compared to LPD, enabling easier identification of clinically important features.}}
\label{ct_image_figure_mayo2}
\end{figure*}
\subsection{Adversarial regularization}
Another notable alternative approach to include a learned regularization in the reconstruction process is to first learn an explicit regularization functional in \eqref{eq:var_reg} and to solve the resulting variational problem subsequently. One such option is to learn an \textit{adversarial regularizer}, which was first proposed and analyzed in \cite{ar_nips} and subsequently specialized to adversarial convex regularizers in \cite{acr_arxiv}. Here, the construction of a data-driven regularization is inspired by how discriminative networks (also referred to as \textit{critics}, similarly as in the generative machine learning literature) are trained in the WGAN framework.

To train such an adversarial regularizer, we assume to have $(x^{(i)})_{i=1}^{N_1}\in {\mathbb{X}}$ and $(y^{(j)})_{j=1}^{N_2}\in {\mathbb{Y}}$, which are i.i.d. samples from the marginal distributions $\pi_\stx$ and $\pi_{\sty}$ of ground-truth images and measurement data, respectively. 
Additionally, we assume that there exists a (potentially regularizing) pseudo-inverse $\ForwardOp^\dagger \colon {\mathbb{Y}} \to {\mathbb{X}}$ to the forward operator $\ForwardOp$ and define the measure ${\pi}_\dagger \in \mathcal{P}(\mathbb{X})$ as ${\pi}_\dagger:= \ForwardOp^\dagger_\# (\pi_{\sty})$. Then, the idea of adversarial regularization is to train a regularizer $\RegFunc_{\NNparam}$, parametrized by a neural network, to discriminate between the distributions $\pi_{\mathbf{x}}$ and ${\pi}_\dagger$, i.e. between the distribution of ground-truth images and the distribution of imperfect solutions $\ForwardOp^\dagger y_i$ (i.e., images with noise and artifacts).
More concretely, we compute
\begin{equation}\label{eq:ARGenericTraining}
	R_{\widehat{\sigma}} \colon {\mathbb{X}} \to \Real 
	\quad\text{where}\quad
	\widehat{\sigma} \in \argmin_{\sigma} L(\sigma),
\end{equation}
where $L(\sigma)$ is chosen as
\begin{align}\label{eq:ARob}
    L(\sigma) & =   \int_\mathbb{X} R_\sigma(x)\, {\mathrm d}\pi_{\mathbf{x}}  -  \int_\mathbb{X} R_\sigma(x)\, {\mathrm d}\pi_\dagger  - \lambda \int_\mathbb{X}  \bigr(\|\nabla R_{\sigma}(x)\| - 1 \bigr)_+^2\, {\mathrm d}\hat \pi \nonumber\\
    & =    \int_\mathbb{X} R_\sigma(x)\, {\mathrm d}\pi_{\mathbf{x}} - \int_\mathbb{X} R_\sigma(A^\dagger y)\, {\mathrm d}\pi_{\sty}  - \lambda \int_\mathbb{X}  \bigr(\|\nabla R_{\sigma}(x)\| - 1 \bigr)_+^2\, {\mathrm d}\hat \pi.
\end{align}
Here, $\hat{\pi} \in \mathcal{P}(\mathbb{X})$ is defined by sampling uniformly on the lines connecting samples of $\pi_{\mathbf{x}}$ and samples of $\pi_\dagger$. The heuristic behind this choice is that a regularizer trained this way will penalize noise and artifacts generated by the pseudo-inverse (and contained in ${\pi}_\dagger$). From a theoretical point of view, one can notice that the minimum of  \eqref{eq:ARob} approximates the $1$-Wasserstein distance $W_1(\pi_{\mathbf{x}}, \pi_\dagger)$ between $\pi_{\mathbf{x}}$ and $\pi_\dagger$. Moreover, the optimal $R_{\widehat \sigma}$ approximates the Kantorovich potential for the $1$-Wasserstein distance as defined in Section \ref{sec:dualw}. In particular, the Kantorovich potential for $W_1(\pi_{\mathbf{x}}, \pi_\dagger)$ turns out to be a good regularizer for the given inverse problem.
The resulting regularizer $R_{\widehat{\sigma}}$ is called an adversarial regularizer (AR). In practical applications, the measures $\pi_{\mathbf{x}}, {\pi}_\dagger \in \mathcal{P}(\mathbb{X})$ are replaced with their empirical counterparts given by the training data samples $x_i$ and $\ForwardOp^\dagger y_i$, respectively. Suppose, one computes a gradient step on the learned regularizer, given by $x_\eta=x-\eta\,\nabla_x R_{\widehat{\sigma}}(x)$, starting from $x$ drawn according to $\pi_{\dagger}$. Let ${\pi}^\eta_\dagger$ be the distribution of $x_\eta$. Under appropriate regularity assumptions on the $1$-Wasserstein distance $W_1({\pi}^\eta_\dagger,\pi_\stx)$ (see \cite[Theorem 1]{ar_nips}), one can show that 
\begin{align}\label{eq:wdecay}
    	\frac{\mathrm{d}}{\mathrm{d}\eta} W_1({\pi}^\eta_\dagger,\pi_\stx) |_{\eta=0} = - \int_\mathbb{X} \|\nabla_xR_{\widehat{\sigma}}(x)\|^2\, {\mathrm d} \pi_\dagger.
\end{align}
    This ensures that by taking a small enough gradient step, one can reduce the $1$-Wasserstein distance from the ground-truth $\pi_{\mathbf{x}}$. This is a good indicator that using $R_{\widehat{\sigma}}$ as a variational regularization term and consequently penalizing it implicitly aligns the distribution of regularized solutions with the distribution $\pi_{\mathbf{x}}$ of ground-truth samples. 
Further, one can show that if the AR is Lipschitz-continuous\footnote{1-Lipschitz continuity is approximately enforced by the gradient penalty term in \eqref{eq:ARob}. However, this does not guarantee that the AR is Lipschitz continuous. This property can be instead enforced by choosing the right network architecture. Indeed, all convolutional neural networks with ReLU activations are Lipschitz continuous for some Lipschitz constant $L$, which, albeit, might be arbitrarily large.}, then for a given noisy measurement $y^\delta \in \mathbb{Y}$, a minimizer of the variational problem 
\begin{equation}\label{eq:ARvarcoerc}
	f(y^\delta,A x) + \lambda \left(R_{\widehat{\sigma}}(x) + \epsilon\|x\|_{\mathbb{X}}^2\right),
\end{equation}
exists, where the squared norm on $x$ is needed to enforce coercivity. 
\subsubsection{Adversarial convex regularizer (ACR)}
The adversarial regularizer $R_{\widehat \sigma}$ trained in \eqref{eq:ARGenericTraining} is typically non-convex, due to a typical DNN parameterization of $R_\sigma$. Nevertheless, it is possible to enforce (strong) convexity on $R_\sigma$, leading to the \textit{adversarial convex regularizer} (ACR). The ACR allows for achieving stronger forms of convergence than its non-convex predecessor while precluding discontinuities in the reconstruction operator. This necessitates a suitable parameterization of the learned regularizer. One such option to impose convexity on $R_{\widehat{\NNparam}}$ is to use input convex neural networks \cite{amos2017input}. 
We refer to \cite{acr_arxiv} for more details on the parameterization of ACRs. Given a so-constructed (and adversarially trained) ACR (denoted as $R_{\widehat{\sigma}}$) that is convex in $x$, one then considers a regularization functional of the form
\begin{equation}
	R(x) = R_{\widehat{\sigma}}(x) + \epsilon \left\|x\right\|_{{\mathbb{X}}}^2,
	\label{eq:ACR}
\end{equation}
where $R_{\widehat{\sigma}}:{\mathbb{X}}\rightarrow\mathbb{R}$ is the trained ACR, which we assume to be 1-Lipschitz besides being convex in $x$. The corresponding variational regularization problem then entails minimizing the regularized energy
\begin{equation}
	f(y^\delta,\ForwardOp x)+\lambda R(x),
	\label{eq:var_loss_maindef1}
\end{equation}
with respect to $x\in{\mathbb{X}}$. In this setting, we get the following set of improved theoretical guarantees for the ACR, by following standard arguments in variational calculus.
\begin{theorem}[Properties of Adversarial Convex Regularizers \cite{acr_arxiv}] \label{thm:ACR} \hfill
	\begin{enumerate}
		\item Existence and uniqueness:
		      The functional in \eqref{eq:var_loss_maindef1} is strongly convex in $x$ and has a unique minimizer $\widehat{x}_{\lambda}\left(y\right)$ for every $y\in {\mathbb{Y}}$ and $\lambda>0$. 		      	         
		\item Stability: The optimal solution $\widehat{x}_{\lambda}\left(y\right)$ is continuous in $y$.
  
		\item  Convergence: 
		      For $\delta\rightarrow 0$ and $\lambda(\delta) \rightarrow 0$ such that $\displaystyle\frac{\delta}{\lambda(\delta)}\rightarrow 0$, we have that $\widehat{x}_{\lambda}\left(y^{\delta}\right)$ converges to the $R$-minimizing solution $x^{\dagger}$ given by
        \begin{equation*}
            x^{\dagger} \in \underset{x\in \mathbb{X}}{\argmin}\,R(x)\quad \text{\,\,subject to\,\,}\quad y^0=Ax.
            \label{eq:x_min_sol_acr}
        \end{equation*}
	\end{enumerate}
\end{theorem}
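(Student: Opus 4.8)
Write the regularized energy as $E^y_\lambda(x) := f(y,Ax) + \lambda R(x)$ with $R(x) = R_{\widehat\sigma}(x) + \epsilon\|x\|_{\mathbb{X}}^2$ as in \eqref{eq:ACR}--\eqref{eq:var_loss_maindef1}, working under the standing assumptions that $f(y,\cdot)$ is convex, continuous, nonnegative and vanishes exactly at its first argument, and that $\mathbb{X}$ is a Hilbert (or reflexive) space so that the abstract direct-method existence theorem stated earlier applies with $\tau_{\mathbb{X}}$ the weak topology. First I would note that $x\mapsto f(y,Ax)$ is convex (convex $f(y,\cdot)$ composed with the affine $x\mapsto Ax$), $R_{\widehat\sigma}$ is convex, and $\epsilon\|x\|^2$ is $2\epsilon$-strongly convex, so $E^y_\lambda$ is strongly convex; this immediately yields uniqueness of any minimizer. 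For existence I would verify coercivity and weak lower semicontinuity: the $1$-Lipschitz bound gives $R_{\widehat\sigma}(x)\ge R_{\widehat\sigma}(0)-\|x\|$, hence $E^y_\lambda(x)\ge \lambda\bigl(\epsilon\|x\|^2-\|x\|+R_{\widehat\sigma}(0)\bigr)\to+\infty$ and is bounded below; weak lower semicontinuity follows from continuity of $f$, weak-to-weak continuity of the bounded operator $A$, weak lower semicontinuity of the convex l.s.c.\ $R_{\widehat\sigma}$, and of $\|\cdot\|^2$. The existence theorem then gives the (unique) minimizer $\widehat x_\lambda(y)$.

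\textbf{Stability.} For $y_n\to y$ in $\mathbb{Y}$ set $x_n := \widehat x_\lambda(y_n)$. Comparing $E^{y_n}_\lambda(x_n)\le E^{y_n}_\lambda(0)$ and using boundedness of $f(y_n,0)$, coercivity gives a uniform bound on $\|x_n\|$. I would then exploit strong convexity quantitatively: since $0\in\partial E^y_\lambda(\widehat x_\lambda(y))$, one has $\lambda\epsilon\|x_n-\widehat x_\lambda(y)\|^2 \le E^y_\lambda(x_n) - E^y_\lambda(\widehat x_\lambda(y))$. Now $E^y_\lambda(x_n)-E^{y_n}_\lambda(x_n) = f(y,Ax_n)-f(y_n,Ax_n)\to 0$ uniformly over the bounded set $\{Ax_n\}$ (continuity of $f$ in its first slot, uniform on bounded sets — immediate for the squared-norm fidelity), while $E^{y_n}_\lambda(x_n)\le E^{y_n}_\lambda(\widehat x_\lambda(y))\to E^y_\lambda(\widehat x_\lambda(y))$. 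Chaining these bounds gives $\limsup_n E^y_\lambda(x_n)\le E^y_\lambda(\widehat x_\lambda(y))$, hence $\|x_n-\widehat x_\lambda(y)\|\to 0$, i.e.\ norm-continuity of $y\mapsto\widehat x_\lambda(y)$.

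\textbf{Convergence.} The constraint set $C:=\{x:Ax=y^0\}$ is a nonempty closed affine subspace, on which the strongly convex coercive $R$ attains a unique minimizer $x^\dagger$ by the same direct-method argument. For $\|y^\delta-y^0\|\le\delta$, optimality of $\widehat x_\lambda(y^\delta)$ gives $f(y^\delta,A\widehat x_\lambda(y^\delta)) + \lambda R(\widehat x_\lambda(y^\delta)) \le f(y^\delta,Ax^\dagger) + \lambda R(x^\dagger) = f(y^\delta,y^0) + \lambda R(x^\dagger)$. Using the standing rate $f(y^\delta,y^0)\le c\,\delta^2$, dividing by $\lambda$ and letting $\delta\to0$ with $\delta/\lambda(\delta)\to0$ yields $\limsup_\delta R(\widehat x_\lambda(y^\delta))\le R(x^\dagger)$; dropping the $R$ term instead gives $f(y^\delta,A\widehat x_\lambda(y^\delta))\to0$. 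Coercivity plus the $R$-bound make $(\widehat x_\lambda(y^\delta))_\delta$ bounded, so along any subsequence there is a weak limit $\bar x$; weak lower semicontinuity of $f(y^0,A\cdot)$ together with $f(y^\delta,A\widehat x_\lambda(y^\delta))\to0$ and $y^\delta\to y^0$ forces $f(y^0,A\bar x)=0$, i.e.\ $A\bar x=y^0$, so $\bar x\in C$; weak lower semicontinuity of $R$ then gives $R(x^\dagger)\le R(\bar x)\le\liminf R(\widehat x_\lambda(y^\delta))\le R(x^\dagger)$, hence $R(\bar x)=R(x^\dagger)$ and $\bar x=x^\dagger$ by uniqueness. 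Subsequence-independence upgrades this to weak convergence of the whole family, and since $R(\widehat x_\lambda(y^\delta))\to R(x^\dagger)$ with $R=R_{\widehat\sigma}+\epsilon\|\cdot\|^2$, the elementary fact that $a_n+b_n\to a+b$, $\liminf a_n\ge a$, $\liminf b_n\ge b$ imply $a_n\to a$, $b_n\to b$ gives $\|\widehat x_\lambda(y^\delta)\|^2\to\|x^\dagger\|^2$, which with weak convergence in the Hilbert setting yields $\widehat x_\lambda(y^\delta)\to x^\dagger$ strongly.

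\textbf{Main obstacle.} The algebra above is routine; the delicate points are all function-analytic. One must fix the precise topology on $\mathbb{X}$ (Hilbert, or reflexive with the Radon--Riesz property) under which bounded sequences have convergent subsequences and $\|\cdot\|^2$, $R_{\widehat\sigma}$, and the composite data term are lower semicontinuous, verify weak-to-weak continuity of $A$, and — the genuinely load-bearing step — obtain the weak-to-strong upgrade, which in both the stability and convergence parts crucially uses the coercive penalty $\epsilon\|x\|^2$ (via strong convexity in one case and convergence of the squared norm in the other) rather than mere convexity of $R_{\widehat\sigma}$. Pinning down the hypotheses on $f$ (nonnegativity, vanishing exactly on the diagonal, continuity, and the quantitative bound $f(y^\delta,y^0)\lesssim\delta^2$) under which the parameter-choice conditions $\lambda\to0$ and $\delta/\lambda\to0$ deliver the stated convergence is exactly the point at which Theorem~\ref{thm:ACR} inherits the standing assumptions of classical variational regularization theory, and these should be made explicit.
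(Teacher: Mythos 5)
Your proposal is correct and follows exactly the route the paper intends: the text gives no proof of Theorem~\ref{thm:ACR} itself, deferring to ``standard arguments in variational calculus'' and to \cite{acr_arxiv}, and your direct-method existence/uniqueness argument, the strong-convexity-based stability estimate, and the classical $\limsup$/$\liminf$ convergence argument with the Radon--Riesz upgrade via the $\epsilon\|x\|^2$ term are precisely those standard arguments. The only implicit hypotheses you rely on (solvability of $Ax=y^0$, the quadratic fidelity bound $f(y^\delta,y^0)\lesssim\delta^2$, and a reflexive/Hilbert setting) are the standing assumptions of the cited reference, and you correctly flag them.
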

Despite strong theoretical guarantees, the numerical experiments in \cite{acr_arxiv} (especially, for sparse-view CT reconstruction) indicate a lack of expressive power of ACRs as compared to their nonconvex counterpart AR. This underscores the need to develop techniques that achieve a better compromise between empirical performance and theoretical certificates. A step in this direction has been made very recently by relaxing convexity to a so-called convex-nonconvex construction of the regularizer \cite{shumaylov2023provably}, wherein the regularizer is allowed to be nonconvex while still maintaining convexity of the overall variational energy and the classical theoretical guarantees.    

\subsubsection{Combining end-to-end reconstructions and adversarial regularization}\label{sssec:combiningE2EAR}
Cycle-WGAN models such as UAR and adversarial regularizer (AR) are both unsupervised approaches for solving inverse problems while being able to use the knowledge of the measurement operator in the reconstruction process. 
In \cite{uar_neurips2021}, it has been shown that UAR can be combined with AR to improve the quality of the reconstruction. The key observation is that the adversarial regularizer $R_{\widehat \sigma} : \mathbb{X} \rightarrow \R$ is trained to distinguish samples from the noisy reconstruction $(A^\dagger)_{\#} \pi_{\sty}$ from samples from the ground-truth $\pi_{\mathbf{x}}$. Therefore, it is plausible that by substituting $A^\dagger$ with a generator $G_{\widehat \theta}$ learned through UAR, one should be able to improve the noisy reconstruction $(A^\dagger)_{\#} \pi_{\sty}$ using a more accurate reconstruction, given by $(G_{\widehat \theta})_{\#} \pi_{\sty}$ and then construct a regularizer based on it. The noisy reconstruction $(G_{\widehat \theta})_{\#} \pi_{\sty}$ would be an \emph{improved guess} over $(A^\dagger)_{\#} \pi_{\sty}$.
Following this intuition and rewriting the UAR objective \cite{uar_neurips2021} as
\begin{align}\label{eq:UARob}
\min_{\sigma} \max_{\theta}  \,  \int_\mathbb{Y} R_{\sigma}(G_\theta(y))\, \mathrm{d}\pi_{\sty} - & \int_\mathbb{X} R_{\sigma}(x)\, \mathrm{d} \pi_{\mathbf{x}} + \lambda \int_\mathbb{X} (|\nabla R_{\sigma}|(\hat x) - 1)^2_+\, \mathrm{d}\hat \pi \nonumber\\
& + \int_\mathbb{X} f(G_{\theta}(A(x)) , x)\,  \mathrm{d}\pi_{\mathbf{x}},
\end{align}
one observes that 
the optimal $R_{\widehat \sigma}$ is trained to distinguish noisy samples of $(G_{\widehat \theta})_{\#} \pi_{\sty}$ from samples from $\pi_{\mathbf{x}}$ and therefore $R_{\widehat \sigma}$ is a good regularizer for the distribution $(G_{\widehat \theta})_{\#} \pi_{\sty}$. Moreover, in \cite{uar_neurips2021} it has been remarked that since the regularizer $R_{\widehat \sigma}$ is an approximation of the Kantorovich potential for $W_1((G_{\widehat \theta})_{\#} \pi_{\sty}, \pi_{\mathbf{x}})$, it is possible to compute the derivative of the $1$-Wasserstein distance with respect to a GD step as in \eqref{eq:wdecay}. 
Indeed, suppose one considers a GD step of the learned regularizer, given by $x_\eta=x-\eta\,\nabla_xR_{\widehat{\sigma}}(x)$, starting from $x\sim (G_{\widehat \theta})_{\#} \pi_{\sty}$. Let ${\pi}^\eta_{\widehat \sigma}$ be the distribution of $x_\eta$. Under appropriate regularity assumptions on the $1$-Wasserstein distance $W_1({\pi}^\eta_{\widehat \sigma},\pi_{\mathbf{x}})$ (see \cite[Theorem 5]{uar_neurips2021}), one can show that 
\[
	\frac{\mathrm{d}}{\mathrm{d}\eta} W_1({\pi}^\eta_{\widehat \sigma},\pi_{\mathbf{x}}) |_{\eta=0} = - \int_\mathbb{X} \|\nabla_x R_{\widehat{\sigma}}(x)\|^2\, {\mathrm d} {\pi}^\eta_{\widehat \sigma}.
\]
This ensures that by taking a small enough gradient step from samples of $(G_{\widehat \theta})_{\#} \pi_{\sty}$, one can reduce the $1$-Wasserstein distance from the ground-truth $\pi_{\mathbf{x}}$. 
This is a strong theoretical guarantee that $R_{\widehat \sigma}$ is a good regularizer for the regularized inverse problem
\begin{align}\label{eq:regUAR}
    \min_{x\in \mathbb{X}} f(y^\delta ,Ax) + \lambda \left(R_{\widehat{\sigma}}(x) + \epsilon\|x\|_{\mathbb{X}}^2\right),
\end{align}
where $y^\delta \in \mathbb{Y}$ is the noisy measurement. In particular, by taking a few gradient descent steps on the objective in \eqref{eq:regUAR}, initialized with $G_{\widehat \theta}(y^\delta)$, we are moving the end-to-end reconstruction $G_{\widehat \theta}(y^\delta)$ towards the ground-truth distribution $\pi_{\mathbf{x}}$, c.f. Figure \ref{fig:manifold}. This additional \emph{refinement} can be seen as a process of \emph{instance adaptation} of a given end-to-end reconstruction $G_{\widehat \theta}(y^\delta)$.
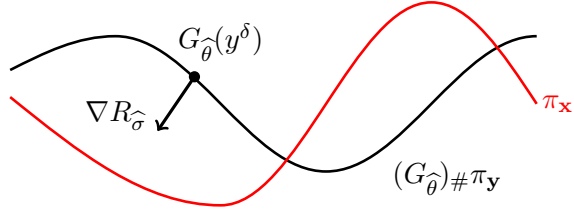
\begin{figure}[t]
\centering
\begin{tikzpicture}[xscale=1.40, yscale=0.9]
  \draw[line width=0.35mm] (0,0.5) sin (1,1) cos (2,0) sin (3,-1) cos (4,0) sin (5,1)
  plot[only marks] coordinates{(1,1)} (2.0,0.4) node [above = 0.7mm]{$ G_{\widehat \theta}(y^\delta)$} (4.15,-1.1) node{$(G_{\widehat \theta})_{\#} \pi_{\sty}$};
   \draw[->,very thick] (1.75,0.40) -- (1.4,-0.4) node[above = 2mm,left]{$\nabla R_{\widehat \sigma}$} ;
  \node at (1.75,0.40)  [circle,fill,inner sep=1.5pt]{};
  \draw[color=red, line width=0.35mm] (0,0.1) sin (2,-1.5) cos (3,0) sin (4,1.5) cos (5,0) (5.2,0) node {$\pi_{\mathbf{x}}$};
 \end{tikzpicture}
 \caption{A schematic illustration of the behavior of the gradient descent step initialized at $G_{\widehat \theta}(y^\delta)$. The gradient descent is moving the point $G_{\widehat \theta}(y^\delta)$  in the direction $\nabla R_{\widehat \sigma}(G_{\widehat \theta}(y^\delta))$. Since $R_{\widehat \sigma}$ is the Kantorovich potential for $W_1((G_{\widehat \theta})_{\#} \pi_{\sty}, \pi_{\mathbf{x}})$ the step of gradient descent moves $G_{\widehat \theta}(y)$ towards the ground-truth distribution $\pi_{\mathbf{x}}$} \label{fig:manifold}
 \end{figure}
In Figure \ref{fig:refined} we report the reconstructions obtained in \cite{uar_neurips2021} using the UAR approach described in Section \ref{sec:OTcycle} (on the left) and the reconstruction obtained by solving \eqref{eq:regUAR} performing few steps of GD initialized at $G_{\widehat \theta}(y^\delta)$ (on the right).

\begin{figure}[h!]
\centering
\begin{subfigure}{.42\textwidth}
\includegraphics[width=2.3in]{figures/uar.png}
\centering\vskip-0.3\baselineskip
  {\small UAR: 32.80 dB, 0.86}
\end{subfigure}\quad \quad
\begin{subfigure}{.42\textwidth}
	\includegraphics[height=2.3in]{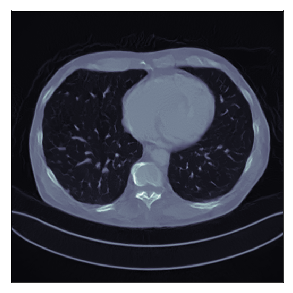}
 \centering\vskip-0.3\baselineskip
  {\small  UAR (refined): 33.15 dB, 0.87}
 \end{subfigure}
 \caption{Comparison between cycle-based end-to-end reconstruction obtained by UAR and the additional refinement step. We report the PSNR (dB) and SSIM below the respective images. The refinement step leads to a minor improvement in the quality of the reconstructed image via \textit{instance adaptation} (i.e., by computing the reconstruction corresponding to a specific realization of the measurement random variable.)}\label{fig:refined}
 \end{figure}

\subsubsection{The refinement step in UAR and Brenier's theorem}\label{sec:brenier}

Adversarial regularizers and the refinement step in UAR can be interpreted under the lenses of Brenier's theorem \cite{brenier1987decomposition} described in Section \ref{sec:brenier}. Indeed, by computing the gradient descent step with respect to the learned regularizer $R_{\widehat \sigma}$ as
\begin{align}\label{eq:descent}
x_\eta=x-\eta\,\nabla_xR_{\widehat{\sigma}}(x)
\end{align}
for either $x = A^\dagger y$ (for AR) or $x = G_{\widehat \theta}(y)$ (for UAR), one is effectively trying to compute an approximation of the optimal transport map from the end-to-end reconstruction to the ground-truth through the formula \eqref{eq:formbrenier}.

Unfortunately, Brenier's theorem holds only for $p$-Wasserstein distances with $1<p<\infty$, and thus it cannot be applied directly to AR and UAR since they are based on an approximation of the $1$-Wasserstein distance. Variants of AR and UAR that use the $p$-Wasserstein distances with $1<p<\infty$ would allow applying Theorem \ref{thm:brenier}, giving the optimal learning step $\eta$ that should be used to compute the optimal transport map. However, such variants would inevitably suffer from the lack of a computationally favorable dual formulation such as the one for the $1$-Wasserstein distance. Notably, \cite{milne2022new} considers obtaining approximations of the $1$-Wasserstein distance through the computed potential to design a better descent step \eqref{eq:descent}. We refer interested readers to \cite{milne2022new} for more details and to \cite{evans1999differential} for a more theoretical discussion about the relation between the $1$-Wasserstein distance and Kantorovich potentials.

\section{Unsupervised approaches rooted in convex analysis and monotone operator theory}\label{sec:unsupervised}
In this section, we give an overview of different unsupervised approaches based on convex analysis and monotone operator theory for solving imaging inverse problems, with special emphasis on the learned optimization-based approaches and the plug-and-play (PnP) denoising framework. 
\subsection{Learned optimization solvers}\label{ssec:l2o}
\textit{Learning-to-optimize} (L2O) is an emerging area at the interface of optimization and machine learning that has recently started to receive popularity various in data science applications, including computational imaging. In this section, we review some recent progress in L2O in the context of computational imaging.

L2O methods learn to efficiently solve a class of optimization problems, by adapting to the structure of the problems and the underlying data distribution. Although L2O has not received strong attention in the imaging community compared to related schemes such as PnP/RED, we believe that it will soon become a major area in imaging, due to the recent rise of computationally intensive learned regularizers. Moreover, for each imaging modality, the imaging system is usually fixed or almost fixed, which is a suitable problem setting to use L2O to develop specialized optimization algorithms for imaging in an application-driven manner. 

The L2O schemes are typically trained in an unsupervised manner, with the goal of accelerating optimization on a class of functions of interest, as outlined in the following. Firstly, the users need to generate a set of training problem instances $\{f_i\}_{i=1}^n$, drawn from the problem class of interest, such as those arising from model-based natural image inpainting or sinogram denoising. One example would be $f_i(x) = \|x-y_i\|^2 + \lambda \|\nabla x\|_1$, corresponding to the TV-based variational model for denoising induced by a noisy image $y_i$. Let us denote the algorithm to be learned as $\mathcal{A}_\theta(f, x_0, N)$, with $\theta$ being the set of trainable parameters within the algorithm. Here $f$ denotes the objective, $x_0$ denotes the initial point of the algorithm, while the third argument $N$ denotes the number of iterations to be executed. The output of the algorithm is denoted as $x_N = \mathcal{A}_\theta(f, x_0, N)$. The unsupervised training objective can typically be written as minimizing the final objective value (averaged over the training problems): 
\begin{equation}
    \theta^\star \in \argmin_{\theta} \frac{1}{n}\sum_{i=1}^n f_i(\mathcal{A}_\theta(f_i, x_0, N)),
\end{equation}
or minimizing the sum of the function values along the optimization path:
\begin{equation}
    \theta^\star \in \argmin_{\theta} \frac{1}{n}\sum_{i=1}^n \sum_{m=1}^N f_i(\mathcal{A}_\theta(f_i, x_0, m)),
\end{equation}
where one seeks to minimize the training problems' objective values as much as possible within $N$ iterations. Due to the computational complexity in training, $N$ cannot be too large. In the context of imaging, the number of unrolling iterations is usually chosen to be on the order of $N=10$. 


In this chapter, we only consider theoretically-principled L2O frameworks which lead to provably convergent algorithms. We will start from the basic scheme of Learned PDHG with trainable step-size parameters \cite{banert2020data}, to more advanced schemes such as the learned mirror descent (LMD) methods \cite{tan2023data}, which are based on trainable mirror maps using input-convex neural networks \cite{amos2017input}.

\subsubsection{Learned algorithmic parameters}
Banert et al. proposed a learned step-size scheme for the class of primal-dual splitting algorithms \cite{banert2020data}, used to solve composite optimization problems of the form:
\begin{equation}
    x^\star \in \argmin_x f(Ax) + R(x).
\end{equation}
In the context of imaging, $f(Ax)$ is a data-fidelity term incorporating a forward operator $A$, while $R(x)$ is a regularization term (such as TV regularization). The step-size selection in the primal-dual splitting scheme has been a challenging problem, since jointly selecting the primal step-size, dual step-size, and the extrapolation parameter is difficult in general and significantly affects the practical performance \cite{goldstein2015adaptive,chambolle2023stochastic}. We present a well-known classical primal-dual splitting method, the primal-dual hybrid gradient (PDHG) algorithm of Chambolle and Pock \cite{chambolle2011first}:
\begin{eqnarray*}
   &&y_{k+1} = \prox_{f^*}^\sigma (y_k + \sigma Av_n),\\
    &&x_{k+1} = \prox_R^\tau(x_k - \tau A^{\top}y_{k+1}),\\
    &&v_{k+1} = x_{k+1} + \theta(x_{k+1} -x_k).
\end{eqnarray*}
One could generalize this splitting by parameterizing the scheme as follows, where $\otimes$ denotes the Kronecker product, and $\diag(A,B)$ represents the diagonal operator with operators $A,B$ on the diagonal: 
\begin{eqnarray*}
    && \left[\begin{matrix}
        w_k \\ y_{k+1}
    \end{matrix}\right] = (A \otimes \Id) \diag(\prox_{f^*}^\sigma, \Id) (B \otimes \Id)\left[\begin{matrix}
        Av_k\\y_k
    \end{matrix} \right]\\
    && \left[\begin{matrix}
        v_k \\ x_{k+1}
    \end{matrix}\right] = (C \otimes \Id) \diag(\prox_{R}^\tau, \Id) (D \otimes \Id)\left[\begin{matrix}
        A^{\top}w_k\\x_k
    \end{matrix} \right],
\end{eqnarray*}
where $A$, $B$, $C$, and $D$ are $2 \times 2$ matrices consisting of learnable parameters \cite{banert2020data}. This formulation includes PDHG as the special case
\begin{equation*}
    A=\left[ \begin{matrix}
        1&0\\1&0
    \end{matrix} \right],B=\left[ \begin{matrix}
        \sigma&0\\0&1
    \end{matrix} \right],C=\left[ \begin{matrix}
        1+\theta&-\theta\\1&0
    \end{matrix} \right],D=\left[ \begin{matrix}
        -\tau&1\\0&1
    \end{matrix} \right].
\end{equation*}
As long as the learned parameters are constrained throughout training within the acceptable range given by the convergence theorems of the primal-dual splitting algorithms, the learned scheme is provably convergent.

\subsubsection{Learned mirror descent with input-convex neural networks}
In the previous section, we presented a basic paradigm for provable L2O, by learning the algorithmic parameters of classical optimizers such as PDHG, while restricting the learnable parameters such that theoretical guarantees hold. Although such schemes can achieve a certain degree of adaptivity and acceleration over classical hand-crafted optimizers while maintaining provable convergence, their potential is limited as they involve few trainable parameters. 

In order to fully utilize the training data and make the algorithm adapt well to the inherent structure of the optimization problem class of interest, we wish to leverage the expressive capacity of deep neural networks within some classical optimizer in a principled manner, ensuring provable convergence. The classical \textit{mirror descent} (MD) algorithm by Yudin and Nemirovski is an ideal candidate for such extension by its nature \cite{nemirovsky1983problem}. Before introducing the MD algorithm, we first define the mirror maps as such.
\begin{definition}[Mirror potentials and mirror maps]
    We define a continuously differentiable and strongly-convex function $\Psi : \mathbb{X} \rightarrow \R$ as a mirror potential, and its gradient $\nabla \Psi: \mathbb{X} \rightarrow (\R^{n})^*$ as the (forward) mirror map \cite{nemirovsky1983problem,tan2023data}.
\end{definition}
Denoting $\Psi^*$ as the convex conjugate of the mirror potential $\Psi$, and the backward mirror map as $\nabla\Psi^* = (\nabla \Psi)^{-1}$, we can write the MD iterates as
\begin{equation}
    x_{k+1} = \nabla \Psi^*[\nabla \Psi(x_k) - t_k \nabla f(x_k)],
\end{equation}
or equivalently,
\begin{equation}
    x_{k+1} = \argmin_{x \in \mathbb{X}} \left\{ \langle x, \nabla f(x_k) \rangle + \frac{1}{t_k}B_{\Psi}(x, x_k)\right\}.
\end{equation}
Here, $B_\Psi(x,y) = \Psi(x) - \Psi(y) - \langle \nabla \Psi(y), x-y \rangle$ denotes the Bregman distance induced by the mirror potential $\Psi$. Observe that for the choice of mirror potential $\Psi(\cdot) = \frac{1}{2}\|\cdot\|_2^2$, we recover gradient descent.

The MD algorithm naturally lends itself to the L2O setting, since we can parameterize the mirror potential using deep neural networks. In particular, by parameterizing $\Psi$ as an input-convex neural network (ICNNs) \cite{amos2017input}, the learned mirror potential is enforced to be a convex function w.r.t. the input, which allows us to inherit the convergence properties of MD. Let the mirror potential $\Psi$ and its conjugate $\Psi^*$ be parameterized by two neural networks $M_\theta$ and $M_\vartheta^*$, respectively, where the condition $M_\vartheta^* \approx (M_\theta)^{-1}$ is enforced through training\footnote{Note that the mirror potential should be strongly-convex to ensure provable convergence. We add a small $\ell_2$ term $\frac{\mu}{2}\|x\|_2^2$ to the usual ICNN parameterization to ensure this.}. We can describe the learned mirror descent (LMD) algorithm as:
\begin{equation}\label{eq:approximateMD}
        \tilde{x}_{k+1} = \nabla M_\vartheta^* (\nabla M_\theta(\tilde{x}_k) - t_k \nabla f(\tilde{x}_k)).
\end{equation}
Due to the inexact inverses, we need to enforce $M_\vartheta^* \approx (M_\theta)^{-1}$ for the convergence of LMD. Hence for this framework, we incorporate an additional regularization in the unsupervised training objective stated previously in this section, where the inexactness $\|\nabla M_\vartheta^* \circ \nabla M_\theta - I\|$ is penalized along the distribution $p_\mathbb{X}$ of the optimized iterates. Denoting the LMD algorithm as $\mathcal{A}_{\theta,\vartheta}$, where $\alpha_m$s are the weights across different iterations, the regularized objective is:
\begin{equation}\label{eq:MDTrainLoss}
    \argmin_{\theta,\vartheta}  \frac{1}{n}\sum_{i=1}^n \sum_{m=1}^N \alpha_m f_i(\mathcal{A}_{\theta,\vartheta}(f_i, x_0, m)) + \mathbb{E}_{x \sim p_\mathbb{X}}[\|(\nabla M_\vartheta^* \circ \nabla M_\theta - I)(x)\|].
\end{equation}
Under standard assumptions in convex optimization, we can provide the following regret bound for LMD which is close to the regret bound for MD, subject to the approximation quality of the $M_\vartheta^* \approx (M_\theta)^{-1}$ encouraged in the training process.

\begin{theorem}[{Regret Bound for LMD \cite{tan2023data}}]\label{thm:approxMD}
Suppose $f$ is $\mu$-strongly convex with parameter $\mu>0$, and $\Psi$ is a mirror potential with strong convexity parameter $\sigma$. Let $\{\tilde{x}_k\}_{k=0}^\infty$ be some sequence in $\mathbb{X} = \R^n$, and $\{x_k\}_{k=1}^\infty$ be the corresponding exact MD iterates evaluated at $\tilde{x}_{k-1}$. We have the following regret-bound:
\begin{equation}
    \begin{split}
        &\sum_{k=1}^K t_k(f(\tilde{x}_k) - f(x^*)) \le \\
        & B(x^*, \tilde{x}_1) + \sum_{k=1}^K \left[\frac{1}{\sigma} t_k^2 \|\nabla f(\tilde{x}_k)\|_*^2 + \left(\frac{1}{2t_k\mu} + \frac{1}{\sigma}\right)\|\nabla M_\theta(\tilde{x}_{k+1}) - \nabla M_\theta(x_{k+1})\|_*^2 \right].
    \end{split}
\end{equation}
\end{theorem}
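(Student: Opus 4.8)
The plan is to run the classical regret analysis of mirror descent, organized around the three-point identity for Bregman divergences, while carrying an explicit error term throughout to account for the fact that $\nabla M_\vartheta^*$ is only an approximate inverse of $\nabla M_\theta$. Throughout, write $\Psi = M_\theta$ and let $B(\cdot,\cdot)=B_\Psi(\cdot,\cdot)$ be the induced Bregman divergence, recalling that $\Psi$ is $\sigma$-strongly convex. By construction, the exact MD iterate at $\tilde x_k$ satisfies $\nabla\Psi(x_{k+1}) = \nabla\Psi(\tilde x_k) - t_k\nabla f(\tilde x_k)$, so comparing with \eqref{eq:approximateMD} we have $\tilde x_{k+1} = \nabla M_\vartheta^*(\nabla\Psi(x_{k+1}))$; if $\nabla M_\vartheta^* = (\nabla\Psi)^{-1}$ exactly, then $\tilde x_{k+1}=x_{k+1}$ and the bound collapses to the standard MD regret bound.

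First I would establish the per-step inequality. The three-point identity applied to the exact step gives $t_k\langle\nabla f(\tilde x_k),\,x_{k+1}-x^*\rangle = B(x^*,\tilde x_k) - B(x^*,x_{k+1}) - B(x_{k+1},\tilde x_k)$. Splitting $x_{k+1}-x^* = (\tilde x_k - x^*) + (x_{k+1}-\tilde x_k)$, I bound the first inner product from below by $f(\tilde x_k)-f(x^*) + \tfrac\mu2\|\tilde x_k - x^*\|^2$ using $\mu$-strong convexity of $f$, and I bound the second by Young's inequality together with $B(x_{k+1},\tilde x_k)\ge\tfrac\sigma2\|x_{k+1}-\tilde x_k\|^2$; the latter cancels the $-B(x_{k+1},\tilde x_k)$ term and leaves a term of the form $\tfrac1\sigma t_k^2\|\nabla f(\tilde x_k)\|_*^2$. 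Rearranging yields $t_k(f(\tilde x_k)-f(x^*)) \le B(x^*,\tilde x_k) - B(x^*,x_{k+1}) + \tfrac1\sigma t_k^2\|\nabla f(\tilde x_k)\|_*^2 - \tfrac{t_k\mu}{2}\|\tilde x_k - x^*\|^2$, where the last term is a strong-convexity ``credit'' to be spent on the inexactness.

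Second, I would make the sum telescope despite $\tilde x_{k+1}\neq x_{k+1}$. Writing $-B(x^*,x_{k+1}) = -B(x^*,\tilde x_{k+1}) + \bigl(B(x^*,\tilde x_{k+1}) - B(x^*,x_{k+1})\bigr)$ and using the three-point identity $B(x^*,\tilde x_{k+1}) - B(x^*,x_{k+1}) = B(x_{k+1},\tilde x_{k+1}) + \langle\nabla\Psi(x_{k+1}) - \nabla\Psi(\tilde x_{k+1}),\,x^*-x_{k+1}\rangle$, I bound $B(x_{k+1},\tilde x_{k+1}) \le \tfrac1{2\sigma}\|\nabla\Psi(x_{k+1}) - \nabla\Psi(\tilde x_{k+1})\|_*^2$ (the dual form of $\sigma$-strong convexity, i.e. $\tfrac1\sigma$-smoothness of $\Psi^*$) and handle the cross term with Young's inequality, using $\|x^*-x_{k+1}\| \le \|x^*-\tilde x_{k+1}\| + \tfrac1\sigma\|\nabla\Psi(x_{k+1}) - \nabla\Psi(\tilde x_{k+1})\|_*$. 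Tuning the Young constant so that the residual multiple of $\|x^*-\tilde x_{k+1}\|^2$ is exactly absorbed by the strong-convexity credit $-\tfrac{t\mu}{2}\|\tilde x_{k+1}-x^*\|^2$ from the neighbouring step is what produces the coefficient $\tfrac{1}{2t_k\mu} + \tfrac1\sigma$ in front of $\|\nabla M_\theta(\tilde x_{k+1}) - \nabla M_\theta(x_{k+1})\|_*^2$. Summing over $k=1,\dots,K$, the differences $B(x^*,\tilde x_k) - B(x^*,\tilde x_{k+1})$ telescope to at most $B(x^*,\tilde x_1)$ (discarding the nonpositive $-B(x^*,\tilde x_{K+1})$), giving the claimed bound.

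The main obstacle is precisely this last bookkeeping step: the cross term $\langle\nabla\Psi(x_{k+1}) - \nabla\Psi(\tilde x_{k+1}),\,x^*-x_{k+1}\rangle$ contains the potentially large quantity $\|x^*-x_{k+1}\|$, so a naive Cauchy--Schwarz estimate leaves an uncontrolled $\|x^*-x_{k+1}\|^2$; it can only be closed by spending the $\mu$-strong-convexity credit on $f$, which is exactly why $\mu$ appears in the bound and why the error coefficient degrades like $1/(t_k\mu)$ as $t_k\to 0$. One must also track indices carefully so that each credit is matched with the correct error term (and deal with the boundary contribution at $k=K$); the remaining pieces — the three-point identities, the strong-convexity inequalities for $\Psi$, and the choices of Young's-inequality constants — are routine.
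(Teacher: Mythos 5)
The survey states this theorem by citation to \cite{tan2023data} and does not reproduce a proof, so the comparison is against the argument in that reference; your plan follows the same route as that proof (three-point/Bregman telescoping for the exact step, the $\mu$-strong-convexity ``credit'' $-\tfrac{t_k\mu}{2}\|\tilde x_k-x^*\|^2$, and a perturbation term measuring $\|\nabla M_\theta(\tilde x_{k+1})-\nabla M_\theta(x_{k+1})\|_*$ that is absorbed into that credit), and the strategy is sound. Two bookkeeping points, both of which you flag but leave open, deserve to be pinned down. First, your constant accounting does not actually deliver the stated coefficient: bounding $B(x_{k+1},\tilde x_{k+1})\le\tfrac{1}{2\sigma}\|\Delta_k\|_*^2$, then paying $\tfrac{1}{\sigma}\|\Delta_k\|_*^2$ for the $\tfrac{1}{\sigma}\|\Delta_k\|_*$ part of $\|x^*-x_{k+1}\|$ and $\tfrac{1}{2t_k\mu}\|\Delta_k\|_*^2$ from Young, sums to $\tfrac{3}{2\sigma}+\tfrac{1}{2t_k\mu}$, not $\tfrac{1}{\sigma}+\tfrac{1}{2t_k\mu}$. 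The clean fix is to use the exact identity
\begin{equation*}
B(x^*,\tilde x_{k+1})-B(x^*,x_{k+1})=\bigl\langle \nabla M_\theta(\tilde x_{k+1})-\nabla M_\theta(x_{k+1}),\,\tilde x_{k+1}-x^*\bigr\rangle-B(\tilde x_{k+1},x_{k+1})\le \|\Delta_k\|_*\,\|\tilde x_{k+1}-x^*\|,
\end{equation*}
which eliminates both the $B(x_{k+1},\tilde x_{k+1})$ term and the detour through $\|x^*-x_{k+1}\|$, so that a single application of Young's inequality suffices and the resulting coefficient sits comfortably inside the stated one. Second, the index matching: the quadratic $\tfrac{\cdot}{2}\|\tilde x_{k+1}-x^*\|^2$ generated at step $k$ must be cancelled by the credit $-\tfrac{t_{k+1}\mu}{2}\|\tilde x_{k+1}-x^*\|^2$ arriving at step $k+1$, so the Young weight should be $t_{k+1}\mu$ (giving $\tfrac{1}{2t_{k+1}\mu}\|\Delta_k\|_*^2$, which matches the theorem up to its index convention, or exactly for constant step sizes); with your choice of weight $t_k\mu$ the cancellation fails for decreasing step sizes. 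The boundary term at $k=K$ is not a problem at all: at the final step one simply drops $-B(x^*,x_{K+1})\le 0$ without converting it, so no error term and no unmatched credit is generated there, and the extra nonnegative summand in the stated bound only loosens the inequality. With these repairs your argument proves the theorem as stated.
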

From this result we can observe that, for the case where $M_\vartheta^* \approx (M_\theta)^{-1}$, the term $\|\nabla M_\theta(\tilde{x}_{k+1}) - \nabla M_\theta(x_{k+1})\|_*^2 \rightarrow 0$ and we recover the standard convergence guarantees for MD. In Figure \ref{fig:denoise_recon} and \ref{fig:denoiseVis}, we demonstrate a numerical example of applying LMD on the total-variation (TV) model-based image denoising task. The LMD and the adaptive LMD (a variant of LMD with learned step-sizes besides the learned mirror maps) were trained with unrolling iteration number $N=10$. We can observe significantly improved convergence rates of LMD over classical solvers which are not data-driven.

To further improve the convergence rates and computational efficiency of LMD, the follow-up work \cite{tan2023boosting} of Tan et al. proposes several extensions utilizing momentum-based acceleration and stochastic gradient approximations. We present one of the extensions with the classical Nesterov-type acceleration technique in optimization, the learned \textit{accelerated} mirror descent (LAMD) algorithm in \Cref{alg:LAMD}.

\begin{algorithm}
\caption{Learned Accelerated Mirror Descent (LAMD) \cite{tan2023boosting}}\label{alg:LAMD}
\begin{algorithmic}[1]
\Require Input $\tilde{x}^{(0)} = \tilde{z}^{(0)} = x^{(0)} \in \mathbb{X}$, parameter $r \ge 3$, step-sizes $t_k$, number of iterations $K$
\State $z^{(0)} = \nabla M_\theta(\tilde{z}^{(0)})$
\For{$k = 0, ..., K$}
    \State $x^{(k+1)} = \lambda_k \nabla M_\vartheta^* ({z}^{(k)}) + (1-\lambda_k) \tilde{x}^{(k)}$ with $\lambda_k = \frac{r}{r+k}$
    \State $z^{(k+1)} = z^{(k)} - \frac{kt_k}{r}\nabla f(x^{(k+1)})$
    \State $\tilde{x}^{(k+1)} = x^{(k+1)} - \gamma t_k \nabla f(x^{(k+1)}) $
\EndFor
\State \textbf{return} $x^{(K+1)}= \lambda_K \nabla M_\vartheta^* ({z}^{(K)}) + (1-\lambda_K) \tilde{x}^{(K)}$
\end{algorithmic}
\end{algorithm}%
\noindent With bounded forward-backward inconsistency, an improved convergence rate of LAMD over vanilla LMD can be established in a way similar to the classical accelerated MD. In Figure \ref{fig:LMD_LAMD}, we present numerical results of LAMD in TV model-based denoising, comparing it to learned solvers such as LMD and LPDHG, as well as the classical optimizers such as gradient descent with Nesterov acceleration. We can observe the superior performance of LAMD in this example.


\begin{figure}[t]
    \centering
\includegraphics[width=0.8\textwidth,height=\textheight,keepaspectratio]{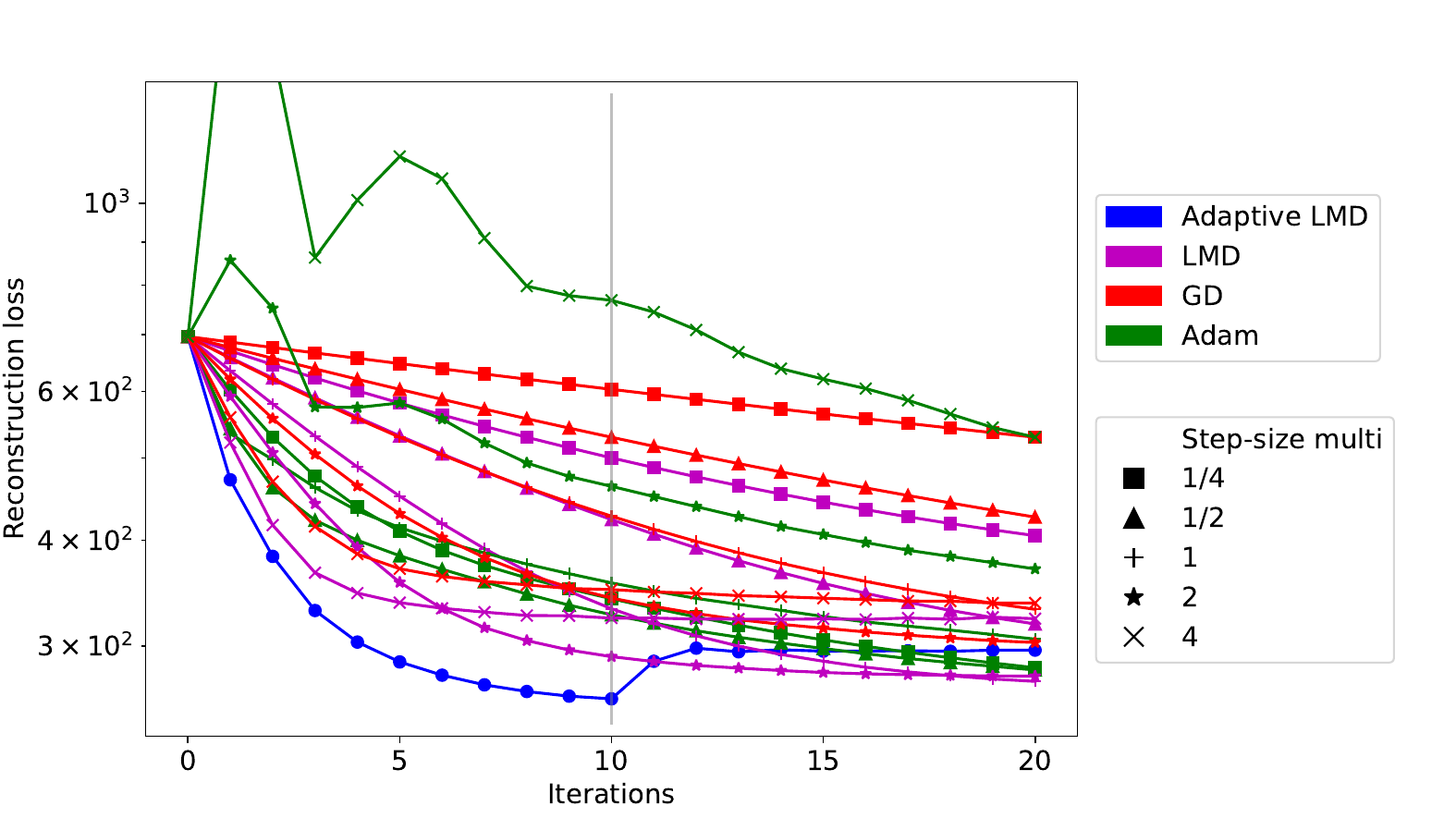}
    \caption{Convergence results of LMD and traditional optimizers in TV denoising problem (see \cite{tan2023data}).}
    \label{fig:denoise_recon}
\end{figure}



\begin{figure}[t]%
\captionsetup[subfloat]{margin=10pt,format=hang,singlelinecheck=false,justification=RaggedRight}

 \centering  \subfloat[\centering Adaptive LMD \newline (3 iterations)]{{\includegraphics[width=0.3\textwidth,height=\textheight,keepaspectratio]{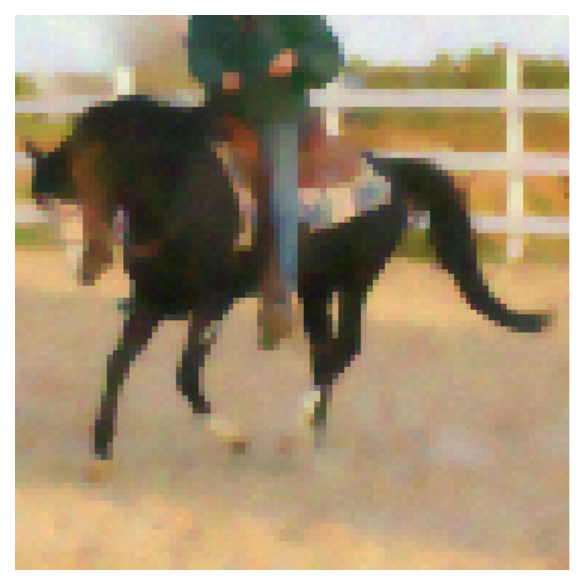}\label{fig:denoiseVismd}}}%
    \subfloat[\centering Adam \newline(3 iterations)]{{\includegraphics[width=0.3\textwidth,height=\textheight,keepaspectratio]{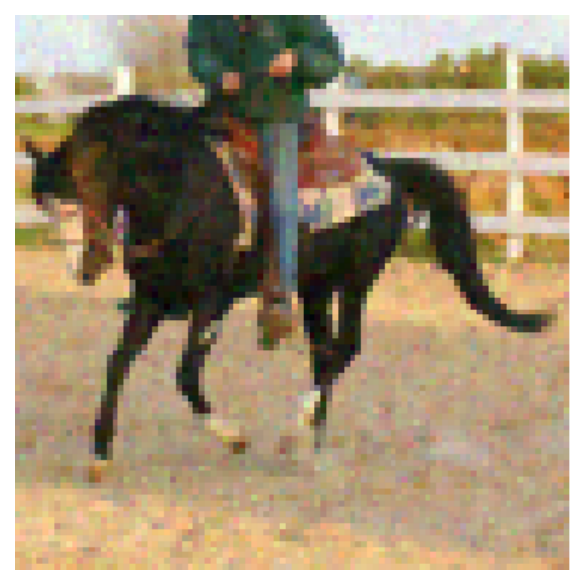}}}%
    \subfloat[\centering Adam \newline(10 iterations)]{{\includegraphics[width=0.3\textwidth,height=\textheight,keepaspectratio]{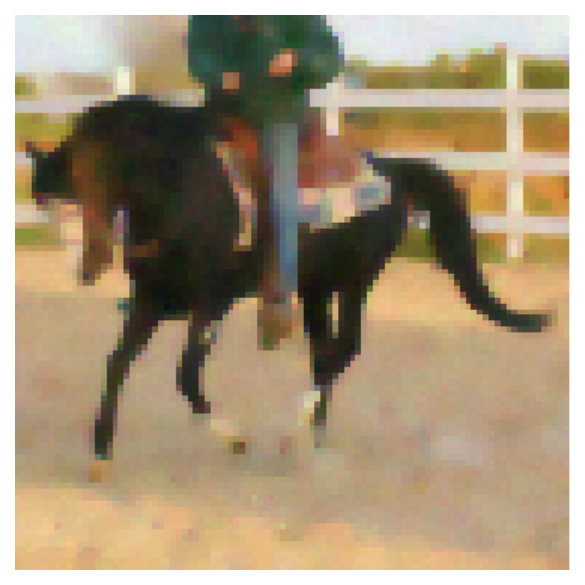}
    \label{fig:denoiseVisadam}}}%
        \caption{Recovered images by LMD and Adam. We can observe visually the LMD achieves much faster convergence over Adam with fewer artifacts in early iterations. See \cite{tan2023data} for further details.}
    \label{fig:denoiseVis}%
\end{figure}


\begin{figure}[h!]
    \centering
    \includegraphics[width=0.7\textwidth]{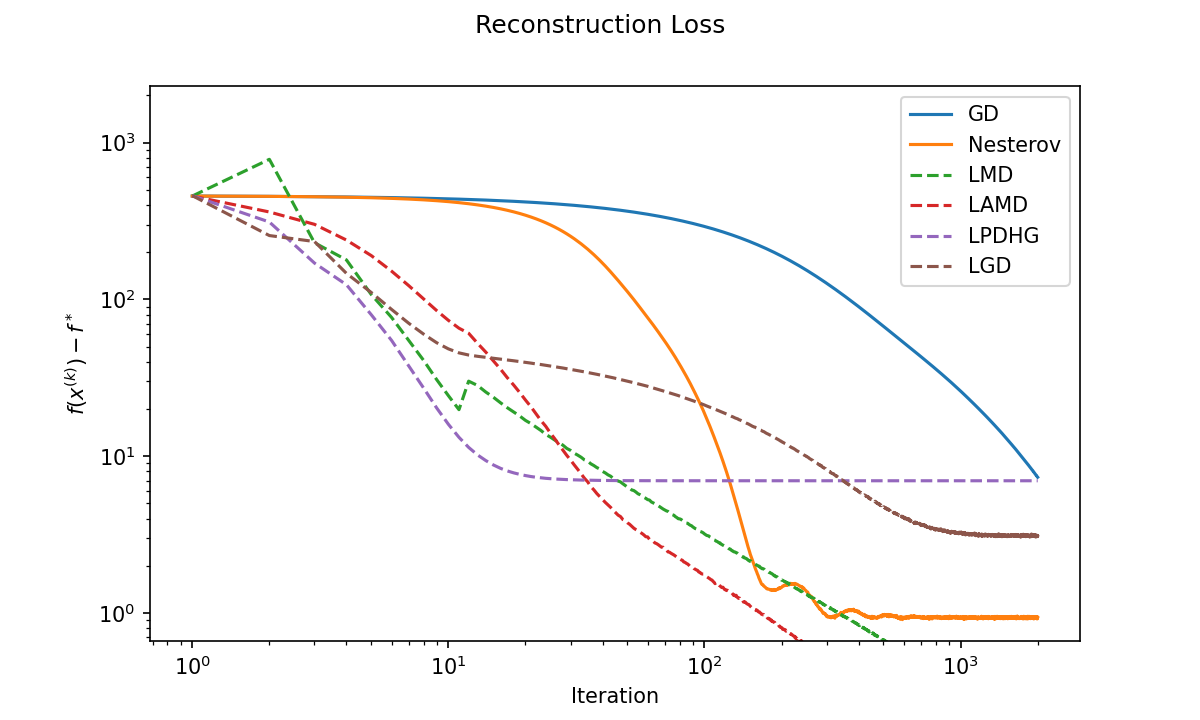}
    \caption{Example convergence profiles for the learned mirror descent algorithm (LMD), the learned accelerated mirror descent algorithm (LAMD), the learned PDHG method (LPDHG), and gradient descent with learned step-sizes (LGD). The target optimization problem arises from TV model-based denoising. We observe that all learned methods are significantly faster than the corresponding gradient descent and Nesterov accelerated gradient descent algorithms in the low iteration regime. However, without special parameter choices for LPDHG, it does not converge to a minimizer.}
    \label{fig:LMD_LAMD}
\end{figure}


\subsection{Plug-and-play methods and data-driven regularization}
\label{sec:Reg_byPnP}
Denoising is the simplest and arguably the most well-studied inverse problem in imaging, with numerous algorithms being developed over the past few decades, particularly for removing additive white Gaussian noise. A natural question is whether one can leverage off-the-shelf denoisers for solving more complicated image recovery tasks with a non-trivial forward operator (that is different from the identity). Venkatakrishnan et al. \cite{venkat_pnp_6737048} pioneered the idea of using denoisers within proximal splitting algorithms (such as ADMM) in a plug-and-play (PnP) fashion, resulting in a class of algorithms known as the PnP denoising approach. To motivate replacing proximal operators with denoisers, let us recall the definition of the proximal operator with respect to a (potentially non-smooth) convex functional $g:\mathbb{X} \rightarrow \mathbb{R}\cup \{+\infty\}$ and a step-size $\tau>0$:   
\begin{align}
	\text{prox}_{\tau g}(x) & = \argmin_u \, \frac{1}{2}\|x - u\|^2 + \tau g(u). 
	\label{eq:prox_1}
\end{align}
As indicated by \eqref{eq:prox_1}, evaluating the proximal operator amounts to denoising a noisy image $x$ using the Bayesian \textit{maximum a-posteriori probability} (MAP) estimation framework with a Gibbs prior $\propto \exp\left(-\tau g(u)\right)$. This denoising interpretation of proximal operators underlies the foundation of PnP approaches, which have been shown to produce excellent reconstruction results for a wide range of imaging inverse problems. A classic and widely popular example of PnP denoising would be to consider it in conjunction with forward-backward splitting (FBS), leading to the following iterative reconstruction algorithm:
\begin{equation}
    x_{k+1} = D_{\sigma} \left(x_k - \eta_k \nabla f(x_k) \right). \label{eq:pnp_fbs}
\end{equation}
Here, $f$ denotes the data fidelity loss for the underlying inverse problem, $\eta_k>0$ is the step-size at iteration $k$, and $D_{\sigma}$ is a denoiser that eliminates Gaussian noise of standard deviation $\sigma$ from its input. 

\noindent Besides the PnP denoising framework within proximal methods, wherein a denoiser implicitly acts as a regularizer, Romano et al. \cite{red_romano} proposed an alternative approach to explicitly construct a regularizing term from a denoiser $D_{\sigma}(x)$ as
\begin{align}
	g(x)=\frac{1}{2}x^\top \left(x-D_{\sigma}(x)\right). 
	\label{eq:red_construction}                         \end{align}
One can then seek to minimize the energy functional $f(x)+\lambda\,g(x)$, where $g$ is as defined in \eqref{eq:red_construction}, leading to fixed-point iterative schemes known as the regularization-by-denoising (RED) algorithms. Nevertheless, it was shown subsequently by Schniter et al. \cite{red_schniter} that the \textit{energy minimization} interpretation of the RED algorithms is valid only when (i) the denoiser is \textit{locally homogeneous}, i.e., $D_{\sigma}\left((1+\epsilon)x\right)=(1+\epsilon)D_{\sigma}(x)$ holds for all $x$ with sufficiently small $\epsilon$, and (ii) the Jacobian of $D_{\sigma}$ is symmetric. These conditions are generally not satisfied by generic denoisers, thereby invalidating the energy minimization-based interpretation of RED. Instead, the authors of \cite{red_schniter} developed a new framework called \textit{score-matching} to analyze the convergence of RED algorithms. 



Notwithstanding their empirical success, PnP denoising algorithms such as \eqref{eq:pnp_fbs} do not immediately inherit the convergence properties of the corresponding optimization scheme, such as FBS in the previous example. Studying the convergence of PnP denoising has received a significant amount of attention in the mathematical imaging community in recent years. Arguably, the most natural form of convergence for PnP algorithms of the form \eqref{eq:pnp_fbs} is the stability of the iterations, ascertaining whether the sequence of iterates $x_k$ generated by a PnP algorithm converges. Such convergence guarantees are typically derived from fixed point theorems, which require showing that the PnP iterations are contractive maps \cite{pnp_admm_chan_2017,pmlr-v97-ryu19a}. For instance, \cite{pmlr-v97-ryu19a} established the fixed-point convergence of PnP-ADMM (i.e., PnP with the \textit{alternating direction method of multipliers} algorithm) under the assumption of Lipschitz continuity of the operator $\left(D_{\sigma}-\id\right)$. The specific result is stated in Theorem \ref{thm:pnp_admm_fp}. 
\begin{theorem}[Fixed-point convergence of PnP-ADMM \cite{pmlr-v97-ryu19a}]
	\label{thm:pnp_admm_fp}
	Consider the PnP-ADMM algorithm, given by
	\begin{align}
		x_{k+\frac{1}{2}} & =\prox_{\tau f}\left(z_k\right),\quad x_{k+1} = D_{\sigma}\left(2x_{k+\frac{1}{2}}-z_k\right), \text{\,\,and\,\,}\nonumber \\ z_{k+1}&=z_k+x_{k+1}-x_{k+\frac{1}{2}},
		\label{pnp_drs1}
	\end{align}
	where the data-fidelity loss $f$ is assumed to be $\mu$-strongly convex. One can equivalently express \eqref{pnp_drs1} as the fixed-point iteration $z_{k+1}=\Op{T}(z_k)$, where
	\begin{eqnarray}
		\Op{T}=\frac{1}{2}\id + \frac{1}{2}\left(2D_{\sigma}-\id\right)\left(2\,\prox_{\tau f}-\id\right).
		\label{pnp_drs_fp}
	\end{eqnarray}
	Suppose that the denoiser $D_\sigma$ satisfies
	\begin{equation}
		\left\|\left(D_{\sigma}-\id\right)(u)-\left(D_{\sigma}-\id\right)(v)\right\|_2 \leq \epsilon  \left\|u-v\right\|_2, 
		\label{cond_denoiser_pnpDRS}
	\end{equation}
	for all $u,v\in \mathbb{X}$ and some $\epsilon>0$, and the strong convexity parameter $\mu$ is such that $\displaystyle\frac{\epsilon}{(1+\epsilon-2\epsilon^2)\,\mu}<\tau$ holds. Then the operator $\Op{T}$ is contractive and the PnP-ADMM algorithm is fixed-point convergent. That is, $\left(x_k,z_k\right)\rightarrow (x_{\infty},z_{\infty})$, where $(x_{\infty},z_{\infty})$ satisfy 
	\begin{eqnarray}
		x_{\infty}=\prox_{\tau  f}\left(z_{\infty}\right) \text{\,\,and\,\,} x_{\infty} = D_{\sigma}\left(2x_{\infty}-z_{\infty}\right).
		\label{pnp_drs_final}
	\end{eqnarray}
	As noted in \cite{pmlr-v97-ryu19a}, fixed-point convergence of PnP-ADMM follows from monotone operator theory if $\left(2D_{\sigma}-\id\right)$ is non-expansive, but \eqref{cond_denoiser_pnpDRS} imposes a less restrictive condition on the denoiser.
\end{theorem}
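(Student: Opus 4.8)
The plan is to rewrite the two-step PnP-ADMM recursion \eqref{pnp_drs1} as a single fixed-point iteration $z_{k+1}=\Op{T}(z_k)$, to show that the hypotheses make $\Op{T}$ a Banach contraction on $\mathbb{X}$, and then to transfer convergence of $(z_k)$ to convergence of the $x$-iterates together with the limiting relations \eqref{pnp_drs_final}. First I would eliminate $x_{k+\frac{1}{2}}$ and $x_{k+1}$ from \eqref{pnp_drs1}: with $u_k:=\prox_{\tau f}(z_k)$ one has $z_{k+1}=z_k-u_k+D_\sigma(2u_k-z_k)$. Writing $R_{\tau f}:=2\prox_{\tau f}-\id$ and using $\id-\prox_{\tau f}=\tfrac{1}{2}(\id-R_{\tau f})$, a one-line manipulation gives the two equivalent forms
\[
\Op{T}=\tfrac{1}{2}\,\id+\tfrac{1}{2}\,(2D_\sigma-\id)(2\prox_{\tau f}-\id)=\prox_{\tau f}+(D_\sigma-\id)\circ R_{\tau f},
\]
the first being \eqref{pnp_drs_fp}; the second (additive) form is the one I would carry through the estimate. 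Here $\Op{T}$ is a well-defined single-valued self-map of $\mathbb{X}$, since $f\in\Gamma_0$ together with $\mu$-strong convexity makes $\prox_{\tau f}$ single-valued and $D_\sigma$ is globally defined.

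Next I would record a quantitative regularity statement for the resolvent. Since $f$ is $\mu$-strongly convex, $\partial f$ is maximally monotone and $\mu$-strongly monotone, so $\tau\partial f$ is $\tau\mu$-strongly monotone and $\prox_{\tau f}=J_{\tau\partial f}$ is firmly nonexpansive. Writing $d:=z-z'$ and $p:=\prox_{\tau f}(z)-\prox_{\tau f}(z')$, strong monotonicity yields $\langle d,p\rangle\ge(1+\tau\mu)\|p\|^2$, hence $\|p\|\le(1+\tau\mu)^{-1}\|d\|$; expanding $\|R_{\tau f}(z)-R_{\tau f}(z')\|^2=\|2p-d\|^2=4\|p\|^2-4\langle d,p\rangle+\|d\|^2$ and substituting gives the refinement
\[
\|R_{\tau f}(z)-R_{\tau f}(z')\|^2\le\|d\|^2-4\tau\mu\,\|p\|^2 .
\]

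The crux is the contraction estimate. From the additive form of $\Op{T}$, the triangle inequality, and the hypothesis \eqref{cond_denoiser_pnpDRS} on $D_\sigma$,
\[
\|\Op{T}(z)-\Op{T}(z')\|\le\|p\|+\epsilon\,\|R_{\tau f}(z)-R_{\tau f}(z')\|\le\|p\|+\epsilon\sqrt{\|d\|^2-4\tau\mu\,\|p\|^2}.
\]
The right-hand side depends only on the scalar $\|p\|\in[0,\|d\|/(1+\tau\mu)]$; maximizing it over this interval — an elementary one-variable optimization, with the maximum attained either at the interior critical point or at the endpoint $\|p\|=\|d\|/(1+\tau\mu)$ — gives $\|\Op{T}(z)-\Op{T}(z')\|\le c(\epsilon,\tau\mu)\|z-z'\|$, and one checks that under the stated assumption $\tfrac{\epsilon}{(1+\epsilon-2\epsilon^2)\mu}<\tau$ this worst-case factor satisfies $c(\epsilon,\tau\mu)<1$. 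I expect this worst-case computation — tracking the dependence on $\epsilon$ and $\tau\mu$ carefully enough to land on the stated threshold — to be the main obstacle; everything else is bookkeeping. (The remark at the end of the theorem is visible here too: if instead $2D_\sigma-\id$ is nonexpansive, then $R_{\tau f}$ contributes only a nonexpansive factor and $\Op{T}$ is merely $\tfrac{1}{2}$-averaged, so one would invoke Krasnoselskii--Mann rather than Banach; the strong convexity of $f$ is precisely what upgrades this to a strict contraction under the weaker condition \eqref{cond_denoiser_pnpDRS}.)

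Finally I would conclude. Since $\mathbb{X}$ is complete and $\Op{T}$ is a contraction, Banach's fixed-point theorem yields a unique $z_\infty$ with $\Op{T}(z_\infty)=z_\infty$ and $z_k\to z_\infty$ geometrically. By continuity of $\prox_{\tau f}$ and the (Lipschitz) continuity of $D_\sigma$, $x_{k+\frac{1}{2}}=\prox_{\tau f}(z_k)\to\prox_{\tau f}(z_\infty)=:x_\infty$ and $x_{k+1}=D_\sigma(R_{\tau f}(z_k))\to D_\sigma(R_{\tau f}(z_\infty))$. Rearranging $z_\infty=z_\infty-\prox_{\tau f}(z_\infty)+D_\sigma(R_{\tau f}(z_\infty))$ gives $\prox_{\tau f}(z_\infty)=D_\sigma(R_{\tau f}(z_\infty))$, i.e. $x_\infty=D_\sigma(2x_\infty-z_\infty)$; hence both $x$-subsequences share the limit $x_\infty$ and $(x_\infty,z_\infty)$ satisfies \eqref{pnp_drs_final}, completing the argument.
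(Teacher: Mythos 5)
The paper states this theorem without proof, deferring entirely to the cited reference \cite{pmlr-v97-ryu19a}, so there is no in-paper argument to compare against; your proposal is correct and follows essentially the same contraction route as that reference (recast the iteration as $z_{k+1}=\Op{T}(z_k)$, use $\langle d,p\rangle\ge(1+\tau\mu)\|p\|^2$ to sharpen the reflected-resolvent bound, combine with the $\epsilon$-Lipschitz hypothesis on $D_\sigma-\id$, and invoke Banach). The one step you leave open does close: maximizing $t+\epsilon\sqrt{\|d\|^2-4\tau\mu\,t^2}$ over $t\in[0,\|d\|/(1+\tau\mu)]$ gives either the endpoint value $\frac{1+\epsilon|1-\tau\mu|}{1+\tau\mu}\|d\|$ or the interior critical value $\frac{\sqrt{1+4\epsilon^2\tau\mu}}{2\sqrt{\tau\mu}}\|d\|$ (the latter occurring exactly when $|1-\tau\mu|<4\epsilon\tau\mu$), and a short case check shows both are strictly below $\|d\|$ whenever $\tau\mu>\frac{\epsilon}{1+\epsilon-2\epsilon^2}$ with $0<\epsilon<1$ — in fact your bound is slightly sharper than the stated threshold, which is harmless since a weaker sufficient condition still proves the theorem as stated.
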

While fixed-point convergence ensures that the PnP iterations are stable, the specific fixed point to which they converge does not naturally minimize a variational energy function. To bridge the gap between classical variational approaches and PnP methods, it is important to derive conditions under which the limit point of PnP iterations can be characterized as the minimizer (or, at least a stationary point) of some regularized variational objective (which, of course, depends on the denoiser). This type of convergence is referred to as \textit{objective convergence} and is stronger than fixed-point convergence. 

Objective convergence of PnP with classical (pseudo-) linear denoisers (e.g., non-local means denoiser) has been established in \cite{nair2021fixed}. Hurault et al. \cite{gs_denoiser_hurault_2021} showed that PnP with a denoiser constructed as a gradient field, referred to as gradient-step (GS) denoisers, converges to the stationary point of a (possibly non-convex) variational objective (c.f. Theorem \ref{thm:pnp_admm_gsd}). The construction of GS denoisers is motivated by Tweedie's identity; the optimal minimum mean-squared error (MMSE) Gaussian denoiser is given by
\begin{equation}
	D_{\sigma}^*(x):=\mathbb{E}\left[\mathbf{x}_0|\mathbf{x}=x\right] = x+\sigma^2\,\nabla \log p_{\sigma}(x).
	\label{eq:tweedie}
\end{equation}
Here, $\mathbf{x}=\mathbf{x}_0+\sigma\,\mathbf{w}$, where $\mathbf{w}\sim\mathcal{N}(0,I)$, is the Gaussian-noise corrupted version of the clean image $\mathbf{x}_0\in \mathbb{R}^d$ and 
\begin{equation}
	p_{\sigma}(x)=\frac{1}{(2\pi\sigma^2)^{\frac{d}{2}}}\int\exp\left(-\frac{\|x-x_0\|_2^2}{2\sigma^2}\right)p(x_0)\,\mathrm{d}x_0.
	\label{eq:smoothed_pdf}
\end{equation}
Indeed, the optimal Gaussian denoiser is of the form $D_{\sigma}^*(x)=x-\nabla\,g^{*}_{\sigma}(x)$, where $g^{*}_{\sigma}$ is the negative log-density of the smoothed distribution $p_{\sigma}$ defined in \eqref{eq:smoothed_pdf}. This has a structure identical to that of a GS denoiser, parameterized as $D_{\sigma}(x)=x-\nabla\,g_{\sigma}(x)$. It was argued in \cite{gs_denoiser_hurault_2021} that directly parameterizing $g_{\sigma}$ using a deep neural network does not lead to state-of-the-art denoising performance, but instead, modeling $g_{\sigma}$ as $g_{\sigma}(x)=\frac{1}{2}\left\|x-N_{\sigma}(x)\right\|_2^2$ for a differentiable network $N_{\sigma}(x)$ produces superior denoising performance. The denoiser is trained by minimizing the MSE, given by $J:=\mathbb{E}_{\mathbf{x},\mathbf{w}}\left\|D_{\sigma}(\mathbf{x}+\sigma\,\mathbf{w})-\mathbf{x}\right\|_2^2$, where $\mathbf{w}\sim\mathcal{N}(0,I)$, approximated over the training dataset consisting of the ground-truth images and their noisy counterparts.

\begin{theorem}[Objective convergence of PnP iterations \cite{gs_denoiser_hurault_2021}]
	\label{thm:pnp_admm_gsd}
	Suppose the denoiser is a gradient-step (GS) denoiser $D_{\sigma}=\id-\nabla g_{\sigma}$, where $g_{\sigma}$ is proper, lower semi-continuous, and differentiable with $L$-Lipschitz gradient. The GS-PnP algorithm proposed in \cite{gs_denoiser_hurault_2021} is given by
	\begin{align}
		x_{k+1} & = \prox_{\tau f}\left(x_k-\tau \lambda \nabla g_{\sigma}(x_k)\right)\nonumber \\ &=\prox_{\tau  f}\circ \left(\tau\lambda\, D_{\sigma}+(1-\tau\lambda\, \id)\right)(x_k),
		\label{eq:pnp_gs_hqs}
	\end{align}
	where $f \colon \mathbb{X}\to\Real\cup \{+\infty\}$ is a convex and lower semi-continuous data-fidelity term. Then, the following guarantees hold for $\tau<\frac{1}{\lambda\, L}$:
	\begin{enumerate}
		\item The sequence $F(x_k)$, where $F=f+\lambda\, g_{\sigma}$, is non-increasing and convergent.
		\item The residual $\left\|x_{k+1}-x_k\right\|_2$ converges to 0. 
		\item All limit points of $\{x_k\}$ are stationary points of $F(x)$. 
	\end{enumerate}
	Notably, the PnP iteration defined by \eqref{eq:pnp_gs_hqs} is exactly equivalent to proximal gradient descent on $f+\lambda\,g_{\sigma}$, with a potentially non-convex $g_{\sigma}$.
\end{theorem}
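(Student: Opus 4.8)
The starting point is the identity noted in the statement: since $D_\sigma = \id - \nabla g_\sigma$, one has $\tau\lambda D_\sigma + (1-\tau\lambda)\id = \id - \tau\lambda\nabla g_\sigma$, so \eqref{eq:pnp_gs_hqs} is literally one forward--backward step $x_{k+1} = \prox_{\tau f}\bigl(x_k - \tau\lambda\nabla g_\sigma(x_k)\bigr)$ for the composite --- and possibly non-convex --- objective $F = f + \lambda g_\sigma$, with $f$ proper convex l.s.c. and $\lambda g_\sigma$ being $C^1$ with $\lambda L$-Lipschitz gradient. I would run the standard sufficient-decrease analysis of non-convex forward--backward splitting. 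The descent lemma for the smooth part gives $\lambda g_\sigma(x_{k+1}) \le \lambda g_\sigma(x_k) + \langle \lambda\nabla g_\sigma(x_k), x_{k+1}-x_k\rangle + \tfrac{\lambda L}{2}\|x_{k+1}-x_k\|^2$, while comparing the values at $u=x_{k+1}$ and $u=x_k$ of the strongly convex map $u \mapsto f(u) + \tfrac{1}{2\tau}\|u - x_k + \tau\lambda\nabla g_\sigma(x_k)\|^2$, which is minimized by $x_{k+1}$, gives $f(x_{k+1}) + \langle \lambda\nabla g_\sigma(x_k), x_{k+1}-x_k\rangle + \tfrac{1}{2\tau}\|x_{k+1}-x_k\|^2 \le f(x_k)$. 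Adding the two inequalities cancels the inner products and yields the key estimate
\[
F(x_{k+1}) \le F(x_k) - \Bigl(\tfrac{1}{2\tau} - \tfrac{\lambda L}{2}\Bigr)\|x_{k+1}-x_k\|^2, \qquad c := \tfrac{1}{2\tau} - \tfrac{\lambda L}{2} > 0,
\]
where $c>0$ is precisely equivalent to the hypothesis $\tau < \tfrac{1}{\lambda L}$. This gives that $F(x_k)$ is non-increasing; provided $F$ is bounded from below (which I take to be implicit in the hypotheses, e.g.\ $f\ge 0$ and $g_\sigma\ge 0$, or $F$ coercive) the monotone sequence $F(x_k)$ converges, settling item~1.

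Item~2 follows by telescoping: $c\sum_{k=0}^{N}\|x_{k+1}-x_k\|^2 \le F(x_0) - F(x_{N+1}) \le F(x_0) - \inf F < \infty$, hence $\sum_k\|x_{k+1}-x_k\|^2 < \infty$ and in particular $\|x_{k+1}-x_k\| \to 0$. For item~3 I would read off a subgradient from the proximal optimality condition: $x_{k+1} = \prox_{\tau f}(x_k - \tau\lambda\nabla g_\sigma(x_k))$ means $\tfrac1\tau(x_k - x_{k+1}) - \lambda\nabla g_\sigma(x_k) \in \partial f(x_{k+1})$, so adding $\lambda\nabla g_\sigma(x_{k+1})$ and using the sum rule $\partial F = \partial f + \lambda\nabla g_\sigma$ (valid since the second summand is $C^1$),
\[
d_k := \tfrac1\tau(x_k - x_{k+1}) + \lambda\bigl(\nabla g_\sigma(x_{k+1}) - \nabla g_\sigma(x_k)\bigr) \in \partial F(x_{k+1}), \qquad \|d_k\| \le \Bigl(\tfrac1\tau + \lambda L\Bigr)\|x_{k+1}-x_k\| \to 0
\]
by item~2. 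If $x_{k_j}\to x^*$ is a limit point, then $x_{k_j+1}\to x^*$ as well, and $d_{k_j} - \lambda\nabla g_\sigma(x_{k_j+1}) \in \partial f(x_{k_j+1})$ converges to $-\lambda\nabla g_\sigma(x^*)$; passing to the limit in the convex subgradient inequality for $f$ gives $-\lambda\nabla g_\sigma(x^*) \in \partial f(x^*)$, i.e.\ $0\in\partial F(x^*)$, so $x^*$ is a stationary point of $F$.

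The descent lemma and the prox comparison are routine; the one delicate point is item~3, where the meaning of the subdifferential of the non-convex $F$ must be fixed and the limit justified. The cleanest route keeps $\partial F = \partial f + \lambda\nabla g_\sigma$ and invokes graph-closedness of the convex subdifferential $\partial f$ under strong convergence --- which, as above, needs only lower semicontinuity of $f$ and hence does \emph{not} require proving $F(x_{k_j+1})\to F(x^*)$ separately (though, if one prefers to argue with the limiting subdifferential, this function-value convergence can be recovered from the same prox comparison evaluated at $u=x^*$ together with l.s.c.\ of $f$). It is also worth stating explicitly that boundedness below of $F$ is indispensable: without it neither convergence of $F(x_k)$ nor summability of the increments can be claimed.
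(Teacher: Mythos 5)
Your proof is correct and follows essentially the same route as the source the paper cites for this theorem: the paper itself gives no proof (it states the result from Hurault et al.), and the underlying argument there is exactly the reduction of GS-PnP to proximal gradient descent on $F=f+\lambda g_\sigma$ followed by the standard sufficient-decrease analysis, prox optimality condition, and closedness of the graph of $\partial f$ that you carry out. You are also right to flag that boundedness below of $F$ is an implicit hypothesis missing from the statement as reproduced here; it is assumed in the cited work and is indeed indispensable for items 1 and 2.
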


\begin{figure}[t!]
    \centering
    \subfloat[\centering PnP-LBFGS]{{\includegraphics[width=0.33\textwidth]{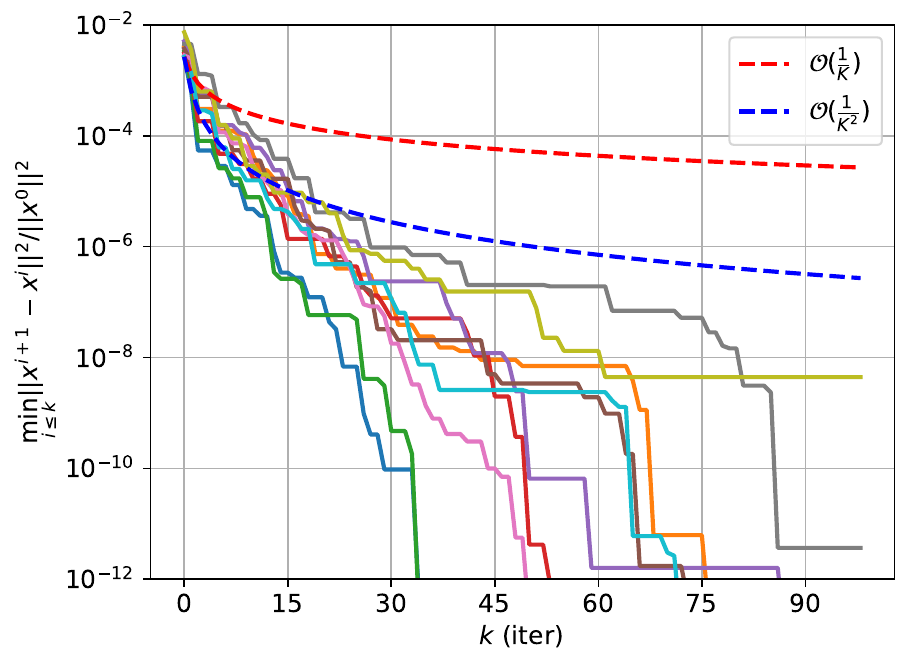}}}%
    \subfloat[\centering PnP-FISTA]{{\includegraphics[width=0.33\textwidth]{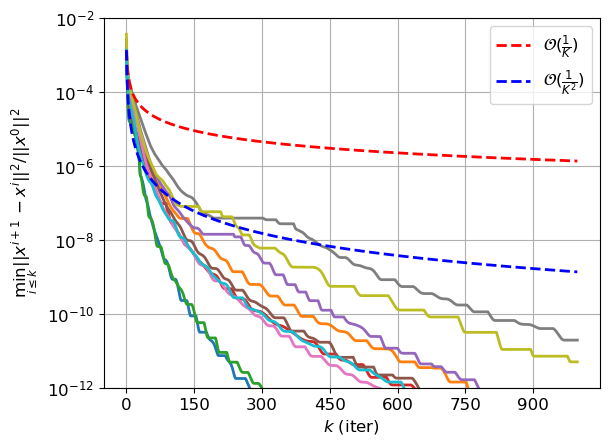}}}%
    \subfloat[\centering PnP-$\alpha$PGD]{{\includegraphics[width=0.33\textwidth]{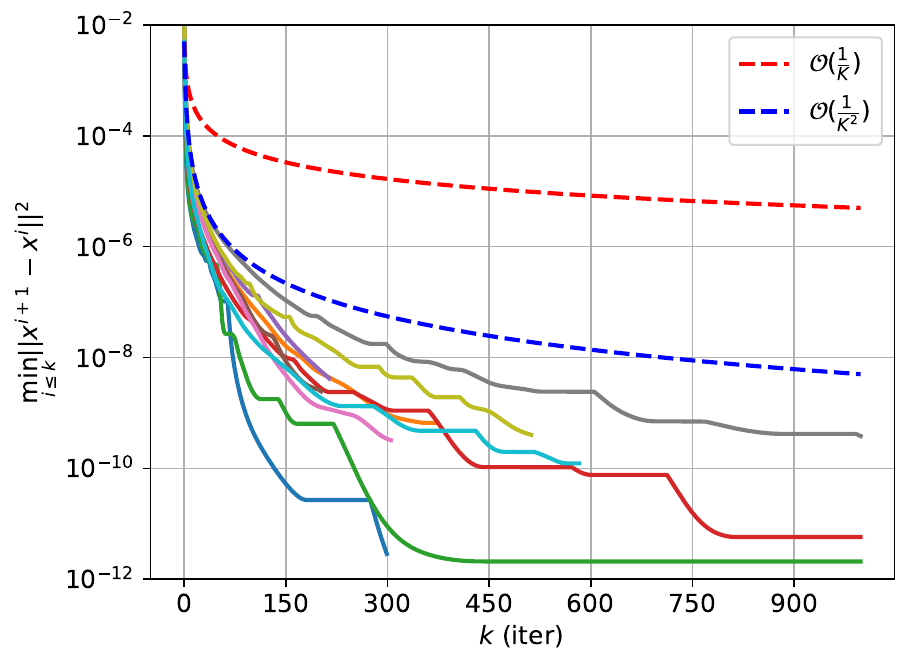}}}

    \subfloat[\centering PnP-PGD]{{\includegraphics[width=0.33\textwidth]{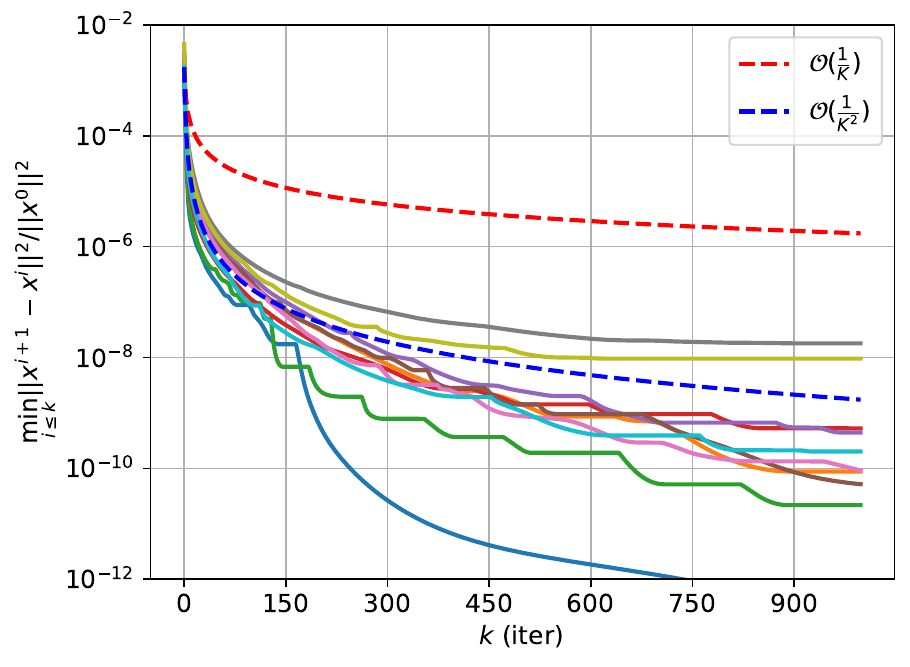}}}%
    \subfloat[\centering PnP-DRS]{{\includegraphics[width=0.33\textwidth]{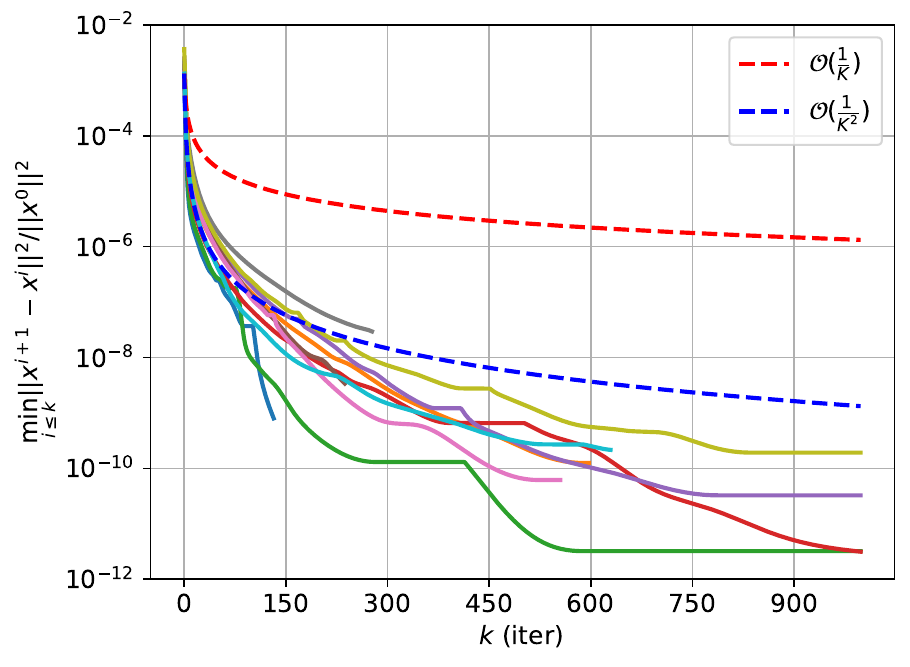}}}%
    \subfloat[\centering PnP-DRSDiff]{{\includegraphics[width=0.33\textwidth]{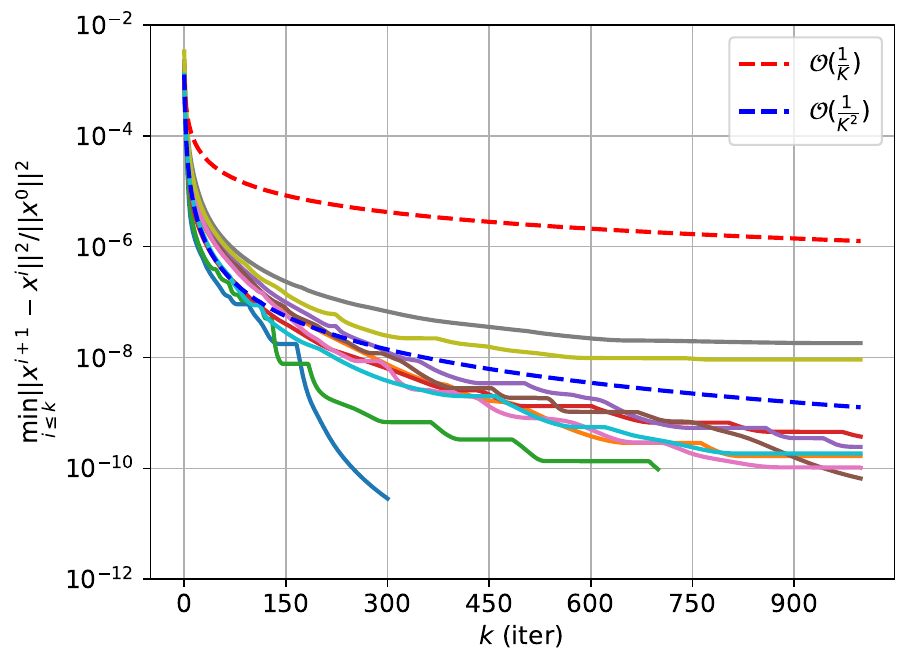}}}%
    
    \caption{Convergence of the residuals $\min_{i \le k} \|x^{i+1} - x^{i}\|^2/\|x^0\|^2$ for various PnP methods applied to image deblurring. Each curve corresponds to one image from the CBSD10 dataset, corrupted with 3\% additive Gaussian noise. Except for PnP-FISTA, stationary points of these PnP methods are critical points of a weakly convex function, corresponding to the noisy image and the denoiser.}
    \label{fig:PnP_residuals}
\end{figure}

While objective convergence ensures a one-to-one connection between PnP iterates with the minimization of a variational objective, it does not provide any guarantees about the regularizing properties of the solution that the iterates converge to. In the same spirit as classical regularization theory, it is therefore desirable to be able to control the implicit regularization effected by the denoiser in PnP algorithms and analyze the asymptotic behavior of the PnP reconstruction as the noise level and the regularization strength tend to vanish. More precisely, assuming that the PnP iterations converge to a solution $\hat{x}\left(y^\delta,\sigma,\lambda\right)$, where $\sigma$ is a parameter associated with the denoiser and $\lambda$ is an explicit regularization penalty, one would like to obtain appropriate selection rules for $\sigma$ and/or $\lambda$ such that $\hat{x}\left(y^\delta,\sigma,\lambda\right)$ is a convergent regularization scheme in the limit as $\delta\rightarrow 0$. To the best of our knowledge, some progress in this direction was first made in \cite{ebner2022plugandplay}, and the precise convergence result is stated in Theorem \ref{thm:pnp_conv_reg_haltmeyer}. A similar convergence result for PnP methods in the sense of regularization was shown in \cite{hauptmann2023convergent} considering linear denoisers, together with a systematic approach based on \textit{spectral filtering} for controlling the regularization effect arising from such denoisers.  
\begin{theorem}[Convergent plug-and-play (PnP) regularization \cite{ebner2022plugandplay}]
	\label{thm:pnp_conv_reg_haltmeyer}
	Consider the PnP-FBS iterates of the form
	\begin{equation}
		x_{\lambda,k+1}^{\delta} = D_{\lambda}\left(x_{\lambda,k}^{\delta}-\eta\,A^*\left(Ax_{\lambda,k}^{\delta}-y^{\delta}\right)\right),
		\label{eq:pnp_conv_reg_thm}
	\end{equation}
	where $D_{\lambda}$ is a denoiser with a tuneable regularization parameter $\lambda$. Let $\PnP\left(\lambda,y^{\delta}\right)$ be the fixed point of the PnP iteration \eqref{eq:pnp_conv_reg_thm}. For any $y\in\range(A)$ and any sequence $\delta_k>0$ of noise levels converging to $0$, there exists a sequence $\lambda_k$ of regularization parameters converging to $0$ such that for all $y_k$ with $\|y_k-y^0\|_2\leq \delta_k$, the following hold under appropriate assumptions on the denoiser (see Definition 3.1 in \cite{ebner2022plugandplay} for details):
	\begin{enumerate}
		\item $\PnP\left(\lambda,y^{\delta}\right)$ is continuous in $y^{\delta}$ for any $\lambda>0$;
		\item The sequence $\left(\PnP\left(\lambda_k,y_k\right)\right)_{k\in\mathbb{N}}$ has a weakly convergent subsequence; and
		\item The limit of every weakly convergent subsequence of $\left(\PnP\left(\lambda_k,y_k\right)\right)_{k\in\mathbb{N}}$ is a solution of the operator equation $y^0=Ax$.
	\end{enumerate}
\end{theorem}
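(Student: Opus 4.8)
The plan is to follow the classical three-step template for showing that a family of reconstruction maps is a convergent regularization scheme --- well-posedness and stability for fixed regularization strength, boundedness of the reconstructions as the noise vanishes, and identification of subsequential limits --- specialized to the fixed-point operator of PnP-FBS. Write $T_{\lambda,y}(x) := D_\lambda\big(x - \eta\, A^*(Ax - y)\big)$ for the PnP-FBS step, so that $\PnP(\lambda,y^\delta)$ is a fixed point of $T_{\lambda,y^\delta}$, and use the denoiser hypotheses of \cite[Definition 3.1]{ebner2022plugandplay}, which roughly provide: (i) Lipschitz continuity of $D_\lambda$ which, combined with $\eta$ small enough that $\id - \eta A^*A$ is nonexpansive, makes $T_{\lambda,y}$ a strict contraction with constant $q_\lambda<1$ uniform in $y$; (ii) a pointwise convergence $D_\lambda(x) \to x$ as $\lambda\to 0^+$, uniform on bounded sets; and (iii) a coercivity/controlled-growth condition on $\id - D_\lambda$ allowing fixed points to be bounded by comparison with true solutions.

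First I would settle existence, uniqueness, and stability. By (i) and the Banach fixed-point theorem, $T_{\lambda,y^\delta}$ has a unique fixed point $\PnP(\lambda,y^\delta)$, to which the iterates \eqref{eq:pnp_conv_reg_thm} converge. For item 1, fix $\lambda>0$; since $y \mapsto T_{\lambda,y}(x)$ is $\eta\|A^*\|$-Lipschitz uniformly in $x$, the standard perturbation bound for fixed points of contractions gives $\|\PnP(\lambda,y) - \PnP(\lambda,y')\| \le \eta\|A^*\|(1-q_\lambda)^{-1}\|y-y'\|$, hence continuity. Since $q_\lambda \to 1$ as $\lambda \to 0$, this bound necessarily degrades, which is consistent with stability being asserted only for fixed $\lambda$.

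For item 2, set $x_k := \PnP(\lambda_k,y_k)$ and $z_k := x_k - \eta A^*(Ax_k - y_k)$, so $x_k = D_{\lambda_k}(z_k)$ and thus $D_{\lambda_k}(z_k) - z_k = \eta A^*(Ax_k - y_k)$. Choosing $\hat x$ with $A\hat x = y^0$ (possible since $y^0 \in \range(A)$), testing this identity against $x_k - \hat x$ and invoking (iii) together with $\|A^*(y_k - y^0)\| \le \|A^*\|\delta_k$, one obtains a uniform bound $\|x_k\| \le C$ provided $\lambda_k \to 0$ slowly enough relative to $\delta_k$ (an a-priori choice of the form $\delta_k/\lambda_k \to 0$, or whatever (iii) dictates); such a sequence exists, which is precisely the existential content of the theorem. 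In the Hilbert-space setting, the bounded sequence $(x_k)$ then admits a weakly convergent subsequence $x_{k_j} \rightharpoonup \bar x$.

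For item 3, note $(z_{k_j})$ is bounded, so by (ii) $\|D_{\lambda_{k_j}}(z_{k_j}) - z_{k_j}\| \to 0$, whence $\|A^*(Ax_{k_j} - y_{k_j})\| \to 0$ strongly. As $A$ is bounded linear, $A^*A x_{k_j} \rightharpoonup A^*A\bar x$, and $A^* y_{k_j} \to A^* y^0$ since $y_{k_j}\to y^0$; uniqueness of weak limits gives $A^*(A\bar x - y^0) = 0$. Writing $y^0 = A\hat x$, we then get $\|A\bar x - y^0\|^2 = \langle \bar x - \hat x,\, A^*(A\bar x - y^0)\rangle = 0$, so $A\bar x = y^0$, as claimed. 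I expect the main obstacle to be the uniform boundedness in item 2: this is where the abstract denoiser hypotheses must do genuine work, mirroring the delicate ``regularized solutions stay bounded'' estimate of classical regularization theory, in which the regularizer evaluated at a true solution is traded against the data error $\delta_k$ through the a-priori parameter choice. A secondary subtlety is that the contraction constant $q_\lambda$ degrades as $\lambda\to 0$, so one must ensure the convergence argument relies only on the fixed-point identity itself and not on a $\lambda$-uniform contraction rate.
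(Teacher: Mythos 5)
The survey does not actually prove this theorem: it is imported verbatim from \cite{ebner2022plugandplay}, with all hypotheses on the denoiser family deferred to Definition 3.1 of that reference, so there is no in-paper argument to compare yours against. Judged on its own terms, your reconstruction follows the classical three-step convergent-regularization template (well-posedness and stability at fixed $\lambda$, uniform boundedness under an a priori parameter choice $\lambda_k=\lambda(\delta_k)$, identification of weak subsequential limits) that the cited work adapts to PnP, and the steps you spell out are sound: the fixed-point identity $D_{\lambda_k}(z_k)-z_k=\eta\,A^*(Ax_k-y_k)$, the passage from $\|D_{\lambda_k}(z_k)-z_k\|\to 0$ to $A^*(A\bar x-y^0)=0$ by pairing a strong limit with a weak one, and the final computation $\|A\bar x-y^0\|^2=\langle \bar x-\hat x,\,A^*(A\bar x-y^0)\rangle=0$ using $y^0\in\range(A)$ are all correct. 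Two caveats. First, since the theorem leaves the denoiser conditions unspecified, you have had to invent them; your choice of a Banach-contraction hypothesis for existence, uniqueness and stability of the fixed point is a convenient sufficient condition rather than what \cite{ebner2022plugandplay} actually assumes (their Definition 3.1 essentially postulates well-posedness and continuity of $\PnP(\lambda,\cdot)$ together with boundedness and consistency properties of the fixed points, rather than deriving them from a Lipschitz constant of $D_\lambda$), so item 1 of your writeup proves a special case. Second, the uniform boundedness in item 2 is, as you note yourself, the step carrying all the weight, and as written it is only a sketch: the duality pairing you describe controls $\|Ax_k-y^0\|$ (up to a $\delta_k$ term) but does not by itself bound $\|x_k\|$ without an additional coercivity clause on $\id-D_\lambda$ or directly on the fixed-point family. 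That is exactly the gap the abstract conditions of Definition 3.1 are designed to close, so your proposal is about as complete as it can be at the level of detail the statement permits.
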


    \begin{figure}[h!]%
    \centering
    \subfloat[\centering Ground-Truth ]{{\includegraphics[height=4cm]{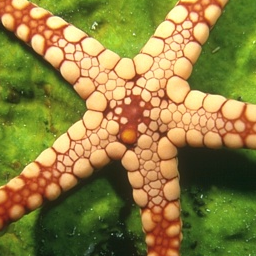}} } %
    \subfloat[\centering PnP-LBFGS(29.78dB)]{{\includegraphics[height=4cm]{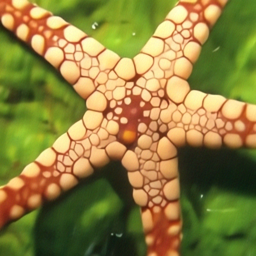}} } %
    \subfloat[\centering PnP-PGD (28.68dB)]{{\includegraphics[height=4cm]{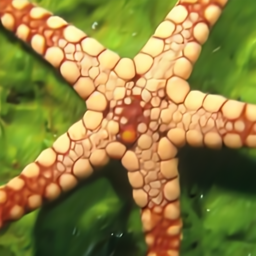}} } %
    
    \subfloat[\centering Corrupted ]{{\includegraphics[height=4cm]{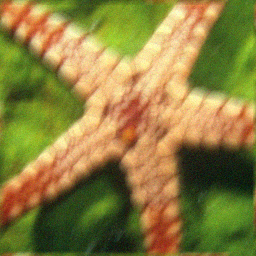}} } %
    \subfloat[\centering PnP-DRSdiff (28.66dB)]{{\includegraphics[height=4cm]{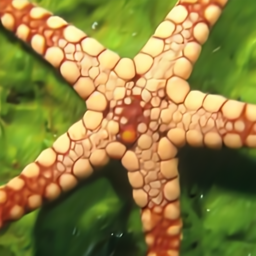}} } %
    \subfloat[\centering PnP-DRS (29.39dB)]{{\includegraphics[height=4cm]{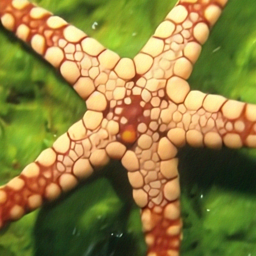}} } %
     \caption{Deblurring visualization using starfish image, with each method limited to a maximum of 100 iterations. Experiments are run with additive Gaussian noise $\sigma= 7.65$. PnP-LBFGS converges within the first 100 iterations, while the other PnP algorithms take longer to converge.}
\end{figure}

In Figure 13 and 14, we present some numerical results from \cite{tan2023provably} on applying provably convergent PnP algorithms including PnP-LBFGS, PnP-PGD, PnP-DRS etc, on image deblurring task for illustration, more details can be found in the referenced paper.

\section{Various ground-truth-free approaches for image reconstruction}\label{ssec:GTfreeIR}
In this section, we briefly survey some other closely related unsupervised training strategies for imaging inverse problems. The frameworks described here are mostly suitable for the cases where we have limited training data for the networks, for example, in medical tomographic imaging we could have plenty of noisy sinogram measurement data from the imaging devices, but a very limited amount of data for ground-truth images. Strictly speaking, there are sometimes no actual ``ground-truths'' in practice, making the use of unsupervised schemes necessary.
\subsection{Deep image prior}
One of the popular and empirically successful unsupervised approaches for imaging is the deep image prior (DIP) method \cite{ulyanov2018deep}. Surprisingly, this approach requires no training data, relying completely on the regularization effect of the architecture of the deep CNNs and implicit regularization of the gradient-based optimizers \cite{tachella2021neural}. Let us denote a neural network such as a U-net by $G_\theta: \R^{d'} \rightarrow \R^d$, which can be either untrained or pretrained, parameterizing the image to be reconstructed. For an arbitrary vector $z$, the DIP scheme can be written as minimizing \emph{approximately}:
\begin{equation}
    \theta^\star \approx \argmin_\theta \|y - AG_\theta(z)\|_2^2,
\end{equation}
with some first-order methods such as Adam, with early-stopping to avoid overfitting. The final reconstruction is then computed as $x^{\star}=G_{\theta^\star}(z)$. While letting $z$ be chosen as a Gaussian random vector produces reasonable results, it has been observed that warm-starting by choosing $z$ to be the corrupted image input leads to better results. For example, when applying DIP in denoising, it is better to choose $z$ to be the noisy input image itself for faster convergence and improved results, as observed by Tachella et al \cite{tachella2021neural}. This work also demonstrates that the success of DIP is due to the implicit regularization by the network architecture and the dynamics of the gradient-based optimizer.

\noindent Despite the nonstandard reconstruction method, the DIP approach demonstrates remarkable numerical performance without any training data, even in highly ill-posed inverse problems such as inpainting with many missing pixels. Although this scheme is usually numerically inferior compared to fully-supervised schemes, the DIP approach demonstrates that the implicit regularization jointly formed by the architecture and gradient-based optimization is already a very strong regularization for imaging. Moreover, it can be jointly applied with classical variational regularization methods and plug-and-play priors introduced in the previous subsections for even better reconstruction results. For example, the DIP-TV approach \cite{liu2019image}
\begin{equation}
    \theta^\star \approx \argmin_\theta \|y - AG_\theta(z)\|_2^2 + \mu\|\nabla G_\theta(z)\|_1,
\end{equation}
and the DIP-RED approach \cite{mataev2019deepred}, given by
\begin{equation}
    \theta^\star \approx \argmin_\theta \|y - AG_\theta(z)\|_2^2 + \mu \,G_\theta(z)^{\top}(G_\theta(z) - D_\lambda(G_\theta(z))),
\end{equation}
both fall within the category of combining DIP with additional prior terms. With the assistance of additional regularization, the performance of DIP is often improved, and the need for early stopping is alleviated if the regularization parameter $\mu$ is appropriately chosen.

\subsection{Noise-2-X methods}


The Noise2Noise scheme takes two distinct noisy observations of natural images for training denoisers without ground-truth image, by taking one of the noisy observations as a ``ground-truth'' in the fidelity term \cite{lehtinen2018noise2noise}. An interesting class of similar ground-truth-free unsupervised training schemes has been developed, such as Noise2Self \cite{batson2019noise2self}, Noise2Void \cite{krull2019noise2void}, Noisier2Noise \cite{moran2020noisier2noise}, and many other related schemes \cite{quan2020self2self, ke2021unsupervised, huang2021neighbor2neighbor}. We refer to this class of training schemes as the \textit{Noise-2-X} methods. Given a collection of noisy/corrupted images $\{\hat{x}_i\}_{i=1}^n$ and a neural network to train, typically deep CNNs or U-nets, the Noise-2-X schemes train the reconstruction network on unsupervised losses of the form:
\begin{equation}\label{n2n}
    \theta^\star \approx \argmin_\theta \frac{1}{n}\sum_{i=1}^n\|\hat{y}_i - G_\theta(\hat{x}_i)\|_2^2,
\end{equation}
where $\{\hat{x}_i, \hat{y}_i\}_{i=1}^n$ are pairs of noisy perturbations of the inaccessible ground-truth images $\{x_i\}_{i=1}^n$. Different noise-2-X schemes consider different choices of such perturbations. The aim of using pairs of perturbations is to use \eqref{n2n} to approximate the supervised loss
\begin{equation}\label{n2n_sup}
    \theta^\star \approx \argmin_\theta \frac{1}{n}\sum_{i=1}^n\|x_i - G_\theta(\hat{x}_i)\|_2^2,
\end{equation}
in the absence of ground-truth images $x_i$. For example, consider the denoising problem $y_i = x_i + \varepsilon_i$ where $\varepsilon_i$ denotes additive Gaussian noise. The unsupervised loss can be written as $\frac{1}{n}\sum_{i=1}^n\|y_i - G_\theta(\hat{x}_i)\|_2^2 = \frac{1}{n}\sum_{i=1}^n\|x_i + \varepsilon_i - G_\theta(\hat{x}_i)\|_2^2$. The gradient of this approximation is an unbiased estimate of the gradient for the supervised loss above, and such an approximation becomes increasingly accurate as the sample size $n$ increases. Similar to the DIP, denoising networks based on noise-2-X schemes are also trained using gradient-based optimization algorithms such as Adam or SGD.

In imaging tasks such as natural image denoising, these unsupervised training schemes demonstrate reasonably good performance, closely matching the performance of denoising networks with fully supervised training. Combined with the plug-and-play schemes we have introduced before, the denoisers trained by these noise-2-X schemes can be also applied to solve more sophisticated imaging inverse problems such as deblurring, inpainting, and tomographic reconstruction in the absence of any noise-free ground-truth images.

\subsection{Equivariant imaging}
In certain imaging applications such as CT or MRI reconstruction, we often only have low-quality measurements $\{y_i\}_{i=1}^n$ without any ground-truth images. This situation restricts the use of supervised training, where synthetic data is instead used. In such cases, the quality of the measurements significantly affects the training quality of brute-force unsupervised training:
\begin{equation}
    \theta^\star \approx \argmin_\theta \frac{1}{n}\sum_{i=1}^n\|y_i - AG_\theta(y_i)\|_2^2.
\end{equation}
This unsatisfactory training is due to the difficulty of learning in the presence of highly non-trivial null-spaces. To mitigate this, Chen et al. \cite{chen2021equivariant} proposed the Equivariant Imaging (EI) framework, utilizing the equivariant structure of the forward operator to improve the performance of the unsupervised training in this context. More precisely, in the majority of imaging inverse problems, the plausible set of images $\mathcal{I}$ are invariant to a certain group of transformations $\mathcal{G} = \{g_1, g_2,..., g_{|\mathcal{G}|}\}$ with actions $T_g$ such that $T_gx \in \mathcal{I}$ for all $x \in \mathcal{I}$. For example, natural images are usually invariant to shift operations, while CT/MRI images are usually invariant to rotations. Exploiting this structure of the plausible image set, the desired neural network solution should approximately satisfy:
\begin{equation}
    G_\theta(AT_gx) = T_gG_\theta(Ax). 
\end{equation}
The composite map $h_\theta \circ A$ should be equivariant under the transformations $T_g$, meaning that the operators commute. This leads to the EI training framework:
\begin{multline}\label{ei}
    \theta^\star \approx \argmin_\theta \frac{1}{n}\sum_{i=1}^n\|y_i - AG_\theta(y_i)\|_2^2  + \mu \Expect_{g \in\mathcal{ G}}\left[\|G_\theta(AT_gG_\theta(y_i)) - T_gG_\theta(AG_\theta(y_i))\|_2^2\right].
\end{multline}
This is the unsupervised training loss with the addition of a regularization term that encourages the network to utilize the equivariant structure of the imaging problem. Akin to the previously introduced unsupervised methods, gradient-based optimization solvers such as Adam are applied for training, with an extra computational overhead due to the sophisticated regularization term. Although training using EI is more computationally expensive and requires much more memory compared to a brute-force approach, this framework demonstrates remarkable numerical potential and can match the accuracy of fully supervised approaches closely \cite{chen2021equivariant}. The EI framework can enable practitioners to train advanced reconstruction networks such as FBP-ConvNet and deep unrolling networks from only the measurement data without the ground-truth images.

\subsection{Stein's unbiased risk estimation (SURE)}
An unsupervised learning approach based on Stein's unbiased risk estimation (SURE) \cite{sure_stein} was proposed by Metzler et al. \cite{metzler2020unsupervised}. The estimation problem considered in \cite{metzler2020unsupervised} was that of recovering an image $x\in\mathbb{R}^n$ from its linearly degraded measurement $y=Ax+\mathbf{w}$, where $\mathbf{w}$ is Gaussian with mean zero and covariance $\sigma_{\mathbf{w}}^2I$. Then, it can be shown that 
\begin{equation}
J(\theta) := \mathbb{E}_{\mathbf{w}}\left[\frac{1}{n}\|y-G_{\theta}(y)\|_2^2\right]-\sigma_{\mathbf{w}}^2+\frac{2\sigma_{\mathbf{w}}^2}{n}\text{div}_y\left(G_{\theta}(y)\right),
    \label{eq:sure}
\end{equation}
where $\text{div}$ denotes the divergence operator, is an unbiased estimator of the mean-squared error (MSE) $\displaystyle \mathbb{E}_{\mathbf{w}}\left[\frac{1}{n}\|x-G_{\theta}(y)\|_2^2\right]$. Since approximating $J(\theta)$ requires only the measured data and not the corresponding ground-truth images, it serves as a surrogate loss for MSE and results in an unsupervised learning framework. To approximate the divergence term, the authors of \cite{metzler2020unsupervised} adopted a Monte Carlo-based approach that relies on the following:   
\begin{equation}\label{eq:div}
    \text{div}_y\left(G_{\theta}(y)\right)= \underset{\epsilon\rightarrow 0}{\lim}\,\mathbb{E}_{\mathbf{u}}\left[\mathbf{u}^\top\left(\frac{G_{\theta}(y+\epsilon\mathbf{u})-G_{\theta}(y)}{\epsilon}\right)\right],
\end{equation}
where $\mathbf{u}\sim\mathcal{N}(0,I)$. A similar unbiased estimator of the MSE can be derived for noise distributions in the exponential family. SURE can be utilized as a general framework that can turn any generic supervised MSE-based learning approach (for instance, a bilevel learning framework) into an unsupervised one by replacing the MSE with its SURE-based estimate. 
\subsubsection{Robust equivariant imaging via SURE}
The EI unsupervised training framework introduced in the previous subsection can also be further improved in terms of robustness to measurement noise by incorporating SURE, as shown in the work of Chen et al \cite{chen2022robust}. There is a major weakness of the EI approach regarding the fragility towards measurement noise, such that as the measurement noise increases, the performance of EI would experience very significant decay. An effective remedy for this issue turns out to be utilizing the SURE loss \eqref{eq:sure}. This modified robust EI framework can be summarized as the following objective:
\begin{align*}
    \theta^\star \approx \argmin_\theta &\frac{1}{n}\sum_{i=1}^n\|y_i - AG_\theta(y_i)\|_2^2+2\sigma^2\text{div}_{y_i}\left(G_{\theta}(y_i)\right)\\& +  \mu\, \Expect_{g \in\mathcal{ G}}[\|G_\theta(AT_gG_\theta(y_i)) - T_gG_\theta(AG_\theta(y_i))\|_2^2].     \end{align*}
According to \eqref{eq:div}, when training the reconstruction networks using gradient-based methods, the divergence term can be simply approximated by:
\begin{equation}\label{eq:div_approx}
    \text{div}_y\left(G_{\theta}(y)\right)\approx \mathbf{u}^\top\left(\frac{G_{\theta}(y+\varepsilon\mathbf{u})-G_{\theta}(y)}{\varepsilon}\right),
\end{equation}
in each iteration, with $\mathbf{u} \sim \mathcal{N}(0,I)$ while $\varepsilon$ being chosen to be a small constant. With this modified loss, the resulting reconstruction networks can closely match fully supervised methods even when the noise in the measurement is significant.

\section{Summary and conclusions}
Unsupervised learning is a powerful method of performing machine learning in the absence of complete ground-truth data, such as unpaired training examples, and access to samples of only the ground-truth images, or of only noisy measurements. We presented three paradigms, namely probabilistic approaches based on optimal transport and cycle architectures, learned priors through learning-to-optimize and plug-and-play, as well as various ways of inserting prior knowledge for regularization. Each of these paradigms requires some prior knowledge, such as a degradation model or probabilistic interpretation. Nonetheless, such models have been shown to be competitive with supervised models, and are applicable to more general classes of problems.

In \Cref{sec:OTapproaches}, we reviewed unsupervised approaches based on optimal transport, particularly the Cycle-WGAN approach consisting of two WGANs in opposite directions and the adversarial regularization method where a regularizer is parameterized using a neural network and learned adversarially. Both approaches aim to minimize a Wasserstein distance between distributions induced by the learned components, and an approach combining both Cycle-WGAN approaches and adversarial regularization was discussed in \Cref{sssec:combiningE2EAR}. These approaches have the benefit of having a probabilistic interpretation, where the distribution of the generated or reconstructed data lives in a certain neighborhood of the ground-truth distribution. This lies in the intersection of learning the prior and posterior distributions, and can also be related to semi-supervised learning, where there is an imbalance of measurements and ground-truths.

Several convex analysis-based methods for unsupervised learning were presented in \Cref{sec:unsupervised}. In particular, the learning-to-optimize, which accelerates model-based reconstruction, was considered in \Cref{ssec:l2o}. Plug-and-play methods for image reconstruction tasks, where an image prior is implicitly defined by a pre-trained Gaussian denoiser were considered in \Cref{sec:Reg_byPnP}. \Cref{ssec:GTfreeIR} detailed several training methods for one-shot image reconstruction such as using the deep image prior, or various methods for training denoisers in the absence of ground-truth data.

In this review, we focused on works that derive from classical results in optimal transport and convex analysis. However, the scope of unsupervised learning is much broader once this restriction is lifted. Notable examples include physics-informed neural networks, which aim to learn physical operators such as PDEs or dynamical systems \cite{karniadakis2021physics}. While a lot of theory already exists for unsupervised learning, we believe that the following few issues are particularly important for closing the gap between unsupervised and supervised methods:

\begin{enumerate}[leftmargin=*]
    \item There is an inherent difference in information available in the supervised regime compared to the unsupervised regime. Some works already seek to quantify this, such as \cite{tachella2023sensing} that rephrases the EI framework in terms of compressed sensing, and derives bounds for signal recovery based on classical theorems. An interesting direction would be quantifying the performance difference induced by this information gap, as well as in suitable limiting cases. 

    \item Unsupervised methods were categorized into three main classes as in \Cref{sssec:unsupervised}, and all these formulations assume some sort of prior information into the model. The works presented in this review are based on classical results in optimal transport and convex analysis, allowing for some theoretical analysis. A more theoretical framework for building unsupervised models, utilizing probabilistic or geometric ideas, could lead to more efficient usage of data and help close the gap between supervised and unsupervised methods. 
\end{enumerate}

\bibliography{references}
\bibliographystyle{plain}


\end{document}